\theoremstyle{plain}
\newtheorem{theorem}{Theorem}[section]
\newtheorem{proposition}[theorem]{Proposition}
\newtheorem{lemma}[theorem]{Lemma}
\theoremstyle{definition}
\newtheorem{definition}[theorem]{Definition}
\theoremstyle{remark}
\newtheorem{remark}[theorem]{Remark}
\newcommand{\bbR}{\mathbb{R}}
\newcommand{\calR}{\mathcal{R}}
\newcommand{\frakR}{\mathfrak{R}}
\newcommand{\calX}{\mathcal{X}}
\newcommand{\calF}{\mathcal{F}}
\newcommand{\calG}{\mathcal{G}}
\newcommand{\calO}{\mathcal{O}}
\newcommand{\scrH}{\mathscr{H}}
\newcommand{\calH}{\mathcal{H}}
\newcommand{\calW}{\mathcal{W}}
\DeclareMathOperator*{\argmax}{argmax}
\DeclareMathOperator*{\argmin}{argmin}
\DeclarePairedDelimiterX{\normop}[1]{\lVert}{\rVert_{\mathrm{op}}}{#1}
\DeclarePairedDelimiterX{\normopp}[1]{\lVert}{\rVert_{\mathrm{op}}^p}{#1}
\DeclarePairedDelimiterX{\normhs}[1]{\lVert}{\rVert_{\mathrm{HS}}}{#1}
\DeclarePairedDelimiter\floor{\lfloor}{\rfloor}
\newcommand{\E}{\mathbb{E}}
\newcommand{\bigO}{\calO}
\definecolor{RebuttalBlue1}{RGB}{139, 0, 139}
\icmltitlerunning{Random Feature Representation Boosting}
\begin{document}

\twocolumn[
\icmltitle{Random Feature Representation Boosting}




\icmlsetsymbol{equal}{*}

\begin{icmlauthorlist}
\icmlauthor{Nikita Zozoulenko}{imperial}
\icmlauthor{Thomas Cass}{equal,imperial}
\icmlauthor{Lukas Gonon}{equal,imperial,gallen}
\end{icmlauthorlist}

\icmlaffiliation{imperial}{Department of Mathematics, Imperial College London, UK}
\icmlaffiliation{gallen}{School of Computer Science, University of St. Gallen, Switzerland}


\icmlcorrespondingauthor{Nikita Zozoulenko}{n.zozoulenko23@imperial.ac.uk}

\icmlkeywords{Machine Learning, ICML, random features, gradient boosting, gradient representation boosting, residual neural networks, resnets, tabular data, supervised learning, random feature neural networks}

\vskip 0.3in
]



\printAffiliationsAndNotice{$^*$Equal last authors.}

\begin{abstract}
We introduce Random Feature Representation Boosting (RFRBoost), a novel method for constructing deep residual random feature neural networks (RFNNs) using boosting theory. RFRBoost uses random features at each layer to learn the functional gradient of the network representation, enhancing performance while preserving the convex optimization benefits of RFNNs. In the case of MSE loss, we obtain closed-form solutions to greedy layer-wise boosting with random features. For general loss functions, we show that fitting random feature residual blocks reduces to solving a quadratically constrained least squares problem. Through extensive numerical experiments on tabular datasets for both regression and classification, we show that RFRBoost significantly outperforms RFNNs and end-to-end trained MLP ResNets in the small- to medium-scale regime where RFNNs are typically applied. Moreover, RFRBoost offers substantial computational benefits, and theoretical guarantees stemming from boosting theory.
\end{abstract}

\section{Introduction}\label{Introduction}
Random feature neural networks (RFNNs) are single-hidden-layer neural networks where all model parameters are randomly initialized or sampled, with only the linear output layer being trained. This approach presents a computationally efficient alternative to neural networks trained via stochastic gradient descent (SGD), avoiding the challenges associated with non-convex optimization and vanishing/exploding gradients. Despite their simplicity, RFNNs and related random feature models have strong  provable generalization guarantees \cite{2008UniformApproximationOfFunctionsWIthRandomBases,2017GeneralizationPropertiesOfLearningWithRandomFeatures,2023ErrorBoundsForLearningWithVectorValuedRandomFeatures,2023ATheoreticalAnalysisOfTheTestErrorOfFiniteRankKernelRidgeRegression}, and have demonstrated state-of-the-art performance and speed across various tasks \cite{2023SWIMSamplingWeightsNeuralNetworks, 2023Hydra, 2024RandNetPararealLyudmila, 2024RandomRepresentationsOutperformOnlineContinuallyLearnedRepresentations}. 

Recent theoretical work on Fourier RFNNs has shown that deep residual RFNNs can achieve lower generalization errors than their single-layer counterparts \cite{2022SmallerGeneralizationErrorForDeepNeuralNetwork}. Current theory and algorithms for training deep RFNNs, however, are limited to the Fourier activation function \cite{2024DeepLearningWithoutGlobalOptimizationRandomFourierNeuralNetworks}, and uses ideas from control theory to sample from optimal weight distributions. While skip connections have been crucial to the success of deep end-to-end-trained ResNets \cite{2015ResNet}, introducing them into general RFNNs is not straightforward, as naively stacking random layers may degrade performance. In this paper, we introduce \textbf{random feature representation boosting (RFRBoost)}, a novel method for constructing deep ResNet RFNNs. Our approach not only significantly improves performance, but also retains the highly tractable convex optimization framework inherent to RFNNs, with theoretical guarantees stemming from boosting theory.

ResNets have traditionally been studied from two primary perspectives. The first views a ResNet $\Phi_t$, defined by
\begin{align}\label{eqResNetEulerDiscretization1}
    \Phi_{t}(x) = \Phi_{t-1}(x) +  g_t(\Phi_{t-1}(x)),
\end{align}
as an Euler discretization of a dynamical system $d\Phi_t(x) = g_t(\Phi_t) dt$ \cite{2017AProposalOnMachineLearningViaDynamicalSystems}. Here $g_t$ represents a residual block at layer $t$, often expressed as $g_t(x)= A\sigma(Bx+b)$. This framework was later generalized to the setting of neural ODEs \cite{2018NeuralOrdinaryDifferentialEquationsNeuralODEs, 2019AugmentedNeuralODEs, 2020NeuralControlledDifferentialEquationsNeuralCDEs, 2021NeuralSDEsAsInfiniteDimensionalGANs,
2024LogNeuralControlledDifferentialEquations}. The second point of view is that of gradient boosting, where a ResNet can be seen as an ensemble of weak, shallow neural networks of varying sizes, $\Phi_T(x) = \sum_{t=1}^T g_t(\Phi_{t-1}(x))$, derived by unravelling equation \eqref{eqResNetEulerDiscretization1} \cite{2016ResNetsBehaveLikeEnsemblesOfShallowNetworks}. This led to the development of gradient representation boosting \cite{2018FunctionalGradientBoostingResNet, 2020GeneralizedBoosting}, which studies residual blocks via functional gradients in the space of square integrable random vectors $L_2^D(\mu)$, where $\mu$ is the distribution of the data.

A key challenge in extending RFNNs to deep ResNet architectures lies in the crucial role of the residual blocks $g_t$ when they are composed of random features. If the magnitude of $g_t$ is too small, the initial representation $\Phi_0$ dominates, rendering the added random features ineffective. Conversely, if $g_t$ is too large, information from previous layers can be lost. This problem is not merely one of scale; ideally each residual block should approximate the negative functional gradient of the loss with respect to the network representation. However, random layers are not guaranteed to possess this property. Unlike end-to-end trained networks where SGD with backpropagation can adjust all network weights to learn an appropriate scale and representation, RFNNs lack a comparable mechanism because their hidden layers are fixed. We address this issue by using random features at each layer of the ResNet to learn a mapping to the functional gradient of the training data, enabling tractable learning of optimal random feature residual blocks via analytical solutions or convex optimization.

\subsection{Contributions}
Our paper makes the following contributions:
\vspace{-5pt}
\begin{itemize}
    \item \textbf{Introducing RFRBoost:} We propose a novel method for constructing deep ResNet RFNNs, overcoming the limitations of naively stacking random features layers. RFRBoost supports arbitrary random features, extending beyond classical random Fourier features.
    \item \textbf{Analytical Solutions and Algorithms:} For MSE loss, we derive closed-form solutions to greedy layer-wise boosting using random features by solving what we term \textit{``sandwiched least squares problems''}, a special case of generalized Sylvester equations. For general losses, we show that fitting random feature residual blocks is equivalent to solving a quadratically constrained least squares problem.
    \item \textbf{Theoretical Guarantees:} We provide a regret bound for RFRBoost based on Rademacher complexities and established results from boosting theory.
    \item \textbf{Empirical Validation:} Through numerical experiments on 91 tabular regression and classification tasks from the curated OpenML repository, we demonstrate that RFRBoost significantly outperforms both single-layer RFNNs and end-to-end trained MLP ResNets, while offering substantial computational advantages.
\end{itemize}

\subsection{Related literature}
\textbf{Classical Boosting:} Boosting aims to build a strong ensemble of weak learners via additive modelling of the objective function, dating back to the 1990s with the development of AdaBoost \cite{1997AdaBoost.M1}. Gradient boosting, introduced as a generalization of boosting, supports general differentiable loss functions \cite{1999BoostingAsGradientDescent, 2000LogitBoost, 2001GradientBoosting}, and includes popular frameworks such as XGBoost \cite{2016XGBoost}, LightGBM \cite{2017LightGBM}, and CatBoost \cite{2018CatBoost}. These models typically use decision trees as weak learners and are widely considered the best out-of-the-box models for tabular data \cite{2022WhyDoTreesOutperformDeepLearningTabularData}.

\newpage
\textbf{Boosting for Neural Networks:} Applications of boosting for neural network first appeared in AdaNet \cite{2017AdaNetBoosting}, which built a network graph using boosting to minimize a data-dependent generalization bound. \citet{2018LearningDeepResNetBlocksSequentiallyUsingBoostingAdaBoostlike} introduced an AdaBoost-inspired algorithm for sequentially learning residual blocks, boosting the feature representation rather than the class labels. \citet{2018FunctionalGradientBoostingResNet, 2020FunctionalGradientBoostingResNetWithStatisticalGuarantees} proposed using Fréchet derivatives in $L_2^D(\mu)$ to learn residual blocks that preserve small functional gradient norms, motivated by Reproducing Kernel Hilbert Space (RKHS) theory and smoothing techniques. GrowNet \cite{2020GrowNet} constructs an ensemble of neural networks in the classical sense of gradient boosting the labels, not as a ResNet, but by concatenating the features of previous models to the next weak learner. \citet{2020GeneralizedBoosting} introduced \textit{gradient representation boosting}, which studies layer-wise training of ResNets by greedily or gradient-greedily minimizing a risk function, and gives modular excess risk bounds based on Rademacher complexities. \citet{2023BoostedDynamicNeuralNetworks} used gradient boosting to improve the training of dynamic depth neural networks. Finally, \citet{2023NeuronByNeuronGradientBoosting} constructed a network neuron-by-neuron using gradient boosting.

\textbf{Random feature models:} Random feature models use fixed, randomly generated features to map data into a high dimensional space, enabling efficient linear learning. These methods encompass a broad class of models studied under various names, including random Fourier features \cite{2007RandomFourierFeatures, 2015OptimalRatesForRandomFourierFeatures, 2019TowardsAUnifiedAnalysisOfRandomFourierFeatures, 2022SmallerGeneralizationErrorForDeepNeuralNetwork, 2024DeepLearningWithoutGlobalOptimizationRandomFourierNeuralNetworks}, extreme learning machines \cite{2004ExtremeLearningMachine, 2012ExtremeLearningMachineForRegressionAndMulticlassClassification, 2014AnInsightIntoExtremeLearningMachines}, random features or RFNNs \cite{2006UniversalApproximationUsingIncrementalConstructiveFeedforwardNetworksWithRandomHiddenNodes, 2008RandomKitchenSinks, 2008UniformApproximationOfFunctionsWIthRandomBases, 2017GeneralizationPropertiesOfLearningWithRandomFeatures, 2018LearningWithSGDAndRandomFeatures, 2019OnThePowerAndLimitationsOfRandomFeatures, 2022TheGeneralizationErrorOfRandomFeaturesRegression, 2023SWIMSamplingWeightsNeuralNetworks, 2023ErrorBoundsForLearningWithVectorValuedRandomFeatures, 2024RandomFeaturesModelsAWayToStudyTheSuccessOfNaiveImputation}, reservoir computing \cite{2001EchoStateNetwork, 2009ReservoirComputing, 2017DeepReservoirComputingACriticalExperimentalAnalysis, 2018UniversalDiscreteTimeReservoirComputersWithStochasticInputsAndLinearReadouts, 2019RecentAdvancesInPhysicalReservoirComputingAReview, 2020EmbeddingAndApproximationTheoremsForEchoStateNetworks, 2023ApproximationBoundsForRandomNeuralNetworksAndReservoirSystems, 2024InfiniteDimensionalReservoirComputing}, kernel methods \cite{2012RandomFeatureMapsForDotProductKernels,  2016LearningKernelsWithRandomFeatures, 2018LinearSVMWithRandomFeatures, 2019OnKernelDerivativeApproximationWithRandomFourierFeatures, 2023ATheoreticalAnalysisOfTheTestErrorOfFiniteRankKernelRidgeRegression, 2024TowardsTheoreticalUnderstandingOfLearningLargescaleDependentDataViaRandomFeatures, 2024OptimalKernelQuantileLearningWithRandomFeatures}, and scattering networks \cite{2013InvariantScatteringConvolutionNetworks, 2017VisualizingAndImprovingScatteringNetworks, 2019ScatteringNetworksForHybridRepresentationLearning, 2023UnderstandingTheCovarianceStructureOfConvolutionalFilters}. Recently, RFNNs and related approaches have demonstrated exceptional performance in both speed and accuracy across a wide variety of applications. These include solving partial differential equations \cite{2021NelsenStuart,2023RandomFeatureNetworksBlackScholesPDEs, 2024RandNetPararealLyudmila,2024UniversalApproximationOfRandomFeatureModels}, online learning \cite{2024RandomRepresentationsOutperformOnlineContinuallyLearnedRepresentations}, time series classification \cite{2020Rocket, 2023Hydra}, and in mathematical finance \cite{2023RandomNeuralNetworksForRoughVolatility, 2024OptimalStoppingViaRandomizedNeuralNetworks}. RFNNs have also been explored in the context of quantum computing \cite{2023PotentialAndLimitationsOfQuantumExtremeLearningMachies, 2023UniversalApproximationTheoremAndErrorBoundsForQuantumNeuralNetworksAndQuantumReservoirs, 2023QuantumReservoirComputingInFiniteDimensions, 2024OnFundamentalAspectsOfQuantumExtremeLearningMachines}, and in relation to random neural controlled differential equations and state-space models \cite{2021ExpressivePowerOfRandomizedSignatures, 2023NeuralSignatureKernelsAsInfiniteWidthDepthLimitsOfControlledResNets, 2024NikolaTheoreticalDoundationsDeepSelectiveStateSpaceModels, 2024Biagini}.

\section{Functional Gradient Boosting}\label{FunctionalGradientBoosting}
In this section we introduce notation and give a brief overview of classical gradient boosting \cite{1999BoostingAsGradientDescent, 2001GradientBoosting}, and its connection to deep ResNets via gradient representation boosting \cite{2018FunctionalGradientBoostingResNet, 2020FunctionalGradientBoostingResNetWithStatisticalGuarantees, 2020GeneralizedBoosting}. 

Let $(X, Y) \sim \mu$ be a random sample with its corresponding probability measure $\mu$. Denote by $X \sim \mu_X$ the features in $\bbR^q$, with targets $Y \sim \mu_Y$ in $\bbR^d$, and $\mu_{Y|X=x}$ the conditional law. Let $\widehat{\mu} = \frac{1}{n}\sum_{i=1}^n \delta_{(x_i, y_i)}$ be the empirical measure of $\mu$. We will work in the Hilbert space $L_2^D(\mu)$, with inner product given by $\langle f, g\rangle_{L_2^D(\mu)} = \E_\mu[\langle f,g \rangle_{\bbR^D}]$. The concept of functional gradient plays a key role in gradient boosting:

\begin{definition}
    Let $\scrH$ be a Hilbert space. A function $f : \scrH \to \bbR$ is said to be \textbf{Frechet differentiable} at $h\in \scrH$ if there exists a bounded linear operator $\nabla f : U \to \scrH$ for some open neighbourhood $U \subset \scrH$ of $h$ such that
    \begin{align*}
        f(h + g) = f(h) + \langle g, \nabla f (h)\rangle_\scrH + o(\|g\|_\scrH)
    \end{align*}
    for $g \in U$. The element $\nabla f(h)$ is often referred to as the functional gradient of $f$ at $h$, or simply the gradient.
\end{definition}

\subsection{Traditional Gradient Boosting}
Let $l : \bbR^d \times \bbR^d \to \bbR$ be a sufficiently regular loss function. Traditional gradient boosting seeks to minimize the risk $\calR(F) = \E_\mu \big[l( F(X), Y)\big]$ by additively modelling $F \in \text{span}(\calG)$ within a space of weak learners $\calG$, typically a set of decision trees \cite{2001GradientBoosting, 2016XGBoost}. It iteratively updates an estimate $F_t = \sum_{s=1}^t \eta \alpha_s g_s$ based on a first-order expansion of the risk:
\begin{align*}
    \calR(F_{t} + g) \approx \calR(F_{t}) + \langle g, \nabla \calR(F_t) \rangle_{L_2^d(\mu)}.
\end{align*}
To minimize the risk based on this approximation, a new weak learner $g_{t+1}$ is fit to the negative functional gradient, $-\nabla \calR(F_t)$, and the ensemble is then updated: $F_{t+1} = F_{t} + \eta \alpha_{t+1} g_{t+1}$, where $\eta, \alpha_{t+1}>0$ are the global and local learning rates. The functional gradient can be shown to be equal to
\begin{align*}
    \nabla \calR (F)(x)  =  \E_{\mu_{Y|X=x}}\big[ \partial_{1} l(F(x), Y) \big],
\end{align*}
and is easily computed for empirical measures $\widehat{\mu} = \frac{1}{n} \sum_{i=1}^n \delta_{(x_i, y_i)}$ using the empirical risk $\widehat{\calR}(F)$. The method for fitting $g_{t+1}$ varies between implementations and depends on the class of weak learners, as seen in modern approaches like XGBoost \cite{2016XGBoost} which incorporates second-order information.

\subsection{Gradient Representation Boosting}\label{subsectionGradientRepresentationBoosting}
Unlike classical boosting, which boosts in target space, neural network gradient representation boosting aims to additively learn a feature representation by modelling $F$ via $F = F_t(X) = W_t^\top\Phi_t(X)$, where $\Phi_t = \sum_{s=0}^t \eta g_s$ is a gradient boosted feature representation and $W_t \in \bbR^{D\times d}$ is the top-level linear predictor. In other words, gradient representation boosting seeks to learn the feature representation $\Phi_t$ additively to build a single strong predictor, rather than constructing $F_t$ as an ensemble of weak predictors \cite{2018LearningDeepResNetBlocksSequentiallyUsingBoostingAdaBoostlike, 2018FunctionalGradientBoostingResNet, 2020FunctionalGradientBoostingResNetWithStatisticalGuarantees, 2020GeneralizedBoosting}. For a depth $t$ ResNet $\Phi_t$, defined recursively as
\begin{align}\label{eqGradientRepresentationBoostingResNetEquation}
    \Phi_{t} = \Phi_{t-1} + \eta h_{t}(\Phi_{t-1}),
\end{align}
we see by unravelling \eqref{eqGradientRepresentationBoostingResNetEquation}, that the weak learners $g_t$ are of the form $g_{t} = h_{t}(\Phi_{t-1})$. These functions $g_t$ are sometimes referred to as \textit{weak feature transformations}. Prior work has employed shallow neural networks as $g_{t}$, trained greedily layer-by-layer with SGD (see references above). In the setting of gradient representation boosting we denote the risk as 
\(
    \calR(W, \Phi) = \E_\mu \big[ l(W^\top\Phi(X), Y)\big],
\)
and the functional gradient with respect to the feature representation $\Phi$ is
\begin{align*}
    \nabla_2 \calR(W, \Phi)(x) = \E_{\mu_{Y|X=x}}\big[ W\partial_1 l(W^\top\Phi(X), Y) \big].
\end{align*}
Let $\widehat{\calR}(W, \Phi)$ be the empirical risk, $\calW$ the hypothesis set of top-level linear predictors, and $\calG_t$ the set of weak feature transformations. As outlined by \citet{2020GeneralizedBoosting}, there are two main strategies for training ResNets via gradient representation boosting, which we describe below.

\textbf{1.) Exact-Greedy Layer-wise Boosting:}
At boosting step $t$, the model is updated greedily by solving the joint optimization problem
\begin{align}\label{eqExactGreedy1}
    W_{t}, g_{t} = \argmin\limits_{W \in \calW, g \in \calG_t} \widehat{\calR}(W, \Phi_{t-1} + g),
\end{align}
leading to the update
\(
    \Phi_{t} = \Phi_{t-1} + \eta g_{t}.
\)
This approach is feasible when $W$ and $g$ are jointly optimized by SGD. For general ResNets, this translates to $g_{t} = h_{t}(\Phi_{t-1})$, where $h_{t}$ is a shallow neural network (i.e., a residual block) and $W$ the top-level linear predictor.

However, in the context of random feature ResNets, using SGD for the joint optimization in \cref{eqExactGreedy1} undermines the computational benefits inherent to the random feature approach. As will be demonstrated in \cref{SectionRandomFeatureRepresentationBoosting}, by restricting $g_{t}$ to a simple residual block $g_{t} =  A_{t} f_{t}(\Phi_{t-1})$, with $f_{t}$ representing random features and $ A_{t}$ a linear map, we can derive closed-form analytical solutions for a two-stage approach to the optimization problem \eqref{eqExactGreedy1}. This speeds up training and removes the need for hyperparameter tuning of the scale of the random features $f_t$ at each individual layer.

\textbf{2.) Gradient-Greedy Layer-wise Boosting:} An alternative to directly minimizing the risk by solving \eqref{eqExactGreedy1}, which can be intractable depending on the loss $l$ and the family of weak learners, is to follow the negative functional gradient. At boosting iteration $t$, this approach approximates the risk using the first-order functional Taylor expansion:
\begin{align}\label{eqStrategy2GradientRepresentationBoosting}
    \calR(W, \Phi+g) \approx \calR(W, \Phi) + \langle g, \nabla_2 \calR(W, \Phi) \rangle_{L_2^D(\mu)}.
\end{align}
A new weak learner $g_t \in \calG_t$ is fit by minimizing the empirical inner product $\langle g, \nabla_\Phi \widehat{\calR}(W_{t-1}, \Phi_{t-1}) \rangle_{L_2^D(\widehat{\mu})}$. The depth $t$ feature representation is obtained by $\Phi_{t} = \Phi_{t-1} + \eta g_{t}$, and the top-level predictor is then updated:
\begin{align*}
    W_{t} = \argmin_{W \in \calW} \widehat{\calR}(W, \Phi_{t}).
\end{align*}

\textbf{Potential Challenges:} The gradient-greedy approach for constructing ResNets has remained relatively unexplored in the literature. \citet{2020GeneralizedBoosting}, who introduced both strategies in the context of end-to-end trained networks, only mention in passing that the gradient-greedy strategy performed worse comparatively. We believe this might be attributed to the functional direction of $g$ failing to be properly preserved during training with SGD. Specifically, when $g$ is chosen from the family $g = h(\Phi_{t-1})$, where $h$ is a shallow neural network, minimizing $\langle g, \nabla_2 \widehat{\calR}(W_{t-1}, \Phi_{t-1}) \rangle_{L_2^D(\widehat{\mu})}$ via SGD might increase $\|g\|_{L_2^D(\widehat{\mu})}$ without ensuring that $g$ aligns with the functional gradient in $L_2^D(\widehat{\mu})$. Since the first-order approximation \eqref{eqStrategy2GradientRepresentationBoosting} only holds for $g$ with small functional norm, we argue that this objective should instead be minimized under the constraint $\|g\|_{L_2^D(\widehat{\mu})} = 1$. In our random feature setting, we incorporate this constraint and show in \cref{sectionExperiments} that it leads to the gradient-greedy approach outperforming the exact-greedy strategy.

\section{Random Feature Representation Boosting}\label{SectionRandomFeatureRepresentationBoosting}

Our goal is to construct a random feature ResNet of the form
\vspace{-5pt}
\begin{align*}
    \Phi_{t} = \Phi_{t-1} + \eta  A_{t} f_{t},
\end{align*}
where $f_{t} = f_{t}(x, \Phi_{t-1}(x))\in \bbR^p$ are random features, $ A_{t} \in \bbR^{D \times p}$ is a linear map, $x$ is input data, and $\eta>0$ is the learning rate. We propose to use the theory of gradient representation boosting \cite{2020GeneralizedBoosting} to derive optimal expressions for $ A_{t}$. We consider three different cases where $ A_{t}$ is either a scalar (learning optimal learning rate), a diagonal matrix (learning dimension-wise learning rate), or a dense matrix (learning the functional gradient). Note that the scalar and diagonal cases assume that $p=D$.

This section is outlined as follows: We first define a random feature layer. We then analyze the case of MSE loss $l(x,y) = \frac{1}{2}\|x-y\|^2_{\bbR^d}$, deriving closed-form solutions for layer-wise exact greedy boosting with random features. We then explore the gradient-greedy approach, which supports any differentiable loss function.

\subsection{Random Feature Layer}
With a random feature layer, we mean any mapping which produces a feature vector $f_t\in \bbR^p$, which will not be trained after initialization. This is in contrast to end-to-end trained networks, where all model weights are adjusted continuously during training. A common choice for $f_t$ is a randomly sampled dense layer, $f_t(x) = \sigma(B_t \Phi_{t-1}(x))$, with activation function $\sigma$ and weight matrix $B_t \in \bbR^{p \times D}$. To enhance the expressive power of RFRBoost, we allow the random features $f_{t}$ to be functions of both the input data $x$ and the previous ResNet layer's output $\Phi_{t-1}(x)$, similar to GrowNet \cite{2020GrowNet} and ResFGB-FW \cite{2020FunctionalGradientBoostingResNetWithStatisticalGuarantees}. \cref{fig:GRFRBoost} provides a visual representation of the RFRBoost architecture. Specifically, in our experiments on tabular data, we let the random feature layer be $f_t(x) = \sigma(\textnormal{concat}(B_t\Phi_{t-1}(x), C_tx))$. The initial mapping $\Phi_0$ can for instance be a random fixed linear projection, or the identity. Instead of initializing $B_t$ and $C_t$ i.i.d., we use SWIM random features \cite{2023SWIMSamplingWeightsNeuralNetworks} in our experiments (see \cref{appendixExperiments} for more details). Note however that RFRBoost supports arbitrary types of random features, not only randomly initialized or sampled dense layers.

\begin{figure}
    \centering
    \includegraphics[width=1.0\linewidth]{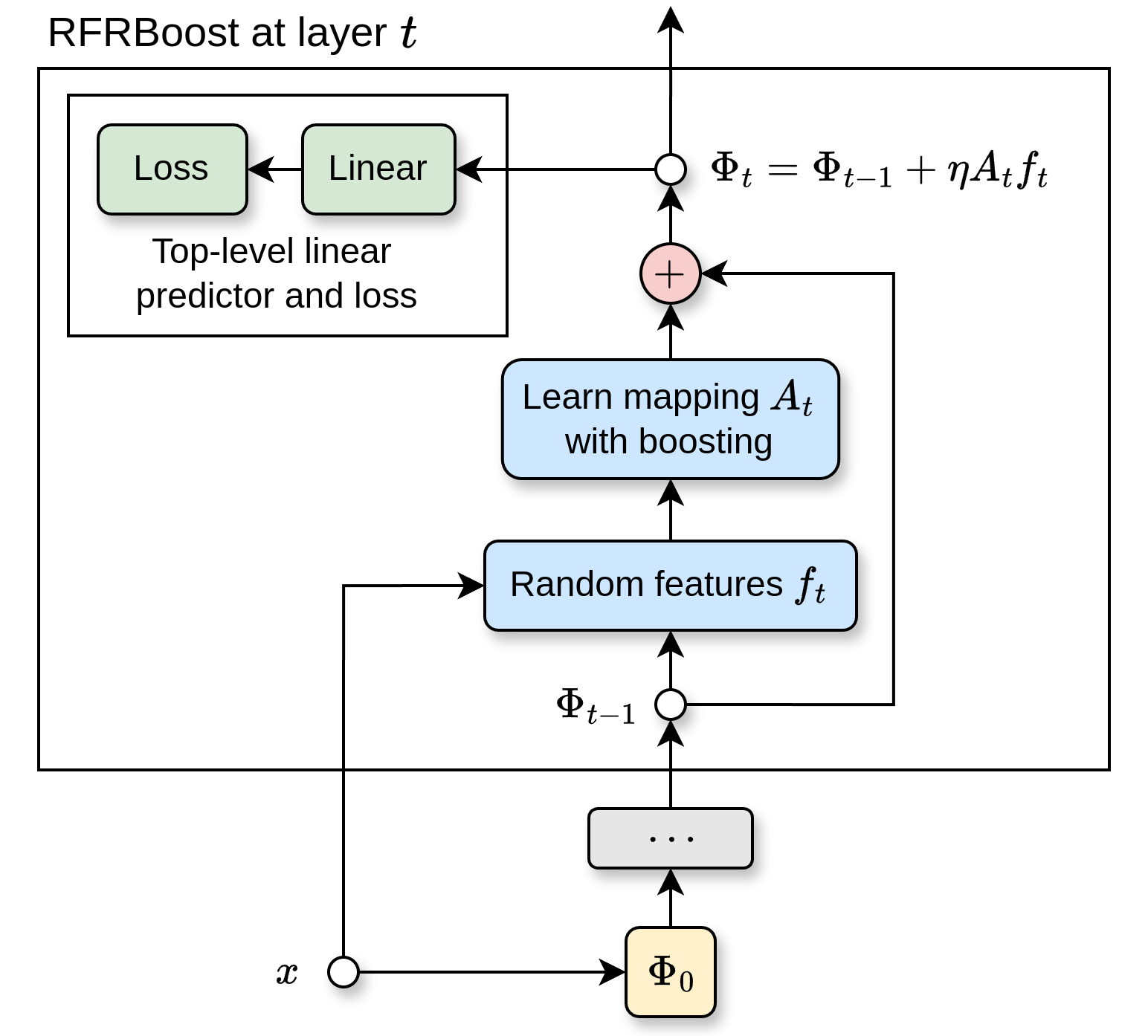}
    \vspace{-15pt}
    \caption{Diagram of random feature representation boosting.}
    \label{fig:GRFRBoost}
\vspace{-10pt}
\end{figure}

\subsection{Exact-Greedy Case for Mean Squared Error Loss}

Recall that in the exact-greedy layer-wise paradigm, the objective at layer $t$ is to find a function $g_{t} \in \calG_{t}$ that additively and greedily minimizes the empirical risk $\widehat{\calR}(W_{t}, \Phi_{t-1} + g_{t})$ for some linear map $W_{t} \in \calW$, given a ResNet feature representation $\Phi_{t-1}$. We propose using functional gradient boosting to construct a residual block of the form $g_{t} =  A_{t} f_{t}$, where $ A_{t} \in \bbR^{D \times p}$ is a linear map, and $f_{t}$ is a random feature layer. We consider the cases where $ A_{t}$ is a scalar multiple of the identity, a diagonal matrix, or a general dense matrix.  The procedure is outlined below:

\textbf{Step 1:} Generate random features $f_{t} = f_{t}(x, \Phi_{t-1}(x))$, which may depend on both the raw training data $x$ and the activations at the previous layer $\Phi_{t-1}(x)$. Using MSE loss, we find that
\begin{align}\label{eqSandwichedLeastSquaresInPaper}
     A_{t} 
        &= \argmin\limits_{ A} \widehat{\calR}(W_{t-1}, \Phi_{t-1} +  A f_{t}(x_i)) \nonumber\\
        &= \argmin\limits_{ A} \frac{1}{n} \sum_{i=1}^n \| y_i - W_{t-1}^\top ( \Phi_{t-1}(x_i) +  A f_{t}(x_i))  \|^2 \nonumber\\
        &= \argmin\limits_{ A} \frac{1}{n} \sum_{i=1}^n \| r_{i} - W_{t-1}^\top A f_{t}(x_i)  \|^2,
\end{align}
where $r_{i} = y_i - W_{t-1}^\top \Phi_{t-1}(x_i)$ are the residuals of the model at layer $t-1$. We term problems of the form \eqref{eqSandwichedLeastSquaresInPaper} \textbf{sandwiched least squares problems}, for which \cref{theoremSandwichedLeastSquares} below provides the existence of closed-form analytical solutions which are fast to compute.

\textbf{Step 2:} After computing $ A_{t}$, we obtain the depth $t$ representation $\Phi_{t} = \Phi_{t-1} + \eta A_{t} f_{t}$. The top-level linear regressor $W_{t+1}$ is then updated via multiple least squares:
\begin{align}\label{eqGreedyRegressionWnoridge}
        W_{t} 
        &= \argmin\limits_{W \in \calW} \frac{1}{n}\sum_{i=1}^n \| y_i - W^\top \Phi_{t}(x_i)  \|^2.
\end{align}
In practice, $\ell_2$ regularization is added to equations \eqref{eqSandwichedLeastSquaresInPaper} and \eqref{eqGreedyRegressionWnoridge}. Steps 1 and 2 are repeated for $T$ layers. The complete procedure is detailed in \cref{alg:GRFRBoostGreedy}.

\vspace{8pt}

\begin{theorem}\label{theoremSandwichedLeastSquares}
        Let $r_i \in \bbR^d, W\in \bbR^{D\times d}, z_i\in \bbR^p$ for all $i\in[n]$. Let $\lambda > 0$. Consider the setting of scalar $ A$, diagonal $ A$, and dense $ A \in \bbR^{D \times p}$. Write $Z \in \bbR^{N \times p}$ and $R \in \bbR^{N \times d}$ for the stacked data and residual matrices. Then the minimum of 
    \begin{align*}
        J( A) 
            &= \frac{1}{n} \sum_{i=1}^n \big\| r_i - W^\top  A z_i \big\|^2 + \lambda\| A\|_F^2
    \end{align*}
    is in the scalar, diagonal, and dense case attained at
    \begin{align*}
         A_{\textnormal{scalar}} 
        &= \frac{\langle R, ZW\rangle_F}{\|Z W\|_F^2 + n\lambda},
    \end{align*}
    \begin{align*}
         A_{\textnormal{diag}} = (W W^\top \odot Z^\top Z + \lambda I)^{-1} \textnormal{diag}(W R^\top Z),
    \end{align*}
    \begin{align*}
         A_{\textnormal{dense}} = U \bigg[ U^\top W &R^\top Z V \oslash \bigg(\lambda N \mathbf{1} \\
        &+ \textnormal{diag}(\Lambda^W) \otimes \textnormal{diag}(\Lambda^Z)\bigg) \bigg] V^\top,
    \end{align*}
    where $\|\cdot\|_F$ is the Frobenius norm, $\oslash$ denotes element-wise division, $\otimes$ is the outer product, $\mathbf{1}$ is a matrix of ones, and $W W^\top = U \Lambda^W U^\top$ and $Z^\top Z = V \Lambda^Z V^\top$ are spectral decompositions.
\end{theorem}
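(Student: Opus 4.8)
The plan is to treat $J$ as a strictly convex quadratic in $A$ and, in each of the three parametrizations, solve the first-order condition $\nabla_A J(A)=0$. First I would rewrite the data-fitting term in matrix form: stacking the $z_i^\top$ as rows of $Z$ and the $r_i^\top$ as rows of $R$, the $i$-th row of $Z A^\top W$ equals $(W^\top A z_i)^\top$, so $\frac1n\sum_i\|r_i-W^\top A z_i\|^2=\frac1n\|R-ZA^\top W\|_F^2$ and hence $J(A)=\frac1n\|R-ZA^\top W\|_F^2+\lambda\|A\|_F^2$. Since $\lambda>0$ this objective is strongly convex, so any stationary point is the unique global minimizer, and it only remains to compute that point.

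\emph{Dense case.} I would expand $\|R-ZA^\top W\|_F^2=\mathrm{tr}(R^\top R)-2\,\mathrm{tr}(W R^\top Z A^\top)+\mathrm{tr}(WW^\top A\,Z^\top Z\,A^\top)$ and differentiate using the identities $\partial_A\,\mathrm{tr}(MA^\top)=M$ and $\partial_A\,\mathrm{tr}(PAQA^\top)=PAQ+P^\top AQ^\top$ (here $P=WW^\top$ and $Q=Z^\top Z$ are symmetric), together with $\partial_A(\lambda\|A\|_F^2)=2\lambda A$. Setting the gradient to zero yields the generalized Sylvester (Stein-type) equation
\[
    WW^\top A\,Z^\top Z + n\lambda A = W R^\top Z .
\]
Substituting the spectral decompositions $WW^\top=U\Lambda^W U^\top$, $Z^\top Z=V\Lambda^Z V^\top$ and changing variables $A=U\tilde A V^\top$ (orthogonal, so $\|A\|_F=\|\tilde A\|_F$), then multiplying on the left by $U^\top$ and on the right by $V$, decouples the equation entrywise into $\big(\Lambda^W_{jj}\Lambda^Z_{kk}+n\lambda\big)\tilde A_{jk}=(U^\top W R^\top Z V)_{jk}$. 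Because $WW^\top,Z^\top Z\succeq 0$ and $n\lambda>0$, every denominator is strictly positive, so $\tilde A=(U^\top W R^\top Z V)\oslash\big(n\lambda\,\mathbf 1+\mathrm{diag}(\Lambda^W)\otimes\mathrm{diag}(\Lambda^Z)\big)$; transforming back gives $A_{\mathrm{dense}}$. (Equivalently, one may vectorize to $\big(Z^\top Z\otimes WW^\top+n\lambda I\big)\mathrm{vec}(A)=\mathrm{vec}(WR^\top Z)$ and invert the Kronecker-structured matrix via $Z^\top Z\otimes WW^\top=(V\otimes U)(\Lambda^Z\otimes\Lambda^W)(V\otimes U)^\top$.)

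\emph{Scalar and diagonal cases.} For scalar $A=aI$ (which forces $p=D$), $W^\top A z_i=a\,W^\top z_i$, so $J$ collapses to the one-dimensional ridge problem $\frac1n\|R-aZW\|_F^2+\lambda a^2$, minimized at $a=\langle R,ZW\rangle_F/(\|ZW\|_F^2+n\lambda)$. For diagonal $A=\mathrm{diag}(a)$ I would write $W^\top A z_i=\sum_k a_k z_{ik}(W^\top)_{:,k}$, making $J$ a convex quadratic in $a\in\bbR^D$ whose normal equations are $\sum_k a_k\,\langle(W^\top)_{:,j},(W^\top)_{:,k}\rangle\,(Z^\top Z)_{jk}+\lambda a_j=\langle(W^\top)_{:,j},(Z^\top R)_{j,:}^\top\rangle$ for each $j$. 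Recognizing $\langle(W^\top)_{:,j},(W^\top)_{:,k}\rangle=(WW^\top)_{jk}$, the Hadamard product $WW^\top\odot Z^\top Z$, and the right-hand side as the $j$-th diagonal entry of $W R^\top Z$, this reads $\big(WW^\top\odot Z^\top Z+\lambda I\big)a=\mathrm{diag}(W R^\top Z)$; invertibility holds since $WW^\top\odot Z^\top Z\succeq 0$ by the Schur product theorem and $\lambda>0$ (the exact factor of $n$ multiplying $\lambda$ follows the $\frac1n$-normalization of the stated risk).

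The main obstacle is the dense case: getting the matrix-calculus gradient of the quadratic form right, and above all tracking the Kronecker/vectorization bookkeeping carefully enough that the decoupled system produces precisely the claimed element-wise division by $n\lambda\,\mathbf 1+\mathrm{diag}(\Lambda^W)\otimes\mathrm{diag}(\Lambda^Z)$, including checking that the orthogonal change of variables leaves the regularizer invariant and that strict positivity of the denominators (hence existence and uniqueness of the minimizer) is exactly where $\lambda>0$ is used. Once the parametrizations are substituted, the scalar and diagonal cases are routine least-squares computations.
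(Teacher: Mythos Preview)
Your proposal is correct and follows essentially the same route as the paper: rewrite the objective in Frobenius form, invoke strict convexity from $\lambda>0$, set the gradient to zero, and in the dense case diagonalize the resulting Sylvester-type equation $WW^\top A\,Z^\top Z+n\lambda A=WR^\top Z$ via the spectral decompositions of $WW^\top$ and $Z^\top Z$ with the orthogonal change of variables $\tilde A=U^\top A V$. Your extra remarks (the Kronecker/vectorization viewpoint and the Schur product theorem for positive semidefiniteness of $WW^\top\odot Z^\top Z$) are sound embellishments but not needed for the argument; one cosmetic point is that in the scalar case the paper treats $A$ itself as the scalar (regularizer $\lambda A^2$) rather than $A=aI$ with $\lambda\|aI\|_F^2=\lambda D a^2$, which is why no factor of $D$ appears in the stated formula.
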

\begin{proof}
    See \crefrange{sandwichScalar}{sandwichDense} in the Appendix.
\end{proof}

\begin{algorithm}[tb]
   \caption{Greedy RFRBoost --- MSE Loss}
   \label{alg:GRFRBoostGreedy}
\begin{algorithmic}
   \STATE {\bfseries Input:} Data $(x_i, y_i)_{i=1}^n$, $T$ layers, learning rate $\eta$, $\ell_2$ regularization $\lambda$, initial representation $\Phi_0$.
   \STATE $W_0 \leftarrow \argmin\limits_{W} \frac{1}{n} \sum_{i=1}^n \|y_i - W^\top \Phi_0(x_i)\big\|^2$
   \FOR{$t=1$ {\bfseries to} $T$}
   \STATE Generate random features $f_{t,i} = f_t(x_i, \Phi_{t-1}(x_i))$ 
   \STATE Compute residuals $ r_i \leftarrow  y_i - W_{t-1}^\top \Phi_{t-1}(x_i)$
   \STATE Solve sandwiched least squares \\$ A_t \leftarrow \argmin\limits_{ A} \frac{1}{n}\sum_{i=1}^n \| r_i - W_{t-1}^\top  A f_{t,i} \|^2 + \lambda \| A\|_F^2$
   \STATE Build ResNet layer $\Phi_{t} \leftarrow \Phi_{t-1} + \eta  A_t f_t$
   \STATE Update top-level linear regressor \\$W_t \leftarrow \argmin\limits_{W} \frac{1}{n} \sum_{i=1}^n \|y_i - W^\top \Phi_t(x_i)\big\|^2 + \lambda \|W\|_F^2$
   \ENDFOR
   \STATE {\bfseries Output:} ResNet $\Phi_T$ and regressor head $W_T$.
\end{algorithmic}
\end{algorithm}

\begin{algorithm}[tb]
   \caption{Gradient RFRBoost --- General Loss}
   \label{alg:GRFRBoostGradient}
\begin{algorithmic}
   \STATE {\bfseries Input:} Data $(x_i, y_i)_{i=1}^n$, loss $l$, $T$ layers, learning rate $\eta$, $\ell_2$ regularization $\lambda$, initial representation $\Phi_0$.
   \STATE $W_0 \leftarrow \argmin\limits_{W} \frac{1}{n} \sum_{i=1}^n l\big(W^\top \Phi_0(x_i), y_i\big)$
   \FOR{$t=1$ {\bfseries to} $T$}
   \STATE Generate random features $f_{t,i} = f_t(x_i, \Phi_{t-1}(x_i))$ 
   \STATE Gradient $G_i \leftarrow W_{t-1} \nabla_1 l\big(W_{t-1}^\top \Phi_{t-1}(x_i), y_i\big)$
   \STATE Fit gradient $ A_t \leftarrow least\_squares(f_t, -\frac{\sqrt{n}G}{\|G\|_F}, \lambda)$
   \STATE Solve line search with convex solver \\$\alpha_t \leftarrow \argmin\limits_{\alpha \in \bbR} \frac{1}{n} \sum_{i=1}^n l\big(W^\top(\Phi_{t-1} + \alpha  A_t f_t), y_i\big)$ 
   \STATE Build ResNet layer $\Phi_{t} \leftarrow \Phi_{t-1} + \eta\alpha_t  A_t f_t$
   \STATE Update top $W_t \leftarrow \argmin\limits_{W} \frac{1}{n} \sum_{i=1}^n l\big( W^\top \Phi_t(x_i), y_i\big)$
   \ENDFOR
   \STATE {\bfseries Output:} ResNet $\Phi_T$ and top linear layer $W_T$.
\end{algorithmic}
\end{algorithm}

\subsection{Gradient Boosting Random Features}\label{subsectionGradientRFRBoost}

While the greedy approach provides optimal solutions for MSE loss, many applications require more general loss functions, such as cross-entropy loss for classification. In such cases, we turn to the gradient-greedy strategy. Recall that in this setting, we aim to minimize the first-order functional Taylor expansion of the risk:
\begin{align*}
    \calR(W, \Phi + g) \approx \calR(W, \Phi) + \langle g, \nabla_2 \calR(W, \Phi) \rangle_{L^D_2(\mu)},
\end{align*}
which holds for functions $g$ with small $L_2^D(\mu)$-norm. As discussed in the context of general gradient representation boosting, a potential issue with directly minimizing $\langle g, \nabla_2 \widehat{\calR}(W, \Phi) \rangle_{L^D_2(\widehat{\mu})}$ is that $g$ might learn to maximize its magnitude without following the direction of the functional gradient. To address this, we constrain the problem to ensure that $g$ maintains a unit norm in $L_2^D(\widehat{\mu})$. If we restrict $g$ to residual blocks of the form $g =  A f$, where $ A \in \bbR^{D \times p}$ is a linear map, and $f \in \bbR^p$ are random features, then solving the constrained $L_2^D(\widehat{\mu})$-inner product minimization problem becomes equivalent to solving a quadratically constrained least squares problem. See \cref{appendixInnerProductWithFunctionalGradient} for the proof.

\begin{theorem} Let $\widehat{\mu} = \frac{1}{n} \sum_{i=1}^n \delta_{(x_i, y_i)}$ be the empirical measure of the data. Then
\begin{align*}
    \argmax\limits_{ A \in \bbR^{D \times p} \textnormal{ such that } \| A f\|_{L_2^D(\widehat{\mu})} \leq 1} \big\langle  A f, \nabla_2 \widehat{\calR}(W, \Phi) \big\rangle_{L_2^D(\widehat{\mu})}
\end{align*}
is the solution to the quadratically constrained least squares problem
\begin{align*}
    &\textnormal{minimize }\quad\frac{1}{n}\sum_{i=1}^n \| \nabla_2 \widehat{\calR}(W, \Phi)(x_i) -  A f(x_i) \|^2, \\
    &\textnormal{subject to }\quad \frac{1}{n}\sum_{i=1}^n \|  A f(x_i) \|^2 = 1,
\end{align*}
which in particular has the closed form analytical solution
\begin{align*}
     A = \frac{\sqrt{n}}{\|G\|_{Frobenius}} G^\top F (F^\top F)^{-1}
\end{align*}
when $F$ has full rank, where $F \in \bbR^{n \times p}$ is the feature matrix, and $G \in \bbR^{n \times D}$ is the matrix given by $G_{i,j} = \big(\nabla_2 \widehat{\calR}(W, \Phi)(x_i)\big)_j$.
\end{theorem}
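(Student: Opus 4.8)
The plan is to recast everything in matrix notation and then argue in two stages: first, that the constrained inner-product maximization is literally the same optimization problem as the quadratically constrained least squares (QCLS), and second, that the QCLS admits the stated closed form. I write $g_i = \nabla_2\widehat{\calR}(W,\Phi)(x_i)\in\bbR^D$, let $G\in\bbR^{n\times D}$ have rows $g_i^\top$ and $F\in\bbR^{n\times p}$ have rows $f(x_i)^\top$, and set $B=A^\top$. Everything rests on the two identities $\langle Af,\nabla_2\widehat{\calR}(W,\Phi)\rangle_{L_2^D(\widehat\mu)}=\frac1n\sum_{i=1}^n g_i^\top Af(x_i)=\frac1n\,\mathrm{tr}(G^\top FB)$ and $\|Af\|_{L_2^D(\widehat\mu)}^2=\frac1n\sum_{i=1}^n\|Af(x_i)\|^2=\frac1n\|FB\|_F^2$, which I would record first.

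For the equivalence, I would expand the QCLS objective as $\frac1n\sum_i\|g_i-Af(x_i)\|^2=\frac1n\|G\|_F^2-\frac2n\,\mathrm{tr}(G^\top FB)+\frac1n\|FB\|_F^2$. On the feasible set $\frac1n\|FB\|_F^2=1$ the first and last terms are constant, so there the QCLS objective equals a constant minus $2\langle Af,\nabla_2\widehat{\calR}\rangle_{L_2^D(\widehat\mu)}$; hence minimizing it over the sphere $\{\|Af\|_{L_2^D(\widehat\mu)}=1\}$ is the same as maximizing the inner product over that sphere. The remaining point is that the maximum over the unit \emph{ball} is attained on the sphere: since $F$ has full column rank, $\|FB\|_F^2$ vanishes only at $B=0$, so the ball is compact and a maximizer exists, and if a maximizer had norm $c\in(0,1)$ with positive optimal value, scaling it by $c^{-1}$ would give a strictly larger feasible value, a contradiction. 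I would check that the optimal value is indeed positive whenever $F^\top G\neq 0$ by testing $B=\varepsilon (F^\top F)^{-1}F^\top G$, treating $F^\top G=0$ as a degenerate case in which the inner product is identically zero and both problems are vacuous.

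For the closed form, by the equivalence it is enough to maximize $\mathrm{tr}(G^\top FB)$ subject to $\|FB\|_F^2=n$. Full column rank makes $S:=F^\top F$ positive definite, so the change of variables $u=S^{1/2}B$ turns the problem into $\max_u\langle S^{-1/2}F^\top G,u\rangle_F$ subject to $\|u\|_F^2=n$, which by Cauchy--Schwarz has the unique maximizer $u^\star=\sqrt n\,S^{-1/2}F^\top G/\|S^{-1/2}F^\top G\|_F$. Undoing the substitution gives $B^\star=c\,(F^\top F)^{-1}F^\top G$, equivalently $A^\star=c\,G^\top F(F^\top F)^{-1}$, with the positive scalar $c$ fixed by re-imposing $\|A^\star f\|_{L_2^D(\widehat\mu)}=1$; this is the form in the statement, and it coincides with the least-squares step of \cref{alg:GRFRBoostGradient} up to sign and the subsequent line search. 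As a sanity check I would also derive the stationarity equation $F^\top G=\nu F^\top FB$ via a Lagrange multiplier, which yields the same direction, but I prefer the Cauchy--Schwarz route because it certifies global optimality and uniqueness directly.

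The hard part is the reduction in the first stage: one must be careful that the ball constraint is active at the optimum, and this is precisely where the non-degeneracy hypothesis $F^\top G\neq 0$ (the functional gradient not being orthogonal to the span of the random features) is needed — without it the argmax is the whole ball. A secondary subtlety is that the equality-constrained QCLS is not convex, so a bare Lagrange/KKT stationarity computation does not by itself prove optimality; it is the reduction to a linear objective on a Euclidean sphere, via $u=(F^\top F)^{1/2}B$ and hence using the full-rank assumption, that makes global optimality and uniqueness transparent.
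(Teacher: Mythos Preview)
Your equivalence argument (expanding the quadratic, noting that the first and third terms are constant on the sphere, then showing the ball constraint is active by a scaling argument) is essentially the same as the paper's. For the closed form, however, you take a genuinely different route: the paper sets up a Lagrangian $J(A,\nu)$, differentiates to obtain $H^\top F=(1-\nu)\,AF^\top F$, and solves via the SVD of $F$, whereas you substitute $u=(F^\top F)^{1/2}B$ to reduce the problem to maximizing a linear functional on a Euclidean sphere and invoke Cauchy--Schwarz. Your approach is cleaner precisely for the reason you flag at the end: the Lagrangian computation yields only stationarity, and the paper does not argue why the positive-sign root is the global maximizer, whereas Cauchy--Schwarz delivers global optimality and uniqueness in one stroke.

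One byproduct of your more careful derivation is worth recording. Carrying your $u^\star$ back gives $c=\sqrt n\,/\,\|S^{-1/2}F^\top G\|_F=\sqrt n\,/\,\|P_FG\|_F$, where $P_F=F(F^\top F)^{-1}F^\top$ is the orthogonal projector onto $\mathrm{col}(F)$; this differs from the statement's $\sqrt n/\|G\|_F$ unless the columns of $G$ already lie in $\mathrm{col}(F)$. The paper's own proof has the same slip: it simplifies $\|H^\top UU^\top\|_F$ to $\|H\|_F$, tacitly using $UU^\top=I_n$, which holds only when $p\ge n$. In practice this is immaterial because \cref{alg:GRFRBoostGradient} follows the fit with a line search that absorbs any positive scalar, but when you write ``this is the form in the statement'' you should note that the displayed normalizing constant is not exactly what the constraint delivers in the generic regime $n>p$.
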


The procedure for using Gradient RFRBoost at layer $t$ to build a random feature ResNet $\Phi_t$ is outlined below (see also \cref{fig:GRFRBoost}):

\textbf{Step 1:} Generate random features $f_{t}$. Compute the data matrix of the functional gradient $G_{i,j} = \big(\nabla_2 \widehat{\calR}(W_{t-1}, \Phi_{t-1})(x_i)\big)_j$. For MSE loss, this is given by
\vspace{-3pt}
\begin{align*}
    G_{i,j}^{\textnormal{MSE}} = \big( W_{t-1} (W_{t-1}^\top\Phi_{t-1}(x_i)-y_i) \big)_j,
\end{align*}
and for negative cross-entropy loss by
\begin{align*}
    G_{i,j}^{\textnormal{CCE}} = \big( W_{t-1}(s(W_{t-1}^\top \Phi_{t-1}(x_i)) - e_{y_i}) \big)_j,
\end{align*}
where $s$ is the softmax function, and $e_{y_i}$ is the one-hot vector for label $y_i$. For full derivation, see \cref{appendixGradientCalculations}. We then fit $ A_{t}$ by fitting $f_{t+1}$ to the negative normalized functional gradient $-\frac{\sqrt{n} G}{\|G\|_F}$ via multiple least squares, according to \cref{theoremConstrainedLeastSquaresL2Dmu}. In practice, we also include $\ell_2$ regularization.

\textbf{Step 2:} Find the optimal \textit{amount of say} $\alpha_t$ via line search
\begin{align*}
    \alpha_{t} = \argmin\limits_{\alpha>0} \widehat{\calR}(W_{t-1}, \Phi_{t-1} + \alpha  A_{t} f_{t})
\end{align*}
using \cref{theoremSandwichedLeastSquares} for MSE loss, or via a suitable convex optimizer such as Newton's method for cross-entropy loss.

\textbf{Step 3:} Update the feature representation $\Phi_{t} = \Phi_{t-1} + \eta\alpha_t A_t f_t$, and the top-level linear predictor
\begin{align*}
    W_{t} = \argmin\limits_{W\in\calW} \widehat{\calR}(W, \Phi_{t})
\end{align*}
using an appropriate convex minimizer depending on the specific loss function, such as L-BFGS. The full gradient-greedy procedure is detailed in \cref{alg:GRFRBoostGradient}.

\subsection{Theoretical Guarantees}

A key advantage of using RFRBoost to construct ResNets over traditional end-to-end trained networks is that RFRBoost inherits strong theoretical guarantees from boosting theory. We analyze RFRBoost within the theoretical framework of Generalized Boosting \cite{2020GeneralizedBoosting}, where excess risk bounds have been established in terms of modular Rademacher complexities of the family of weak learners $\calG_t$ at each boosting iteration $t$. These bounds are based on the $(\beta, \epsilon)$-weak learning condition, defined below.

\begin{definition}[\citet{2020GeneralizedBoosting}] Let $\beta\in(0, 1]$ and $\epsilon \geq 0$. We say that $\calG_{t+1}$ satisfies the $(\beta, \epsilon)$-weak learning condition if there exists a $g \in \calG_{t+1}$ such that
\begin{align*}
    \frac{\big\langle g, -\nabla_2 \calR(W_t, \Phi_t) \big\rangle_{L_2^D(\mu)}}{\beta  \sup_{g \in \calG_{t+1}} \|g\|_{L_2^D(\mu)}} + \epsilon \geq  \big\| \nabla_2 \calR(W_t, \Phi_t) \big\|_{L_2^D(\mu)}.
\end{align*}
\end{definition}

Intuitively, this condition states that there exists a weak feature transformation in $\calG_{t+1}$ that is negatively correlated with the functional gradient. In the sample-splitting setting, where each boosting iteration uses an independent sample of size $\widetilde{n} = \floor{n/T}$, we derive the following regret bound for RFRBoost. This bound describes how the excess risk decays compared to an optimal predictor as $T$ and $\widetilde{n}$ vary.

\begin{theorem}\label{theoremRFRBoostRegretBound}
Let $l$ be $L$-Lipschitz and $M$-smooth with respect to the first argument. For a matrix $W$, let $\lambda_\textnormal{min}(W)$ and $\lambda_\textnormal{max}$ denote its smallest and largest singular values, respectively. Consider RFRBoost with the hypothesis set of linear predictors $\calW = \left\{W \in \bbR^{D \times d} : \lambda_{\textnormal{min}}(W)> \lambda_0 > 0, \lambda_{\textnormal{max}}(W) < \lambda_1 \right\}$, \hspace{1pt} and \newline weak feature transformations $\calG_t = \{ x \mapsto  A \tanh (B \Phi_{t-1}(x)): \lambda_\textnormal{max}( A), \lambda_\textnormal{max}(B) < \lambda_1\}$ satisfying the $(\beta, \epsilon_t)$-weak learning condition. Let the boosting learning rates be $\eta_t = ct^{-s}$ for some $s \in \big( \frac{\beta+1}{\beta+2},1\big)$ and $c>0$. Then $T$-layer RFRBoost satisfies the following risk bound for any $W^*, \Phi^*$, and $a\in \big(0, \beta(1-s)\big)$ with probability at least $1-\delta$ over datasets of size $n$:
\begin{align*}
    \calR(W_T, \Phi_T) &\leq \calR(W^*, \Phi^*) + 2\sum_{t=1}^T \eta_t\epsilon_t \\
    & + C\left(\frac{1}{T^a} + \frac{T^{2-s}\big(1 + \sqrt{\log{\frac{T}{\delta}}}\big)}{ \sqrt{\widetilde{n}}} \right),
\end{align*}
where the constant $C$ does not depend on $T$, $n$, and $\delta$.
\end{theorem}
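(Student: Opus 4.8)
The plan is to obtain the bound by instantiating the modular excess-risk analysis for gradient-greedy representation boosting of \citet{2020GeneralizedBoosting}: I would check that Gradient RFRBoost (\cref{alg:GRFRBoostGradient}) is a legitimate instance of their framework, specialize their generic regret bound to the concrete classes $\calW$ and $\calG_t$ in the statement, and then simplify. Concretely, RFRBoost fits each residual block $g_t = A_t f_t$ by (approximately) maximizing the $L_2^D(\widehat{\mu})$-inner product with $-\nabla_2\widehat{\calR}$ under a unit-norm constraint (via \cref{theoremConstrainedLeastSquaresL2Dmu}), performs a line search for $\alpha_t$, and retrains the top predictor $W_t$ by exact risk minimization over $\calW$ -- exactly the gradient-greedy update analyzed in \citet{2020GeneralizedBoosting}.

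The structural hypotheses are immediate except for the Rademacher complexity of the weak learners. The loss $l$ is $L$-Lipschitz and $M$-smooth by assumption; $\calW$ is a bounded set of linear maps (operator norm $\le \lambda_1$), and the lower bound $\lambda_{\min}(W)>\lambda_0$ keeps the functional gradient $\nabla_2\widehat{\calR}$ non-degenerate, as their argument requires; each $\calG_t$ satisfies the $(\beta,\epsilon_t)$-weak learning condition by hypothesis. For the complexity term I would use that $\tanh$ acts coordinatewise, is $1$-Lipschitz, and is bounded by $1$: the vector-valued contraction lemma together with the operator-norm constraints $\lambda_{\max}(A),\lambda_{\max}(B)<\lambda_1$ yields a bound of the form $\widehat{\mathfrak{R}}_{\widetilde{n}}(\calG_t)\le c_1\lambda_1^2\,\rho_{t-1}/\sqrt{\widetilde{n}}$, where $\rho_{t-1}=\sup_i\|\Phi_{t-1}(x_i)\|$ and $c_1$ depends only on $p,D$; boundedness of $\tanh$ also gives a uniform range bound $\sup_{g\in\calG_t}\|g\|_\infty\le\lambda_1\sqrt{p}$, against which $L$ and $M$ enter.

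Next I would control the growth of the representation. Since $A_t$ solves the unit-norm-constrained least squares problem, $\|A_t f_t\|_{L_2^D(\widehat{\mu})}=1$, and a finite bound on the line-search step $\alpha_t$ follows from $M$-smoothness of $l$ and the unit gradient direction; hence $\rho_t\le\rho_0+c_2\sum_{s=1}^t\eta_s=\rho_0+c_2 c\sum_{s\le t}s^{-s}\le\rho_0+c_3\,t^{1-s}$ because $s<1$, so $\widehat{\mathfrak{R}}_{\widetilde{n}}(\calG_t)\lesssim t^{1-s}/\sqrt{\widetilde{n}}$. Plugging everything into the generic bound of \citet{2020GeneralizedBoosting} produces three terms: (i) the weak-learning slack $2\sum_{t=1}^T\eta_t\epsilon_t$; (ii) an optimization term measuring how fast the greedy iterates approach $(W^*,\Phi^*)$, which for $\eta_t=ct^{-s}$ with $s\in\big(\frac{\beta+1}{\beta+2},1\big)$ decays like $T^{-a}$ for any $a\in\big(0,\beta(1-s)\big)$ -- precisely the regime in which the error recursion in their convergence lemma closes, with $s<1$ preventing the steps from shrinking too fast and $s>\frac{\beta+1}{\beta+2}$ ensuring summability; and (iii) a generalization term $\sum_{t=1}^T\eta_t L\,\widehat{\mathfrak{R}}_{\widetilde{n}}(\calG_t)$ plus a deviation term of order $\sqrt{\log(T/\delta)/\widetilde{n}}$, the $\log(T/\delta)$ arising from a union bound over the $T$ boosting rounds in the sample-splitting setting. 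Using $\eta_t\le c$ and $\widehat{\mathfrak{R}}_{\widetilde{n}}(\calG_t)\lesssim t^{1-s}/\sqrt{\widetilde{n}}$ gives $\sum_{t\le T}\eta_t\widehat{\mathfrak{R}}_{\widetilde{n}}(\calG_t)\lesssim T^{2-s}/\sqrt{\widetilde{n}}$, and collecting this with the deviation term yields $C\big(T^{-a}+T^{2-s}(1+\sqrt{\log(T/\delta)})/\sqrt{\widetilde{n}}\big)$ with $C$ absorbing $L,M,\lambda_0,\lambda_1,p,D,c,\beta,s$ but not $T,n,\delta$.

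The main obstacle is the Rademacher complexity estimate for $\calG_t$ with explicit dependence on $\rho_{t-1}$ -- in particular handling the composition with the data-dependent, norm-growing map $\Phi_{t-1}$ and keeping the constants uniform in $t$ -- together with making the representation-growth accounting $\rho_t\lesssim t^{1-s}$ rigorous, including the uniform bound on the line-search steps $\alpha_t$; once these are in hand, the remainder is bookkeeping on top of the cited framework, whose optimization lemma already supplies the $T^{-a}$ rate for the stated range of $s$.
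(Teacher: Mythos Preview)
Your proposal is correct and mirrors the paper's proof: both instantiate the excess-risk bound of \citet{2020GeneralizedBoosting}, verify the uniform boundedness $\sup_x\|g(x)\|_2\le\lambda_1\sqrt{p}$, bound the Rademacher complexity of $\calG_t$ by $O(t^{1-s}/\sqrt{\widetilde{n}})$ via a contraction/peeling argument combined with the representation-growth estimate $\|\Phi_{t-1}\|\lesssim\sum_{r<t}\eta_r\lesssim t^{1-s}$, and then collect terms. The paper differs only cosmetically---it uses the peeling lemma of \citet{2018LearningAndGeneralizationInOverparameterizedNeuralNetworks} and $\ell_\infty$-norm control of $\Phi_{t-1}$, bounds $\frakR(\calW,\calG_t)$ separately alongside $\frakR(\calG_t)$, and does not explicitly discuss the line-search step $\alpha_t$ (the learned $\alpha_t A_t$ is implicitly taken to satisfy the class constraint $\lambda_{\max}<\lambda_1$, so the growth bound is immediate).
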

\begin{proof}
    See \cref{appendixTheoreticalGuarantees}.
\end{proof}

For similar results in the literature, see for instance  \citet{2020GeneralizedBoosting}, Corollary 4.3.

\subsection{Time Complexity}

The serial time complexity of RFRBoost with MSE loss, assuming tabular data and dense random features, is $O(T [ N(D^2 + Dd + p^2 + pD) + D^3 + dD^2 + p^3 + Dp^2 ] )$, as derived from Algorithm 2. Here $N$ is the dataset size, $D$ is the dimension of the neural network representation, $d$ is the output dimension of the regression task, $p$ is the number of random features, and $T$ is the number of layers (boosting rounds). The computation is dominated by matrix operations, which are well-suited for GPU acceleration. The classification case follows similarly.

\section{Numerical Experiments}\label{sectionExperiments}

In this section, we compare RFRBoost against a set of baseline models on a wide range of tabular regression and classification datasets, as well as on a challenging synthetic point cloud separation task. All our code is publicly available at \url{https://github.com/nikitazozoulenko/random-feature-representation-boosting}.

\subsection{Tabular Regression and Classification}\label{subsectionExperimentsOpenML}

\textbf{Datasets:} We experiment on all datasets of the curated OpenML tabular regression \cite{2023OpenMLRegression} and classification \cite{2021OpenMLClassification} benchmark suites with 200 or fewer features. This amounts to a total of 91 datasets per model. Due to the large number of datasets, we limit ourselves to 5000 observations per dataset. We preprocess each dataset by one-hot encoding all categorical variables and normalizing all numerical features to have zero mean and variance one.

\textbf{Evaluation Procedure:} We use a nested 5-fold cross-validation (CV) procedure to tune and evaluate all models, run independently for each dataset. The innermost CV is used to tune all hyperparameters, for which we use the Bayesian hyperparameter tuning library Optuna \cite{2019Optuna}. For regression, we use MSE loss, and for classification, we use cross-entropy loss. All experiments are run on a single CPU core on an institutional HPC cluster, mostly comprised of AMD EPYC 7742 nodes. We report the average test scores for each model, as well as mean training times for a single fit. The average relative rank of each model is presented in a critical difference diagram, indicating statistically significant clusters based on a Wilcoxon signed-rank test with Holm correction. This is a well-established methodology for comparing multiple algorithms across multiple datasets \cite{2006StatisticalComparisonsOfClassifiersOverMultipleDataSets, 2008AnExtensionOnStatisticalComparisonsOfClassifiersOverMultipleDataSetsForAllPairwiseComparisons, 2016ShouldWeReallyUsePostHocTestsBasedOnMeanRanks}.

\vspace{-1pt}
\textbf{Baseline Models:} We compare RFRBoost, a random feature ResNet, against several strong baselines, including end-to-end (E2E) trained MLP ResNets, single-layer random feature neural networks (RFNNs), ridge regression, logistic regression, and XGBoost \cite{2016XGBoost}. The E2E ResNets are trained using the Adam optimizer \cite{2015AdamOptimizer} with cosine learning rate annealing, ReLU activations, and batch normalization. While XGBoost is a powerful gradient boosting model, it differs fundamentally from RFRBoost. XGBoost ensembles a large number of weak decision trees (up to 1000 in our experiments) to build a strong predictor. RFRBoost, on the other hand, uses gradient representation boosting to construct a small number of residual blocks, followed by a single linear predictor. For RFRBoost and RFNNs we use SWIM random features with $\tanh$ activations, as detailed in \cref{appendixExperiments}. In the regression setting, we evaluate three variants of RFRBoost: using a scalar, a diagonal, or a dense $ A$ matrix. When using a dense $ A$, we set the initial mapping $\Phi_0(x)$ to the identity; otherwise, $\Phi_0(x)$ is a randomly initialized dense layer. For classification, we use the gradient-greedy variant of RFRBoost with LBFG-S as the convex solver. We use the official implementation of XGBoost, and implement all other baseline models in PyTorch \cite{2019PyTorch}.

\vspace{-1pt}
\textbf{Hyperparameters:} All hyperparameters are tuned with Optuna in the innermost fold, using 100 trials per outer fold, model, and dataset. For ridge and logistic regression, we tune the $\ell_2$ regularization. For the neural network-based models, we fix the feature dimension of each residual block to 512 and use 1 to 10 layers. For E2E networks, we tune the hidden size, learning rate, learning rate decay, number of epochs, batch size, and weight decay. For RFRBoost, we tune the $\ell_2$ regularization of the linear predictor and functional gradient mapping, the boosting learning rate, and the variance of the random features. For RFNNs, we tune the random feature dimension, random feature variance, and $\ell_2$ regularization. For XGBoost, we tune the $\ell_1$ and $\ell_2$ regularization, tree depth, boosting learning rate, and the number of weak learners. For a detailed list of hyperparameter ranges, along with an ablation study comparing SWIM random features to i.i.d. Gaussian random features, we refer the reader to \cref{appendixExperiments}.

\vspace{-1pt}
\textbf{Results:} Summary results for regression and classification are presented in \crefrange{tab:openml-regression}{tab:openml-classification} and \crefrange{fig:openml-regression}{fig:openml-classification}, respectively. Full dataset-wise results are reported in \cref{appendixExperiments}. We find that RFRBoost ranks higher than all other baseline models. For regression tasks, the gradient-greedy version of RFRBoost outperforms the exact-greedy variant, contrary to the observations of  \citet{2020GeneralizedBoosting} for SGD-trained gradient representation boosting. This difference likely arises because the SGD-based approach does not incorporate the $L_2^D(\mu)$-norm constraint during training, which is crucial for preserving the functional direction of the residual block. The test scores follow the ordering $ A_\textnormal{dense} >  A_\textnormal{diag} >  A_\textnormal{scalar}$, demonstrating that RFRBoost is more expressive when mapping random feature layers to the functional gradient, rather than simply stacking random feature layers. While XGBoost achieves a slightly lower RMSE than RFRBoost, it performs worse in terms of average rank. Moreover, RFRBoost significantly outperforms both RFNNs and E2E MLP ResNets, while being an order of magnitude faster to train than the latter. Although the reported training times are CPU-based, our implementation suggests both methods would benefit similarly from GPU acceleration, making the presented times representative.

\begin{table}[t]
\caption{Average test RMSE and single-core CPU fit times on the OpenML regression datasets.}
\label{tab:openml-regression}
\vskip 0.15in
\begin{center}
\begin{small}
\begin{sc}
\begin{tabular}{lcc}
\toprule
Model & Mean RMSE & Fit Time (s)\\
\midrule
Gradient RFRBoost & 0.408 & 1.688 \\
Greedy RFRBoost $ A_{dense}$\hspace{-16pt} & 0.408 & 2.734 \\
Greedy RFRBoost $ A_{diag}$\hspace{-16pt} & 0.415 & 1.631 \\
Greedy RFRBoost $ A_{scalar}$\hspace{-16pt} & 0.434 & 1.024 \\
\midrule
XGBoost & 0.394 & 1.958 \\
E2E MLP ResNet & 0.412 & 19.309 \\
RFNN & 0.434 & 0.053 \\
Ridge Regression & 0.540 & 0.001 \\
\bottomrule
\end{tabular}
\end{sc}
\end{small}
\end{center}
\end{table}

\begin{figure}[t]
\begin{center}
\centerline{\includegraphics[width=\columnwidth]{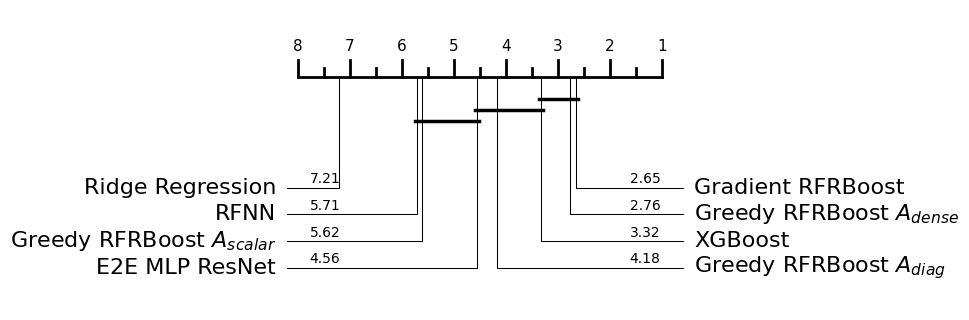}}
\caption{Critical difference diagram based on pairwise relative rank of test RMSE. Bars indicate no significant difference ($\alpha=0.05$). The average rank is displayed for each model.}
\label{fig:openml-regression}
\end{center}
\vskip -0.1in
\end{figure}

\begin{table}[t]
\caption{Average test accuracies and single-core CPU fit times on the OpenML classification datasets.}
\label{tab:openml-classification}
\vskip 0.15in
\begin{center}
\begin{small}
\begin{sc}
\begin{tabular}{lcc}
\toprule
Model & Mean Acc & Fit Time (s) \\
\midrule
RFRBoost & 0.853 & 2.519 \\
\midrule
XGBoost & 0.853 & 3.859 \\
E2E MLP ResNet & 0.851 & 20.881 \\
RFNN & 0.845 & 1.189 \\
Logistic Regression & 0.821 & 0.165 \\
\bottomrule
\end{tabular}
\end{sc}
\end{small}
\end{center}
\end{table}

\begin{figure}[t]
\vskip -0.1in
\begin{center}
\centerline{\includegraphics[width=\columnwidth]{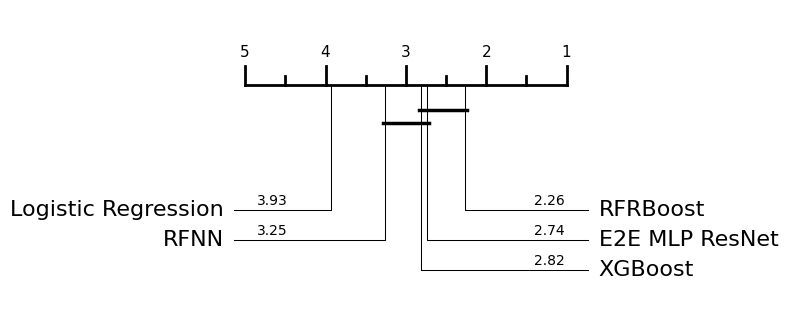}}
\caption{Critical difference diagram based on pairwise relative rank of test accuracy. Bars indicate no significant difference ($\alpha=0.05$). The average rank is displayed for each model.}
\label{fig:openml-classification}
\end{center}
\vskip -0.2in
\end{figure}

\begin{table}[h!]
\caption{Average test accuracies on the concentric circles point cloud separation task, averaged across 10 runs.}
\label{tab:concentric-circles}
\vskip 0.10in
\begin{center}
\begin{small}
\begin{sc}
\begin{tabular}{lcc}
\toprule
Model & Mean Acc & Std Dev \\
\midrule
RFRBoost & 0.997 & 0.002 \\
\midrule
RFNN & 0.887 & 0.037 \\
E2E MLP ResNet & 0.732 & 0.144 \\
Logistic Regression & 0.334 & 0.023 \\
\bottomrule
\end{tabular}
\end{sc}
\end{small}
\end{center}
\end{table}

\begin{figure}[h!]
\vskip 0.05in
\begin{center}
\centerline{\includegraphics[width=\columnwidth]{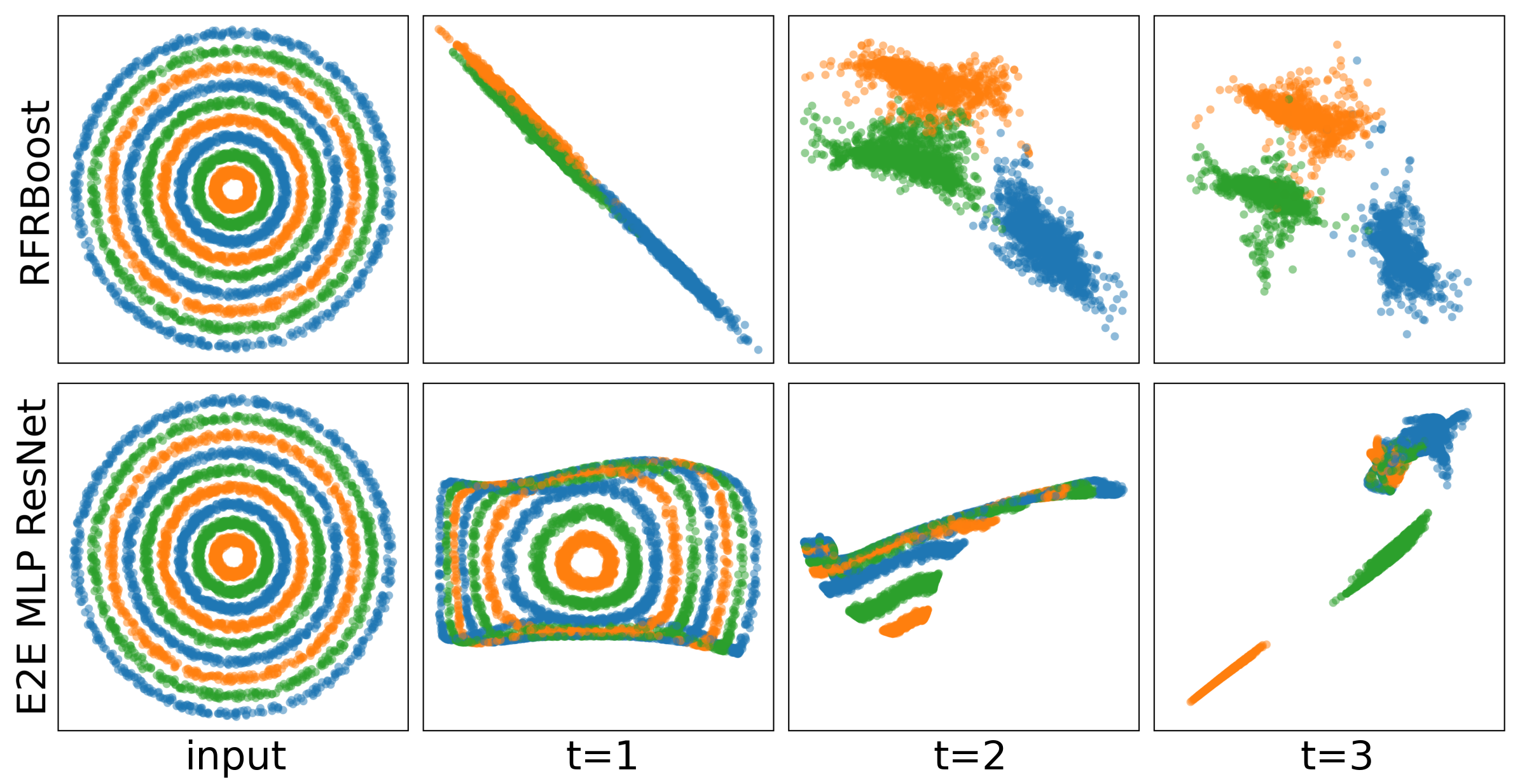}}
\caption{Point cloud separation of test data at each layer.}
\label{fig:conenctric-evolution}
\end{center}
\vskip -0.1in
\end{figure}

\subsection{Point Cloud Separation}
We evaluate RFRBoost on a challenging synthetic dataset originating from the neural ODE literature \cite{2021MomentumResidualNeuralNetworks}. The dataset consists of 10,000 points sampled from concentric circles, and the task is to linearly separate (i.e. classify) the concentric circles while restricting the ResNet hidden size to 2, see \cref{fig:conenctric-evolution}. We compare RFRBoost to E2E MLP ResNets, while also presenting classification results for logistic regression and RFNNs in \cref{tab:concentric-circles}. All models were trained with cross entropy loss, using a 5-fold CV grid search for hyperparameter tuning. For E2E MLP ResNets, the learning rate was tuned, while for the other models, only the $\ell_2$ regularization of the classification head was tuned. The hidden size, residual block feature dimension, and activation function was fixed at $2$, $512$, and $\tanh$, respectively, for all models. See \cref{appendixPointCloudSep} for more details.

Notably, the RFNN, which uses SWIM random features to map the initial 2-dimensional input to a higher-dimensional space before applying a linear classifier, fails to classify all points correctly. RFRBoost, in contrast, achieves near-perfect linear separation by first mapping the random features to the functional gradient of the network representation, before applying a 2-dimensional linear classifier. This result is somewhat surprising because RFRBoost does not rely on explicit Fourier features or a learnt radial basis, demonstrating the power of gradient representation boosting. It further illustrates how RFRBoost can solve problems that are intractable for E2E-trained networks and RFNNs. \cref{fig:conenctric-evolution} shows that the E2E MLP ResNet correctly separates only two of the nine concentric rings, similar to the failure to converge observed by \citet{2021MomentumResidualNeuralNetworks}.

\subsection{Experiments on Larger-scale Datasets}

To complement our experiments on the OpenML benchmark suite, we conducted additional full-scale evaluations on four larger datasets (two with $\sim$100k samples, two with $\sim$500k samples) to assess performance and scalability in larger data regimes. Full experimental details, including dataset splits, hyperparameter grids, and plots of training time and predictive performance versus training set size, are provided in \cref{appendixLargerScaleExperiments}. We find that RFRBoost significantly outperforms traditional single-layer RFNNs across all tested datasets and training sizes, particularly as dataset size increases, highlighting its effectiveness in leveraging depth for random feature models. While RFRBoost performs strongly against E2E trained networks in medium-sized data regimes, our findings indicate that E2E networks and XGBoost eventually outperform RFRBoost as dataset size increases on 3 out of 4, and 2 out of 4 datasets, respectively. We hypothesize that using RFRBoost as an initialization strategy, followed by end-to-end fine-tuning, could be a promising direction to further enhance the performance of deep networks. We leave this to future work.

\section{Conclusion}
This paper introduced RFRBoost, a novel method for constructing deep residual random feature neural networks (RFNNs) using boosting theory. RFRBoost addresses the limitations of single-layer RFNNs by using random features to learn optimal ResNet-like residual blocks that approximate the negative functional gradient at each layer of the network, thereby enhancing performance while retaining the computational benefits of convex optimization for RFNNs. This procedure can be viewed as performing functional gradient descent on the network neurons, which has connections to gradient boosting theory, as opposed to classical SGD which performs gradient descent on the network weights and biases. In our framework, we derived closed-form solutions for greedy layer-wise boosting with MSE loss, and presented a general fitting algorithm for arbitrary loss functions based on solving a quadratically constrained least squares problem. Through extensive numerical experiments on tabular datasets for both regression and classification, we demonstrated that RFRBoost significantly outperforms traditional RFNNs and end-to-end trained MLP ResNets in the small- to medium-scale regime where RFNNs are typically applied, while offering substantial computational advantages, and theoretical guarantees stemming from boosting theory. RFRBoost represents a significant step towards building powerful, stable, efficient, and theoretically sound deep networks using untrained random features. Future work will focus on extending RFRBoost to other domains such as time series or image data, exploring different types of random features and momentum strategies, implementing more efficient GPU acceleration, scaling to large datasets, and using RFRBoost as an initialization strategy for large-scale end-to-end training.

\section*{Acknowledgements}
TC has been supported by the EPSRC Programme Grant EP/S026347/1. NZ has been supported by the Roth Scholarship at Imperial College London, and acknowledges conference travel support from G-Research. We acknowledge computational resources and support provided by the Imperial College Research Computing Service (DOI: \texttt{10.14469/hpc/2232}). For the purpose of open access, the authors have applied a Creative Commons Attribution (CC BY) licence to any Author Accepted Manuscript version arising.

\bibliography{references}
\bibliographystyle{icml2025}

\newpage
\appendix
\onecolumn

\section{Analytic Solutions to Sandwiched Least Squares Problems}\label{appendixSandwich}

In this section, we derive the analytic closed form expressions of the sandwiched least squares problems presented in \cref{theoremSandwichedLeastSquares}. We use NumPy notation for element and row indexing, i.e. if $X$ is a matrix, then $X_i$ denotes the $i$'th row of $X$. $\|\cdot\|_F$ denotes the Frobenius norm.

\begin{proposition}[\textbf{Scalar case}]\label{sandwichScalar}
    Let $R \in \bbR^{n \times d}, W\in \bbR^{D\times d},  A \in \bbR,$ and $X \in \bbR^{n \times D}$. Let $\lambda > 0$. Then the minimum of 
    \begin{align*}
        J( A) = \frac{1}{n} \sum_{i=1}^n \big\| R_i - W^\top A X_i \big\|^2 + \lambda  A^2
    \end{align*}
    is uniquely attained at
    \begin{align*}
         A_{\textnormal{scalar}} 
        &= \frac{\langle R, XW\rangle_F}{\|X W\|_F^2 + n\lambda} 
        = \frac{\frac{1}{n}\sum_{i=1}^n \langle W^\top X_i,  R_i\rangle}{\frac{1}{n}\sum_{i=1}^n \|W^\top X_i\|^2 + \lambda }.
    \end{align*}
\end{proposition}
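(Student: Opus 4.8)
The plan is to treat $J$ as a quadratic function of the single real variable $A$ and minimize it directly by one-variable calculus. First I would use that, since $A\in\bbR$ is a scalar, $W^\top A X_i = A\,W^\top X_i$, so expanding the squared norm gives
\begin{align*}
    J(A) = \frac{1}{n}\sum_{i=1}^n \big( \|R_i\|^2 - 2A\,\langle R_i, W^\top X_i\rangle + A^2\|W^\top X_i\|^2 \big) + \lambda A^2.
\end{align*}
Collecting powers of $A$ writes this as $J(A) = c_0 - 2c_1 A + c_2 A^2$ with
\begin{align*}
    c_1 = \frac{1}{n}\sum_{i=1}^n \langle W^\top X_i, R_i\rangle, \qquad c_2 = \frac{1}{n}\sum_{i=1}^n \|W^\top X_i\|^2 + \lambda,
\end{align*}
and $c_2 > 0$ strictly because $\lambda > 0$. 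Hence $J$ is a strictly convex one-dimensional quadratic with a unique minimizer.

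Second, I would set $J'(A) = -2c_1 + 2c_2 A = 0$, giving $A = c_1/c_2$, which is exactly the middle expression in the claimed formula; uniqueness follows from strict convexity (equivalently $J''(A) = 2c_2 > 0$).

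The only remaining step — the closest thing to an obstacle here, though it is purely routine notation-matching — is to recognize the row-wise sums as Frobenius quantities. Since $W \in \bbR^{D\times d}$ and $X_i \in \bbR^D$ is the $i$th row of $X$, the $i$th row of $XW$ equals $(W^\top X_i)^\top$; therefore $\sum_{i=1}^n \langle R_i, W^\top X_i\rangle = \langle R, XW\rangle_F$ and $\sum_{i=1}^n \|W^\top X_i\|^2 = \|XW\|_F^2$. Substituting these identities and multiplying numerator and denominator of $c_1/c_2$ by $n$ yields the stated closed form $A_{\textnormal{scalar}} = \langle R, XW\rangle_F / (\|XW\|_F^2 + n\lambda)$. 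I do not anticipate any genuine difficulty: the argument is elementary one-variable optimization together with bookkeeping between row-wise and Frobenius notation.
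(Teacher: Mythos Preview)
Your proof is correct and follows essentially the same approach as the paper: both recognize $J$ as a strictly convex one-variable quadratic in $A$, differentiate and solve, and check the second derivative for uniqueness. The only cosmetic difference is that the paper first rewrites $J$ in Frobenius form as $\frac{1}{n}\|XAW - R\|_F^2 + \lambda A^2$ before differentiating, whereas you expand row-wise first and translate to Frobenius notation at the end.
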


\begin{proof}
    We first rewrite the objective $J( A)$ using the Frobenius norm
    \begin{align*}
        J( A) = \frac{1}{n} \| X  A W - R \|_F^2 + \lambda  A^2.
    \end{align*}
    Differentiating with respect to $ A \in \bbR$ and setting $J'( A) = 0$ gives
    \begin{align*}
        &0 = J'( A) =  \frac{2}{n}(XW)^\top ( A X W - R) + 2\lambda A  \\
        &\iff  A \|XW\|^2_F + n \lambda  A = \langle XW, R \rangle_F,
    \end{align*}
    from which the result follows by factoring out $ A$. Note that the second derivative is
\begin{align*}
    J''( A) = \frac{2}{n} \|XW\|^2_F + 2\lambda > 0,
\end{align*}
hence the problem is convex and the local minima is the unique global minima.
\end{proof}

Python NumPy code for this is
\begin{lstlisting}[
  language=Python,
  backgroundcolor=\color{lightgray},
  basicstyle=\ttfamily\footnotesize,
  commentstyle=\color{green},
  stringstyle=\color{red},
  numbers=left,
  numberstyle=\tiny\color{gray},
  stepnumber=1,
  numbersep=5pt,
  showspaces=false,
  breaklines=true,
  frame=single,
  framesep=5pt,
  xleftmargin=17pt,
  tabsize=4,
  numbersep=10pt,
  numberstyle=\tiny\color{gray},
  numbers=left
]
XW = X @ W
top = np.sum(R * XW) / n
bot = np.sum(XW * XW) / n
A = top / (bot + l2_reg)
\end{lstlisting}

\begin{proposition}[\textbf{Diagonal case}]\label{sandwichDiag}
    Let $R \in \bbR^{n \times d}, W\in \bbR^{D\times d},  A = \textnormal{diag}(a_1, ..., a_D) \in \bbR^{D \times D},$ and $X \in \bbR^{n \times D}$. Let $\lambda > 0$. Then the minimum of 
    \begin{align*}
        J( A) = \frac{1}{n} \sum_{i=1}^n \big\| R_i - W^\top A X_i \big\|^2 + \lambda \| A\|^2_F
    \end{align*}
    is uniquely attained by the solution to the system of linear equations
    \begin{align*}
        b = (C+ \lambda I) A,
    \end{align*}
    where
    \begin{align*}
        C = W W^\top \odot X^\top X,  \qquad \qquad b = \textnormal{diag}(W R^\top X).
    \end{align*}
\end{proposition}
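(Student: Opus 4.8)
The plan is to reduce the problem to an ordinary ridge least-squares problem in the vector $a = (a_1,\dots,a_D)^\top = \textnormal{diag}( A)$. First I would observe that for diagonal $ A$ one has $ A X_i = a \odot X_i = \textnormal{diag}(X_i)\,a$, hence $W^\top  A X_i = M_i^\top a$, where $M_i := \textnormal{diag}(X_i)\,W \in \bbR^{D\times d}$ depends only on the data. Substituting this into $J$ turns it into $J(a) = \frac{1}{n}\sum_{i=1}^n \| R_i - M_i^\top a\|^2 + \lambda\|a\|^2$, a convex quadratic in $a\in\bbR^D$. This is exactly the shape of a multi-output ridge regression with (sample-dependent) design matrices $M_i^\top$.

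Next I would write down the first-order stationarity condition. Differentiating and setting the gradient to zero gives the normal equations $\big(\frac{1}{n}\sum_i M_i M_i^\top + \lambda I\big)\,a = \frac{1}{n}\sum_i M_i R_i$. The remaining task is purely to identify the two aggregates with the quantities in the statement. For the Gram part, $(M_i M_i^\top)_{m,k} = (X_i)_m (X_i)_k (WW^\top)_{m,k}$, so averaging over $i$ gives $\sum_i M_i M_i^\top = WW^\top \odot X^\top X = C$ up to the normalization factor, by the definition of the Hadamard product. For the right-hand side, $(M_i R_i)_m = (X_i)_m (W R_i)_m$, and summing over $i$ produces the $m$-th diagonal entry of $W R^\top X$; hence $\sum_i M_i R_i = \textnormal{diag}(W R^\top X) = b$ up to normalization. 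Clearing the common factor $\frac{1}{n}$ (equivalently, absorbing it into $\lambda$) yields exactly $b = (C + \lambda I)\,a$ as claimed.

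Finally I would establish that this stationary point is the unique global minimizer. The Hessian of $J$ in $a$ is $\frac{2}{n}\sum_i M_i M_i^\top + 2\lambda I$, which is a positive multiple of $C + \lambda I$ up to constants. Since $WW^\top \succeq 0$ and $X^\top X \succeq 0$, the Schur product theorem gives $C = WW^\top \odot X^\top X \succeq 0$, so with $\lambda>0$ the Hessian is positive definite; thus $J$ is strictly convex, $C+\lambda I$ is invertible, and the unique solution of $b = (C+\lambda I)a$ is the global minimizer. I do not expect a genuine obstacle: the only parts requiring care are the index bookkeeping that turns $\sum_i M_i M_i^\top$ into the Hadamard product $WW^\top \odot X^\top X$ and $\sum_i M_i R_i$ into the $\textnormal{diag}(W R^\top X)$ pattern, and the invocation of the Schur product theorem to guarantee invertibility.
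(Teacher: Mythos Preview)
Your proposal is correct and follows essentially the same approach as the paper: differentiate $J$ with respect to the entries of $a$, identify the resulting normal equations with $C = WW^\top \odot X^\top X$ and $b = \textnormal{diag}(WR^\top X)$, and appeal to strict convexity for uniqueness. Your packaging via $M_i = \textnormal{diag}(X_i)W$ and the explicit invocation of the Schur product theorem are slightly cleaner than the paper's componentwise expansion, but the substance is identical; note also that the $\tfrac{1}{n}$ normalization you flag is indeed handled loosely in the paper's statement (the code divides both $C$ and $b$ by $n$), so your remark about absorbing it into $\lambda$ is appropriate.
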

\begin{proof}
We expand $J( A)$ and find that
\begin{align*}
    J( A) 
        &= J(a_1, ..., a_D) \\
        &= \frac{1}{n} \sum_{i=1}^n \big\| R_i - W^\top A X_i \big\|^2 + \lambda \| A\|^2_F \\
        &= \frac{1}{n} \sum_{i=1}^n \big( R_i - W^\top[a_1 X_{i,1}, ...,a_D X_{i,D} ]^\top\big)^\top\big( R_i - W^\top[a_1 X_{i,1}, ...,a_D X_{i,D} ]^\top\big) + \lambda \sum_{k=1}^D a_k^2.
\end{align*}
Differentiating with respect to a specific $ a_k$ gives
\begin{align*}
    0   &= \frac{\partial J( a_1, ...,  a_D)}{\partial  a_k} 
        = \frac{1}{n} \sum_{i=1}^n -2\big( W_k X_{i,k}\big)^\top\big( R_i - W^\top[ a_1 X_{i,1}, ..., a_D X_{i,D} ]^\top\big) + 2\lambda a_k \\
        &\qquad\iff \sum_{i=1}^n (W_k X_{i,k})^\top R_i = \sum_{i=1}^n \bigg[(W_k X_{i,k})^\top W^\top  A X_i + \lambda a_k\bigg] \\
        &\qquad\qquad\qquad\qquad\qquad\qquad = \sum_{i=1}^n \bigg[\lambda a_k + \sum_{j=1}^D (W_k X_{i,k})^\top W_j X_{i,j}  a_j \bigg] .
\end{align*}
This implies that the value of $ A$ which minimizes the objective is the solution to the system of linear equations given by
\begin{align*}
    b = (C + \lambda I) A,
\end{align*}
where for $k, j \in [D]$
\begin{align*}
    C_{k,j} =  \sum_{i=1}^n \big( W_k X_{i,k} \big)^\top W_j X_{i,j}, \qquad \qquad   b_k = \sum_{i=1}^n R_i^\top W_k X_{i,k}.
\end{align*}
Simplifying we obtain that
\begin{align*}
    C = W W^\top \odot X^\top X, \qquad \qquad     b = \textnormal{diag}(W R^\top X).
\end{align*}
This solution is unique since the objective is strictly convex with hessian given by $\frac{2}{n}C + 2 \lambda I$.
\end{proof}

Python code for this is
\begin{lstlisting}[
  language=Python,
  backgroundcolor=\color{lightgray},
  basicstyle=\ttfamily\footnotesize,
  keywordstyle=\color{blue},
  commentstyle=\color{green},
  stringstyle=\color{red},
  numbers=left,
  numberstyle=\tiny\color{gray},
  stepnumber=1,
  numbersep=5pt,
  showspaces=false,
  breaklines=true,
  frame=single,
  framesep=5pt,
  xleftmargin=17pt,
  tabsize=4,
  numbersep=10pt,
  numberstyle=\tiny\color{gray},
  numbers=left
]
b = np.mean( (R @ W.T) * X, axis=0)
C = (W @ W.T) * (X.T @ X) / n
A = np.linalg.solve(C + l2_reg * np.eye(D), b)
\end{lstlisting}

\begin{proposition}[\textbf{Dense case}]\label{sandwichDense}
    Let $R \in \bbR^{n \times d}, W\in \bbR^{D\times d},  A \in \bbR^{D \times p},$ and $X \in \bbR^{n \times D}$. Let $\lambda > 0$. Then the minimum of 
    \begin{align*}
        J( A) 
            &= \frac{1}{n} \sum_{i=1}^n \big\| r_i - W^\top  A x_i \big\|^2 + \sum_{k=1}^D\sum_{j=1}^p \lambda  A_{k,j}^2 \\
            &= \frac{1}{n}\| W^\top  A X^\top - R^\top\|^2_F + \lambda \| A\|^2_F
    \end{align*}
    is uniquely obtained by solving the system of linear equations given by
    \begin{align*} 
       W R^\top X    =  W W^\top  A X^\top X + \lambda n  A,
    \end{align*}
    which can be solved using the spectral decompositions $W W^\top = U \Lambda^W U^\top$  and $X^\top X = V \Lambda^X V^\top$,
    \begin{align*}
         A_{\textnormal{dense}} = U \bigg[ U^\top W R^\top X V \oslash \bigg(\lambda n \mathbf{1} + \textnormal{diag}(\Lambda^W) \otimes \textnormal{diag}(\Lambda^X)\bigg) \bigg] V^\top,
    \end{align*}
    where $\oslash$ denotes element-wise division, $\otimes$ is the outer product, and $\mathbf{1}$ is a matrix of ones.
\end{proposition}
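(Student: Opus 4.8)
The plan is to treat $J$ as a strictly convex quadratic in the matrix variable $A$, locate its unique stationary point via the matrix gradient, and then solve the resulting generalized Sylvester equation in closed form by simultaneous diagonalization.

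First I would compute the gradient of $J(A) = \frac{1}{n}\|W^\top A X^\top - R^\top\|_F^2 + \lambda\|A\|_F^2$. Using the identities $\nabla_A \|MAN - C\|_F^2 = 2M^\top(MAN - C)N^\top$ and $\nabla_A \|A\|_F^2 = 2A$ with $M = W^\top$, $N = X^\top$, $C = R^\top$, this gives $\nabla_A J(A) = \frac{2}{n}\big(WW^\top A X^\top X - W R^\top X\big) + 2\lambda A$. Setting $\nabla_A J(A)=0$ and clearing the factor $2/n$ yields precisely the stated normal equation
\[
 W R^\top X = W W^\top A X^\top X + \lambda n A .
\]

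Next I would establish existence and uniqueness of the minimizer. The term $A \mapsto \|W^\top A X^\top - R^\top\|_F^2$ is the squared norm of an affine map, hence convex, while $A\mapsto \lambda\|A\|_F^2$ is strictly convex and coercive since $\lambda>0$; therefore $J$ is strictly convex and coercive, so it has a unique global minimizer, which is necessarily the unique solution of the stationary equation above (equivalently, the Hessian of $J$ as a quadratic in $\mathrm{vec}(A)$ is positive definite). Finally, to obtain the explicit formula I would plug in the spectral decompositions $WW^\top = U\Lambda^W U^\top$ and $X^\top X = V\Lambda^X V^\top$ with $U,V$ orthogonal, and change variables to $\widetilde A := U^\top A V$. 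Multiplying the normal equation on the left by $U^\top$ and on the right by $V$ turns it into $U^\top W R^\top X V = \Lambda^W \widetilde A\,\Lambda^X + \lambda n\,\widetilde A$. Since $\Lambda^W,\Lambda^X$ are diagonal, this decouples entrywise: the $(k,j)$ component reads $(U^\top W R^\top X V)_{kj} = \big(\lambda n + \Lambda^W_{kk}\Lambda^X_{jj}\big)\widetilde A_{kj}$. Because $WW^\top\succeq 0$ and $X^\top X\succeq 0$, each factor satisfies $\lambda n + \Lambda^W_{kk}\Lambda^X_{jj} \ge \lambda n > 0$, so I may divide to get $\widetilde A = \big(U^\top W R^\top X V\big)\oslash\big(\lambda n\mathbf{1} + \textnormal{diag}(\Lambda^W)\otimes\textnormal{diag}(\Lambda^X)\big)$, hence $A_{\textnormal{dense}} = U\widetilde A V^\top$, as claimed; this also re-proves uniqueness, since the decoupled system has exactly one solution.

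I expect the only real obstacle to be the passage from the matrix normal equation to a solvable form: the operator $A \mapsto WW^\top A X^\top X + \lambda n A$ is a generalized Sylvester operator whose invertibility is not obvious a priori, but simultaneous diagonalization makes it transparent and at the same time delivers the explicit inverse. The remaining ingredients — the matrix calculus for the gradient and the convexity bookkeeping — are routine.
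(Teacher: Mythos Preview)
Your proposal is correct and follows essentially the same route as the paper: compute the matrix gradient to obtain the normal equation $WR^\top X = WW^\top A X^\top X + \lambda n A$, argue uniqueness via strict convexity of $J$, then diagonalize both $WW^\top$ and $X^\top X$ and solve the resulting decoupled system entrywise in the $\widetilde A = U^\top A V$ coordinates. If anything, your write-up is slightly more careful in explicitly noting that $\lambda n + \Lambda^W_{kk}\Lambda^X_{jj} > 0$ because both Gram matrices are positive semidefinite.
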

\begin{proof}
    Using the gradient chain rule we find that
    \begin{align*}
        0 
        &= \nabla J( A) \\
        &= \frac{-2}{n} W(W^\top  A X^\top - R^\top)X + 2\lambda A\\
        \iff & W R^\top X   =  W W  A X^\top X + \lambda N  A.
    \end{align*}
    Letting $W W^\top = U \Lambda^W U^\top$  and $X^\top X = V \Lambda^X V^\top$ be spectral decompositions, and setting $\widetilde{ A} = U^\top  A V$ we see that
    \begin{align*}
        U^\top W R^\top X V = \Lambda^W \widetilde{ A}\Lambda^X + \lambda n \widetilde{ A}.
    \end{align*}
    By inspecting this equation element-wise we find that
    \begin{align*}
        (U^\top W R^\top X V)_{k,j} = \Lambda^W_{k,k}\widetilde{ A}_{k,j}\Lambda^X_{j,j} + \lambda n \widetilde{ A}_{k,j},
    \end{align*}
    which implies that 
    \begin{align*}
        \widetilde{ A}_{k,j} = \frac{(U^\top W R^\top X V)_{k,j}}{\Lambda^W_{k,k}\Lambda^X_{j,j} + N\lambda},
    \end{align*}
    from which the conclusion follows after change of basis $ A = U \widetilde{ A} V^\top$. The solution is unique since the problem is strictly convex, which is easily proved using the convexity of the Frobenius norm and the linearity of the matrix expressions involving $ A$, together with the strict convexity of the regularization term.
\end{proof}

Python code for this is

\begin{lstlisting}[
  language=Python,
  backgroundcolor=\color{lightgray},
  basicstyle=\ttfamily\footnotesize,
  keywordstyle=\color{blue},
  commentstyle=\color{green},
  stringstyle=\color{red},
  numbers=left,
  numberstyle=\tiny\color{gray},
  stepnumber=1,
  numbersep=5pt,
  showspaces=false,
  breaklines=true,
  frame=single,
  framesep=5pt,
  xleftmargin=17pt,
  tabsize=4,
  numbersep=10pt,
  numberstyle=\tiny\color{gray},
  numbers=left
]
SW, U = np.linalg.eigh(W @ W.T)
SX, V = np.linalg.eigh(X.T @ X)
A = (U.T @ W @ R.T @ X @ V)
A = A / (n*lambda_reg + SW[:, None]*SX[None, :])
A = U @ A @ V.T
\end{lstlisting}

\section{Functional Gradient Inner Product}\label{appendixInnerProductWithFunctionalGradient}

In this section we prove \cref{theoremSandwichedLeastSquares}, showing that minimizing the $L_2^D(\mu)$ inner product under a norm constraint for a simple random feature residual block is equivalent to solving a quadratically constrained least squares problem. We use the same matrix notation as in \cref{appendixSandwich}.

\begin{theorem}\label{theoremConstrainedLeastSquaresL2Dmu}
Let $\mu = \frac{1}{n} \sum_{i=1}^n \delta_{x_i}$ be an empirical measure, $h \in L_2^D(\mu)$, and $f \in L_2^p(\mu)$. Then solving
\begin{align*}
    \argmin\limits_{ A \in \bbR^{D \times p} \textnormal{ such that } \| A f\|_{L_2^D(\mu_n)} \leq 1} \langle h,  A f \rangle_{L^2(\mu_n)}
\end{align*}
is equivalent to solving the quadratically constrained least squares problem
\begin{align*}
    \frac{1}{n}\sum_{i=1}^n \| h(x_i) -  A f(x_i) \|^2, \qquad\textnormal{subject to}\qquad \frac{1}{n}\sum_{i=1}^n \|  A f(x_i) \|^2 = 1.
\end{align*}
In particular, when $F$ is of full rank, we obtain the closed form solution
\begin{align*}
     A = -\frac{\sqrt{n}}{\|H\|_F} H^\top F (F^\top F)^{-1},
\end{align*}
where $F \in \bbR^{n \times p}$ and $H \in \bbR^{n \times D}$ are the matrices given by $F_{i,j} = f(x_i)_j$ and $H_{i,k} = h(x_i)_k$.
\end{theorem}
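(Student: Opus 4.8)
I would prove the statement in two steps: (i) show that the norm-constrained inner-product minimization has the same minimizers as the displayed quadratically constrained least-squares (QCLS) problem, and (ii) solve the QCLS explicitly. Step (i) is an elementary identity; step (ii) is a projection computation. One caveat up front: as written the QCLS target should be read as $-h$ rather than $h$ (this is what produces the minus sign in the closed form), and with that reading the argument goes through; the same computation with the sign flipped recovers the $\argmax$ variant appearing in the main text.

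\textbf{Step (i).} The objective $A \mapsto \langle h, Af\rangle_{L_2^D(\mu)}$ is linear in $A$ and the feasible set $\{A : \|Af\|_{L_2^D(\mu)} \le 1\}$ is convex, so provided some feasible $A$ achieves a negative value (equivalently $PH \ne 0$ in the notation below; otherwise the infimum is $0$ and the problem is degenerate), the minimum is attained on the boundary $\|Af\|_{L_2^D(\mu)} = 1$. Using $\langle h, Af\rangle_{L_2^D(\mu)} = \frac{1}{n}\langle H, FA^\top\rangle_F$ and expanding,
\begin{align*}
    \frac{1}{n}\sum_{i=1}^n \big\| h(x_i) + Af(x_i) \big\|^2 = \|h\|_{L_2^D(\mu)}^2 + 2\,\langle h, Af\rangle_{L_2^D(\mu)} + \|Af\|_{L_2^D(\mu)}^2 ,
\end{align*}
so on the boundary the left-hand side equals $\|h\|_{L_2^D(\mu)}^2 + 1 + 2\langle h, Af\rangle_{L_2^D(\mu)}$, a constant plus twice the objective. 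Hence minimizing the inner product over the ball is equivalent to minimizing $\frac{1}{n}\sum_i \|h(x_i) + Af(x_i)\|^2$ subject to $\frac{1}{n}\sum_i \|Af(x_i)\|^2 = 1$, which is the displayed QCLS (with target $-h$).

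\textbf{Step (ii).} Write $B := FA^\top \in \bbR^{n\times D}$ (row $i$ is $Af(x_i)^\top$); as $A$ ranges over $\bbR^{D\times p}$, $B$ ranges over all matrices with $\operatorname{col}(B)\subseteq\operatorname{col}(F)$, and the QCLS becomes: minimize $\|H + B\|_F^2$ over such $B$ with $\|B\|_F^2 = n$. Let $P = F(F^\top F)^{-1}F^\top$ be the orthogonal projector onto $\operatorname{col}(F)$ (defined since $F$ has full column rank). Since $\operatorname{col}(B)$ and $\operatorname{col}(PH)$ both lie in $\operatorname{col}(F)$ while the columns of $(I-P)H$ are orthogonal to it, Pythagoras gives $\|H + B\|_F^2 = \|(I-P)H\|_F^2 + \|PH + B\|_F^2$, so we are left to minimize $\|PH + B\|_F^2$ over $\{B : \operatorname{col}(B)\subseteq\operatorname{col}(F),\ \|B\|_F^2 = n\}$; since $PH$ lies in that subspace, Cauchy--Schwarz forces $B^\star = -\sqrt{n}\,PH/\|PH\|_F$, and then $F^\top F\,A^\top = F^\top B^\star$ (using $F^\top P = F^\top$) yields $A^\star = -\sqrt{n}\,H^\top F(F^\top F)^{-1}/\|PH\|_F$. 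This equals the stated closed form precisely when $\|PH\|_F = \|H\|_F$, i.e. $\operatorname{col}(H)\subseteq\operatorname{col}(F)$ (automatic when $F$ has full row rank, so $P=I$); I would either add this as a hypothesis or keep the normalizer as $\|PH\|_F$.

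\textbf{Main obstacle.} There is no hard analysis here; the delicate part is constant-tracking. One must keep the sign of the target ($-h$ vs.\ $h$) consistent across the inner-product problem, the QCLS, and the closed form, and one must resist the shortcut of simply running an \emph{unconstrained} least-squares fit against the pre-normalized target $-\sqrt{n}\,h/\|h\|_{L_2^D(\mu)}$: that lands inside the ball $\{\|B\|_F^2 \le n\}$ rather than on its boundary unless the target already lies in $\operatorname{col}(F)$ — exactly the discrepancy between $\|H\|_F$ and $\|PH\|_F$. I would also dispose explicitly of the edge cases $PH = 0$ (the functional gradient is orthogonal to the random-feature span, so no improving block exists) and rank-deficient $F$ (handled in practice by the $\ell_2$ ridge term, which replaces $F^\top F$ by $F^\top F + \lambda I$ and $P$ by the corresponding smoothed operator).
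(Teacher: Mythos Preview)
Your proof is correct and takes a genuinely different route from the paper. For Step~(i) the paper argues exactly as you do (expand the square, observe that the constant and constraint terms drop out), but for Step~(ii) it proceeds via the Lagrangian: differentiating $J(A,\nu)=\tfrac{1}{n}\|H^\top-AF^\top\|_F^2-\nu\bigl(\tfrac{1}{n}\|AF^\top\|_F^2-1\bigr)$ yields $A=\tfrac{1}{1-\nu}H^\top F(F^\top F)^{-1}$, and substituting back into the constraint fixes $1-\nu=\pm\|H\|_F/\sqrt{n}$. Your approach is more geometric: reparametrize by $B=FA^\top$, project $H$ onto $\operatorname{col}(F)$, and apply Cauchy--Schwarz on the sphere. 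This is shorter and, crucially, exposes a point the paper glosses over: the correct normalizer is $\|PH\|_F$, not $\|H\|_F$. The paper's Lagrangian computation silently passes from $\|H^\top UU^\top\|_F$ to $\|H\|_F$, which is only valid when $UU^\top=I$, i.e.\ when $F$ has full \emph{row} rank so that $P=I$; your suggestion to either add that hypothesis or keep $\|PH\|_F$ in the formula is well taken. Your sign bookkeeping is also accurate: the paper handles the discrepancy by solving the QCLS with target $h$ (which \emph{maximizes} the inner product) and flipping the sign of the solution at the end, whereas you fold the sign into the target from the outset; both are fine, and your reading is the more literal one.
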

\begin{proof}
    Using the definition of the $L_2^D(\mu)$ norm for empirical measures, we find that
    \begin{align}\label{eqConstrainedLeastSquares1}
        \langle h,  A f\rangle_{L_2^D(\mu)} 
            = \frac{1}{n} \sum_{i=1}^n \langle h(x_i),  A f(x_i) \rangle 
            = \frac{1}{n} \sum_{i=1}^n \langle H_i,  A F_i \rangle.
    \end{align}
    Furthermore, the constraint can be expressed as
    \begin{align}\label{eqConstrainedLeastSquares2}
        \| A f\|^2_{L_2^D(\mu_n)} 
            = \frac{1}{n}\sum_{i=1}^n \|  A f(x_i) \|^2 
            = \frac{1}{n}\sum_{i=1}^n \|  A F_i \|^2
            = 1,
    \end{align}
    with equality instead of inequality, since we always obtain a bigger inner product in magnitude by normalizing by $\| A f\|^2_{L_2^D(\mu_n)}$. Minimizing \eqref{eqConstrainedLeastSquares1} subject to \eqref{eqConstrainedLeastSquares2} is equivalent to solving a constrained least squares problem, since we can write
    \begin{align*}
        \frac{1}{n}\sum_{i=1}^n \| H_i -  A F_i \|^2 
        &=  \frac{1}{n}\sum_{i=1}^n  \| H_i \|^2 - 2\langle H_i,  A F_i \rangle + \| A F_i \|^2 ,
    \end{align*}
    where we see that the first term $\|H_i\|^2$ is constant w.r.t $ A$, and the third term gives the constraint. Hence, minimizing the constrained least squares problem is equivalent to maximizing the inner product, and the solution to the original problem is obtained by multiplying the least squares solution by $-1$ since we are interested in the argmin rather than the argmax.


    Continuing, to solve the quadratically constrained least squares problem, we introduce the Lagrangian
    \begin{align*} 
        J( A, \nu) 
            & = \frac{1}{n}\sum_{i=1}^n | H_i -  A F_i |^2 - \nu\left( \frac{1}{n} \sum_{i=1}^n |  A F_i |^2 - 1 \right) \\
            &= \frac{1}{n} \|H^\top -  A F^\top\|^2_F - \nu \left( \frac{1}{n} \|  A F^\top\|^2_F - 1\right).
    \end{align*}
    Differentiating with respect to $ A$ gives
    \begin{align*}
        0 = \nabla_1 J( A, \nu) = \frac{-2}{n} (H^\top -  A F^\top)F - \nu \frac{2}{n}  A F^\top F,
    \end{align*}
    which implies that
    \begin{align*}
        H^\top F = (1-\nu)  A F^\top F.
    \end{align*}
    and
    \begin{align*}
         A &= \frac{1}{1-\nu} H^\top F (F^\top F)^{-1} \\
        &= \frac{1}{1-\nu} H^\top U \Lambda^{-1} V^\top,
    \end{align*}
    assuming that $\nu \neq 1$ and that $F$ is of full rank, with SVD decomposition $F = U \Lambda V^\top$. The constraint becomes
\begin{align*}
    n &= \| A F^\top\|^2_F \\
    &= \frac{1}{(1-\nu)^2}\| H^\top U \Lambda^{-1} V^\top V \Lambda U^\top\|^2_F \\
    &= \frac{1}{(1-\nu)^2}\| H \|^2_F,
\end{align*}
therefore $1-\nu = \pm  \frac{ \|H\|_F}{\sqrt{n}}$. The solution to the constraint least squares problem is obtained by using the positive sign, hence the solution to the original problem is given by the negative sign.

If $\nu = 1$, then $H^\top F = 0$, implying that $\langle H_i,  A F_i\rangle_{L_2^D(\mu)} = 0$ for all matrices $ A$. Hence the same closed-form solution holds in this case too.
\end{proof}
\begin{remark}
    It is clear from the proof how to augment the expression for $ A$ when $F$ is not of full rank. However, in practice we instead use ridge regression for increased numerical stability.
    A similar result as the above can be proven for ridge regression, albeit with a more complicated expression for $ A$ involving non-trivial combinations of $\Lambda$ and $\lambda$. We omit this detail here, and simply use ridge regression in practice. Note also that $F (F^\top F)^{-1}$ can be expressed as a pseudo-inverse of $F$, after suitable transpositions of the matrices involved.
\end{remark}

\section{Gradient Calculations}\label{appendixGradientCalculations}

For completeness, we derive the functional gradient used in GradientRFRBoost, for MSE loss, categorical cross-entropy loss, and binary cross-entropy loss. Recall that the functional gradient, in all cases, is given by
\begin{align*}
    \nabla_2 \calR(W, \Phi)(x) = \E_{\mu_{Y|X=x}}[W \nabla_1 L(W^\top\Phi(x), Y)],
\end{align*}
where $\nabla_i$ denotes the gradient with respect to the $i$'th argument.

\subsection{Mean Squared Error Loss}

For regression, we use mean squared error loss $l(x,y) = \frac{1}{2}\|x-y\|^2$. In this case, we find that
\begin{align*}
    \nabla_1 l(x,y) = x-y,
\end{align*}
hence
\begin{align*}
    \nabla_2 \calR(W, \Phi)(x) 
        &= \E_{Y|X=x}[W \nabla_1 l(W^\top\Phi(x), Y)] \\
        &= \E_{Y|X=x}[W (W^\top\Phi(x) - Y)].
\end{align*}
When $\mu = \sum_{i=1}^n \delta_{(x_i, y_i)}$ is an empirical measure, the above reads in matrix form as
\begin{align*}
    G = WW^\top X^\top  - W Y^\top
\end{align*}
where $X \in \bbR^{n \times D}$ and $Y \in \bbR^{n \times d}$ are the matrices given by $X_{i,j} = \Phi_j(x_i)$ and $Y_{i,k} = (y_i)_k$.

\subsection{Binary Cross-Entropy Loss}
Denote the binary cross-entropy (BCE) loss as $l(x, y) = -y \log(\sigma(x)) - (1 - y) \log(1 - \sigma(x)) $, where $\sigma(x) = \frac{1}{1 + e^{-x}}$ is the sigmoid function. The gradient of the BCE loss with respect to the logit $\sigma(x)$ is
\begin{align*}
    \frac{\partial l(x,y)}{\partial \sigma(x)} = -\frac{y}{\sigma(x)} + \frac{1-y}{1-\sigma(x)} = \frac{\sigma(x) - y}{\sigma(x)(1-\sigma(x))}.
\end{align*}
Using the chain rule together with the fact that $\sigma'(x) = \sigma(x)(1-\sigma(x))$ gives that
\begin{align*}
    \nabla_1 l(x, y) = \frac{\partial l(x,y)}{\partial \sigma(x)} \frac{\partial \sigma(x)}{\partial x}= \sigma(p(x)) - y,
\end{align*}
whence we obtain
\begin{align*}
    \nabla_2 \calR(W, \Phi)(x) = \E_{\mu_{Y|X=x}} \big[ W (\sigma(W^\top \Phi(x)) - y) \big].
\end{align*}

\subsection{Categorical Cross-Entropy Loss}

The analysis for the multi-class case is similar to the binary case. The cross-entropy loss $l : \bbR^K \times \{1, \ldots, K\} \to [0, \infty)$ for logits $x$ with true label $y$ is given by
\begin{align*}
    l(x, y) = -\log(s_y(x))
\end{align*}
where $s$ is the softmax function defined by
\begin{align*}
    s_y(x) = \frac{\exp(x_y)}{\sum_{j=1}^K \exp(x_j)}.
\end{align*}

We aim to prove that $\nabla_1 l(x,y) = p(x) - e_y$, where $e_y \in \bbR^K$ is the one-hot vector for $y$. To see this, consider for any $1 \leq k \leq K$ the following:
\begin{align*}
    \frac{\partial l(x, y)}{\partial x_k} &= -\frac{\partial}{\partial x_k} \log(s_y(x)) \\
    &= -\frac{1}{s_y(x)} \frac{\partial}{\partial x_k} s_y(x) \\
    &= -\frac{1}{s_y(x)} \frac{\partial}{\partial x_k} \frac{\exp(x_y)}{\sum_{j=1}^K \exp(x_j)} \\
    &= -\frac{1}{s_y(x)} \left( \frac{\mathbf{1}_{y=k} \exp(x_y) \sum_{j=1}^K \exp(x_j) - \exp(x_y) \exp(x_k)}{\left( \sum_{j=1}^K \exp(x_j) \right)^2} \right) \\
    &= -\frac{1}{s_y(x)} \left( \mathbf{1}_{y=k} s_y(x) - s_y(x) s_k(x) \right) \\
    &= s_k(x) - \mathbf{1}_{y=k}
\end{align*}
Since this holds for all $k$, we have that
\begin{align*}
    \nabla_1 l(x, y) = s(x) - e_y.
\end{align*}
Then,
\begin{align*}
    \nabla_2 \calR(W, \Phi)(x) &= \E_{\mu_{Y|X=x}}[W \nabla_1 l(W^\top \Phi(x), Y)] \\
    &= \E_{\mu_{Y|X=x}} [W (s(W^\top \Phi(x)) - e_Y)].
\end{align*}
When $\mu = \sum_{i=1}^n \delta_{(x_i, y_i)}$ is an empirical measure, the above reads in matrix form as
\begin{align*}
    G = W(P(X) - E_Y)^\top,
\end{align*}
where $P(X) \in \bbR^{n \times K}$ is the matrix given by $P(X)_{i, k} = s_k(W^\top \Phi(x_i))$, and $E_Y \in \bbR^{n \times K}$ is the matrix given by $(E_Y)_{i,k} = \mathbf{1}_{y_i = k}$.

\section{Excess Risk Bound}\label{appendixTheoreticalGuarantees}

In this section we study the excess risk bound of RFRBoost in the framework of Generalized Boosting \cite{2020GeneralizedBoosting}. We take a slightly different approach in our proofs which streamlines the process: instead of bounding the $\|\cdot\|_1$ norm of the rows of the weight matrices, we instead bound the maximum singular values $\sigma_\textnormal{max}$.

\subsection{Preliminaries}
We repeat the following definition from \cref{SectionRandomFeatureRepresentationBoosting}.

\begin{definition}[\citet{2020GeneralizedBoosting}] Let $\beta\in(0, 1]$ and $\epsilon \geq 0$. We say that $\calG_{t+1}$ satisfies the $(\beta, \epsilon)$-weak learning condition if there exists a $g \in \calG_{t+1}$ such that
\begin{align*}
    \frac{\big\langle g, -\nabla_2 \calR(W_t, \Phi_t) \big\rangle_{L_2^D(\mu)}}{\beta  \sup_{g \in \calG_{t+1}} \|g\|_{L_2^D(\mu)}} + \epsilon \geq  \big\| \nabla_2 \calR(W_t, \Phi_t) \big\|_{L_2^D(\mu)}.
\end{align*}
\end{definition}

Consider the sample-splitting variant of boosting, where at each boosting iteration we use an independent sample of size $\widetilde{n} = n/T$. Let $\mu_t = \sum_{i=1}^{\widetilde{n}} \delta_{(x_{t,i}, y_{t,i})}$ denote the empirical measure of the $t$-th independent sample. The risk bounds in the sequel depend on Rademacher complexities related to the class of weak feature transformations $\calG_t$ and set of linear predictors $\calW$, which we define below:
\begin{align*}
    \frakR(\calW, \calG_t) = \E_\rho \Bigg[  \sup_{\substack{W \in \mathcal{W} \\ g \in \calG_t}} \frac{1}{\widetilde{n}} \sum_{i=1}^{\widetilde{n}} \sum_{k=1}^d \rho_{i,k} [W^\top g(x_{t,i})]_k \Bigg],
\end{align*}
\begin{align*}
    \frakR(\calG_t) = \E_\rho\bigg[  \sup_{g \in \calG_t} \frac{1}{\widetilde{n}} \sum_{i=1}^{\widetilde{n}} \sum_{j=1}^D \rho_{i,j} [g(x_{t,i})]_k \bigg],
\end{align*}
where $\rho_{i,j}$ are Rademacher random variables, that is, independent random variables taking values $1$ and $-1$ with equal probability.

The excess risk bound of RFRBoost is based on the following result:

\begin{theorem}[\citet{2020GeneralizedBoosting}]\label{theoremGeneralizedBoostingExcessRiskBound}
Suppose that the loss $l$ is $L$-Lipschitz and $M$-smooth with respect to the first argument. Let the hypothesis set of linear predictors $\calW$ be such that all $W \in \calW$ satisfy $\lambda_\textnormal{min}(W^\top W) \geq \sigma_\textnormal{min}^2 > 0$ and $\lambda_\textnormal{max}(W^\top W) \leq \sigma_\textnormal{max}^2$. Moreover, suppose for all $t$ that $\calG_t$ satisfies the $(\beta, \epsilon_t)$-weak learning condition for $\mu_t$, and that all $g \in \calG_t$ are bounded with $\sup_X \| g(X)\|_2 \leq R$. Let the boosting learning rates $(\eta_t)_{t=1}^T$ be $\eta_t = ct^{-s}$ for some $s \in \big( \frac{\beta+1}{\beta+2},1\big)$ and $c>0$. Then both the exact-greedy and gradient-greedy representation boosting algorithms of \cref{subsectionGradientRepresentationBoosting} satisfy the following risk bound for any $W^*, \Phi^*$, and $a\in \big(0, \beta(1-s)\big)$, with probability at least $1-\delta$ over datasets of size $n$:
\begin{align*}
    \calR(W_T, \Phi_T) \leq \calR(W^*, \Phi^*) + \bigO\left( \frac{1}{T^a} + T^{2-s}\sqrt{\frac{\log{\frac{T}{\delta}}}{\widetilde{n}}}   \right) + 2\sum_{t=1}^T \eta_t\bigg( L\frakR(\calW, \calG_t) + L\frakR(\calG_t) + \epsilon_t \bigg).
\end{align*}
\end{theorem}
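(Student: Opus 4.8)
The plan is to reproduce and adapt the analysis of \citet{2020GeneralizedBoosting} to the singular-value parametrization of $\calW$ used here (bounding $\lambda_{\min}(W^\top W)$ and $\lambda_{\max}(W^\top W)$ rather than the row-wise $\ell_1$ norms of the original argument). I would split the excess risk $\calR(W_T,\Phi_T)-\calR(W^*,\Phi^*)$ into an \emph{optimization} component (how fast functional gradient descent on the representation drives the population risk down) and a \emph{statistical} component (the deviation between the empirical functional gradient computed on the fresh sample $\mu_t$ at layer $t$ and its population counterpart). First I would establish a per-step descent inequality: since $l$ is $M$-smooth in its first argument and $W^\top\Phi$ is linear in $\Phi$, the map $\Phi\mapsto\calR(W,\Phi)$ is $M\sigma_{\max}^2$-smooth in $L_2^D(\mu)$, so the update $\Phi_t=\Phi_{t-1}+\eta_t g_t$ gives
\[
\calR(W_{t-1},\Phi_t)\le\calR(W_{t-1},\Phi_{t-1})+\eta_t\langle g_t,\nabla_2\calR(W_{t-1},\Phi_{t-1})\rangle_{L_2^D(\mu)}+\tfrac{1}{2}M\sigma_{\max}^2\eta_t^2\|g_t\|_{L_2^D(\mu)}^2,
\]
and re-optimizing the head can only help, $\calR(W_t,\Phi_t)\le\calR(W_{t-1},\Phi_t)$; both exact-greedy and gradient-greedy updates feed into this same inequality.

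Next I would lower-bound the progress. The $(\beta,\epsilon_t)$-weak learning condition guarantees that the fitted $g_t$ satisfies $\langle g_t,-\nabla_2\calR\rangle\ge\beta\sup_g\|g\|(\|\nabla_2\calR\|-\epsilon_t)$, while $\sup_X\|g(X)\|_2\le R$ bounds $\|g_t\|$. Combined with the descent inequality this yields a per-step population decrease controlled from below by $\eta_t\|\nabla_2\calR\|$ minus an $\bigO(\eta_t^2R^2)$ curvature term and the slack $\eta_t\epsilon_t$. To convert a gradient-norm decrease into a bound relative to the \emph{arbitrary} comparator $(W^*,\Phi^*)$, I would invoke convexity of $\calR(W,\cdot)$ in $\Phi$ (inherited from convexity of $l$ in its first argument, which holds for the MSE and cross-entropy losses of interest) to get the first-order inequality $\langle-\nabla_2\calR,\Phi^*-\Phi_{t-1}\rangle\ge\calR(W_{t-1},\Phi_{t-1})-\calR(W_{t-1},\Phi^*)$ and hence, by Cauchy--Schwarz, a lower bound on $\|\nabla_2\calR\|$ in terms of the suboptimality $\delta_{t-1}:=\calR(W_{t-1},\Phi_{t-1})-\calR(W^*,\Phi^*)$. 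This produces a nonlinear recursion of the shape $\delta_t\le\delta_{t-1}-c_1\eta_t\,\varphi(\delta_{t-1})+c_2\eta_t^2R^2+(\text{statistical error})+2\eta_t\epsilon_t$.

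For the statistical component I would, at each layer $t$, replace the population inner product $\langle g,\nabla_2\calR(W_{t-1},\Phi_{t-1})\rangle$ by its empirical version on $\mu_t$ up to a uniform deviation. Expressing that deviation as a supremum over $g\in\calG_t$ and $W\in\calW$ of an empirical process, symmetrization together with the Talagrand contraction lemma (using the $L$-Lipschitzness of $l$) bounds it by $L\frakR(\calW,\calG_t)+L\frakR(\calG_t)$, exactly the Rademacher terms in the statement. Because sample splitting makes the $T$ chunks independent, a bounded-differences argument plus a union bound over the $T$ layers turns the sum of per-layer fluctuations into the high-probability term $\bigO\big(T^{2-s}\sqrt{\log(T/\delta)/\widetilde n}\big)$; the extra polynomial factor comes from controlling the representation growth $\|\Phi_t\|\le R\sum_{\tau\le t}\eta_\tau=\bigO(T^{1-s})$, which enters both the effective Lipschitz constant of the composed network and the range of the concentrated quantities.

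Finally I would solve the recursion. Substituting $\eta_t=ct^{-s}$, the curvature terms $\sum_t\eta_t^2$ converge since $s>\tfrac{\beta+1}{\beta+2}>\tfrac12$, while $\sum_t\eta_t$ still diverges ($s<1$) so that $\delta_t$ keeps being driven down; a standard induction on the recursion then gives $\delta_T=\bigO(T^{-a})$ for any $a\in(0,\beta(1-s))$, and collecting the slack produces the additive $2\sum_t\eta_t\epsilon_t$ (absorbed into the Rademacher sum as written). The main obstacle is the coupling between the recursive growth of $\|\Phi_t\|$ and the statistical concentration: one must keep every constant independent of $T$ and $n$ while the smoothness constant, the Rademacher complexities, and the range of the empirical process all drift with $t$, and simultaneously solve the time-varying nonlinear recursion in $\delta_t$. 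Verifying that the admissible window $s\in(\tfrac{\beta+1}{\beta+2},1)$, $a\in(0,\beta(1-s))$ is precisely what balances the diverging progress against the summable curvature and accumulating statistical error is the delicate part.
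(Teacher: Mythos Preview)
The paper does not prove this theorem. It is quoted verbatim as a result of \citet{2020GeneralizedBoosting} and used as a black box: the paper's own contribution in Appendix~D is to compute the Rademacher complexities $\frakR(\calG_t)$ and $\frakR(\calW,\calG_t)$ for the specific RFRBoost hypothesis classes (Lemmas~\ref{lemmaRad2} and~\ref{lemmaRad1}) and to plug those bounds into this theorem to obtain Theorem~\ref{theoremRFRBoostRegretBound}. There is therefore no proof in the paper to compare your proposal against.

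That said, two remarks on your sketch relative to the stated hypotheses. First, to pass from a bound on $\|\nabla_2\calR\|$ to control of the suboptimality $\calR(W_{t-1},\Phi_{t-1})-\calR(W^*,\Phi^*)$ you invoke convexity of $\Phi\mapsto\calR(W,\Phi)$, which you justify by convexity of $l$ in its first argument. The theorem as stated assumes only that $l$ is $L$-Lipschitz and $M$-smooth, not convex, so at the stated generality this step is a gap. Second, your outline never uses the hypothesis $\lambda_{\min}(W^\top W)\ge\sigma_{\min}^2>0$. Since $\nabla_2\calR(W,\Phi)=W\,\E[\partial_1 l]$, the lower singular-value bound is exactly what relates $\|\nabla_2\calR\|$ to the gradient in prediction space and prevents the representation gradient from vanishing spuriously; in the original argument this is what replaces (or supplements) the convexity step you rely on. If you intend to reproduce the proof rather than cite it, you should identify precisely where $\sigma_{\min}$ enters.
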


We aim to apply the excess risk bound of \cref{theoremGeneralizedBoostingExcessRiskBound} in the setting of RFRBoost. For simplicity, we consider only the traditional ResNet structure where the random feature layer $f_t(x) = \tanh (B \Phi_t(x))$ only takes as input the previous layer of the ResNet and not the raw features. This corresponds to the weak feature transformation hypothesis class
\begin{align*}
    \calG_t = \{ h \circ \Phi_{t-1}: h \in \calH  \}
\end{align*}
where $\calH$ is the set of simple residual blocks 
$$\calH = \{ x \mapsto A \tanh (Bx): \lambda_\textnormal{max}( A) \textnormal{ and } \lambda_\textnormal{max}(B) \textnormal{ bounded by } \lambda \}.$$
Here $\lambda_\textnormal{max}$ denotes the maximum singular value. We will use the properties that $\| A\|_2 \leq \lambda$, $\|  A_k \|_1 \leq \lambda \sqrt{p}$, $\|  A_k \|_\infty \leq \lambda$, $\| B_j \|_1 \leq \lambda \sqrt{D}$, and $\| B_j \|_\infty \leq \lambda$. Here $ A_k$ denotes the $k$-th row of $ A$, and $\|\cdot\|_p$ the $\ell_p$ vector norm or matrix spectral norm.

To prove \cref{theoremRFRBoostRegretBound}, we need to compute the Rademacher complexities of RFRBoost, and verify that our class of weak learners satisfy all the assumptions of \cref{theoremGeneralizedBoostingExcessRiskBound}. The only critical assumption to check is that $\sup_{g_t\in \calG_t, x\in\calX}\|g_t(x)\|_2$ is bounded. This is the case for our particular model class, since $\|g_t(x)\|_2 = \|  A_t \tanh{B_t \Phi_{t-1}(x)}\|_2 \leq \| A_t\|_2 \|\tanh{B_t \Phi_{t-1}(x)}\|_2 \leq \lambda \sqrt{p}$, which follows from basic properties of vector and matrix norms.

The following lemma will prove useful for the computation of the Rademacher complexity of RFRBoost.

\begin{lemma}[\citet{2018LearningAndGeneralizationInOverparameterizedNeuralNetworks}, Proposition A.12]\label{lemmaAppendixRademacher}
Let $\sigma : \bbR \to \bbR$ be a 1-Lipschitz function. Let $\calF_1, ..., \calF_m$ be sets of functions $\calX \to \bbR$ and suppose for each $j \in [m]$ there exists a function $f_j \in \calF_j$ satisfying $\sup_{x \in \calX} | \sigma(f_j(x))|\leq R$. Then
\begin{equation*}
    \calF = \left\{ x \mapsto \sum_{j=1}^m v_j \sigma(f_j(x)) : f_j \in \calF_j, \|v\|_1 \leq C, v\in\bbR^m, \|v\|_\infty \leq A  \right\}
\end{equation*}
satisfies
\begin{align*}
    \frakR(\calF) \leq 2A \sum_{j=1}^m \frakR(\calF_j) + \bigO\left( \frac{CR \log m}{\sqrt{n}}\right).
\end{align*}
\end{lemma}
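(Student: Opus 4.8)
The plan is a reduction-to-reference-point argument that uses the two norm constraints for two different purposes. For each $j$ fix (using the hypothesis) a reference $\bar f_j\in\calF_j$ with $\sup_{x\in\calX}|\sigma(\bar f_j(x))|\le R$, and split $\sigma(f_j)=\sigma(\bar f_j)+\big(\sigma(f_j)-\sigma(\bar f_j)\big)$. The $\ell_1$ constraint $\|v\|_1\le C$ will handle the fixed reference part through a sub-Gaussian maximal inequality over the $m$ reference functions, which is where the $\log m$ and the $R$ enter; the $\ell_\infty$ constraint $\|v\|_\infty\le A$ will let the fluctuation part decouple across the $m$ coordinates, after which Talagrand's contraction principle produces $2A\sum_j\frakR(\calF_j)$.

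In detail, fix the Rademacher signs $\rho$ and, for $f=\sum_j v_j\sigma(f_j)\in\calF$, write $\frac1n\sum_i\rho_i f(x_i)=\sum_j v_j a_j$ with $a_j=\frac1n\sum_i\rho_i\sigma(f_j(x_i))$ and $\bar a_j=\frac1n\sum_i\rho_i\sigma(\bar f_j(x_i))$; note $\bar a_j$ does not depend on the choice of $f_j$. Splitting $\sum_j v_j a_j=\sum_j v_j\bar a_j+\sum_j v_j(a_j-\bar a_j)$ and taking suprema separately gives $\frakR(\calF)\le\E_\rho\sup_{\|v\|_1\le C}\sum_j v_j\bar a_j+\E_\rho\sup_{\|v\|_\infty\le A,\,(f_j)}\sum_j v_j(a_j-\bar a_j)$. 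For the first term, $\sup_{\|v\|_1\le C}\sum_j v_j\bar a_j=C\max_j|\bar a_j|$; each $\bar a_j$ is a Rademacher average of the fixed bounded values $\sigma(\bar f_j(x_i))/n$, hence sub-Gaussian with variance proxy at most $R^2/n$, so $\E_\rho\max_j|\bar a_j|\le\sqrt{2R^2\log(2m)/n}$ and this term is $\bigO(CR\sqrt{\log m}/\sqrt n)=\bigO(CR\log m/\sqrt n)$.

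For the fluctuation term, since $|v_j|\le A$ independently, the supremum decouples: $\sup_{\|v\|_\infty\le A,\,(f_j)}\sum_j v_j(a_j-\bar a_j)=A\sum_{j=1}^m\sup_{f_j\in\calF_j}|a_j-\bar a_j|$, i.e. its expectation equals $A\sum_j\frac1n\E_\rho\sup_{f_j}\big|\sum_i\rho_i\big(\sigma(f_j(x_i))-\sigma(\bar f_j(x_i))\big)\big|$. Now $\sigma(f_j(x_i))-\sigma(\bar f_j(x_i))=\psi_{j,i}\big((f_j-\bar f_j)(x_i)\big)$ with $\psi_{j,i}(w):=\sigma(w+\bar f_j(x_i))-\sigma(\bar f_j(x_i))$, which is $1$-Lipschitz and satisfies $\psi_{j,i}(0)=0$; as $f_j$ ranges over $\calF_j$, $g_j:=f_j-\bar f_j$ ranges over $\calF_j-\bar f_j$, a set containing $0$. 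Writing $X_{g_j}=\sum_i\rho_i\psi_{j,i}(g_j(x_i))$, we have $\sup_{g_j}|X_{g_j}|=\max\big(\sup_{g_j}X_{g_j},\,\sup_{g_j}(-X_{g_j})\big)$, and both arguments are $\ge X_0=0$, hence $\sup_{g_j}|X_{g_j}|\le\sup_{g_j}X_{g_j}+\sup_{g_j}(-X_{g_j})$. Applying the one-sided Ledoux--Talagrand contraction principle to each (the $\psi_{j,i}$ are $1$-Lipschitz and vanish at $0$), using that $-\rho$ has the same law as $\rho$, and the translation identity $\frakR(\calF_j-\bar f_j)=\frakR(\calF_j)$, each of the two terms is $\le n\,\frakR(\calF_j)$, so $\frac1n\E_\rho\sup_{f_j}|\cdots|\le 2\,\frakR(\calF_j)$. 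Summing over $j$ and combining with the first-term bound yields $\frakR(\calF)\le 2A\sum_{j=1}^m\frakR(\calF_j)+\bigO(CR\log m/\sqrt n)$.

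The main obstacle is the contraction step on the fluctuation term: one must rewrite $\sigma(f_j)-\sigma(\bar f_j)$ as a point-dependent $1$-Lipschitz, origin-preserving transformation of $f_j-\bar f_j$, invoke the correct one-sided version of the comparison inequality, and keep the absolute-value loss to exactly a factor $2$ so that the leading coefficient is genuinely $2A$ and not larger. The rest --- the maximal inequality for $\max_j|\bar a_j|$, the coordinate decoupling under the $\ell_\infty$ ball, the translation invariance of Rademacher complexity, and absorbing $\sqrt{\log m}$ into $\log m$ --- is routine bookkeeping.
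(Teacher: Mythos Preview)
The paper does not actually prove this lemma; it is quoted verbatim as Proposition~A.12 of \citet{2018LearningAndGeneralizationInOverparameterizedNeuralNetworks} and used as a black box in the subsequent Rademacher computations. So there is no ``paper's own proof'' to compare against.

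Your argument is correct and is essentially the standard one. The split into reference part plus fluctuation part is exactly how such bounds are obtained: the $\ell_1$ constraint converts the reference term into $C\max_j|\bar a_j|$, which the sub-Gaussian maximal inequality handles (giving $O(CR\sqrt{\log m}/\sqrt n)$, which is indeed dominated by the stated $O(CR\log m/\sqrt n)$); the $\ell_\infty$ constraint decouples the fluctuation term across coordinates, after which the point-dependent Ledoux--Talagrand contraction with the origin-preserving $1$-Lipschitz maps $\psi_{j,i}$ reduces each coordinate to $2\,\frakR(\calF_j)$. Your handling of the absolute value via $\sup|X|\le\sup X+\sup(-X)$ (valid because $0$ lies in the shifted class) and the translation invariance $\frakR(\calF_j-\bar f_j)=\frakR(\calF_j)$ are both accurate, and together they keep the leading constant at exactly $2A$.
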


\subsection{RFRBoost Rademacher Computations}

The proof follows along similar lines as \citet{2020GeneralizedBoosting}, however, our hypothesis class is larger and the proof has to be adjusted accordingly. Our proof additionally differs by bounding the hypothesis class by the largest singular values rather than $\ell_1$ norms, which we believe is more natural and leads to more representative bounds.

\begin{lemma}\label{lemmaRad2}
    Under the assumptions of \cref{theoremRFRBoostRegretBound}, the Rademacher complexity $\frakR(\calG_t)$ satisfies
    \begin{align*}
        \frakR(\calG_t) = \bigO\left(\frac{pD^\frac{3}{2}\log(D) t^{1-s}}{ \sqrt{\widetilde{n}}}\right).
    \end{align*}
\end{lemma}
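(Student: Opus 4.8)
The plan is to bound $\frakR(\calG_t)$ by peeling the weak feature transformation $g = A\tanh(B\Phi_{t-1})$ apart one layer at a time: use the coordinate structure to reduce to scalar-valued function classes, invoke \cref{lemmaAppendixRademacher} to strip off the outer $A\tanh(B\,\cdot)$, handle the remaining linear-in-$\Phi_{t-1}$ class with a standard argument, and finally control how large $\Phi_{t-1}$ can be over the data.

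First I would reduce to real-valued classes. Since the supremum of a sum is at most the sum of suprema,
\begin{align*}
\frakR(\calG_t) \;\le\; \sum_{j=1}^{D}\E_\rho\Big[\sup_{g\in\calG_t}\tfrac{1}{\widetilde n}\textstyle\sum_{i=1}^{\widetilde n}\rho_{i,j}\,[g(x_{t,i})]_j\Big] \;=\; \sum_{j=1}^{D}\frakR\big(\calG_t^{(j)}\big),
\end{align*}
where $\calG_t^{(j)}=\{x\mapsto\langle A_j,\tanh(B\Phi_{t-1}(x))\rangle:\lambda_\textnormal{max}(A),\lambda_\textnormal{max}(B)\le\lambda\}$. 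Enlarging this class by letting the $p$ rows of $B$ vary independently and $A_j$ range over $\{\|v\|_1\le\lambda\sqrt p,\ \|v\|_\infty\le\lambda\}$ only increases the Rademacher complexity, makes the class identical for every $j$, and puts it in exactly the form of \cref{lemmaAppendixRademacher} with $\sigma=\tanh$ (so $|\sigma|\le1$, i.e.\ $R=1$), $m=p$, coefficient bounds $\|v\|_1\le\lambda\sqrt p$ and $\|v\|_\infty\le\lambda$, and all inner classes equal to the linear class $\mathcal{U}=\{x\mapsto\langle b,\Phi_{t-1}(x)\rangle:\|b\|_1\le\lambda\sqrt D\}$. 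The lemma then gives $\frakR(\calG_t^{(j)})\le 2\lambda p\,\frakR(\mathcal{U})+\bigO\!\big(\lambda\sqrt p\,\log p/\sqrt{\widetilde n}\big)$, hence $\frakR(\calG_t)\le 2\lambda p D\,\frakR(\mathcal{U})+\bigO\!\big(\lambda\sqrt p\,D\log p/\sqrt{\widetilde n}\big)$.

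It remains to bound $\frakR(\mathcal{U})$, which is where the boosting structure enters. Since $\mathcal{U}$ is linear, Hölder's inequality and Massart's finite-class lemma give $\frakR(\mathcal{U})\le \lambda\sqrt D\cdot\sqrt{2\log(2D)}\,\big(\sup_x\|\Phi_{t-1}(x)\|_\infty\big)/\sqrt{\widetilde n}$, so everything reduces to bounding $\sup_x\|\Phi_{t-1}(x)\|_\infty\le\sup_x\|\Phi_{t-1}(x)\|_2$. Telescoping the recursion $\Phi_r=\Phi_{r-1}+\eta_r g_r$ yields $\Phi_{t-1}(x)=\Phi_0(x)+\sum_{r=1}^{t-1}\eta_r g_r(x)$; using the uniform bound $\sup_x\|g_r(x)\|_2\le\lambda\sqrt p$ noted earlier in this section, boundedness of $\Phi_0$ on the data domain, and $\sum_{r=1}^{t-1}\eta_r=c\sum_{r=1}^{t-1}r^{-s}=\bigO(t^{1-s})$ for $s\in(0,1)$, one obtains $\sup_x\|\Phi_{t-1}(x)\|_\infty=\bigO(\sqrt p\,t^{1-s})$. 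Substituting back, the dominant term $2\lambda p D\,\frakR(\mathcal{U})$ supplies the factor $D\cdot D^{1/2}$ (the extra $\sqrt D$ coming from $\|b\|_1\le\lambda\sqrt D$), the logarithmic factor in $D$, the $t^{1-s}$, and the $1/\sqrt{\widetilde n}$; collecting the powers of $p$, $D$, $\log D$, $t$, and $\widetilde n$ produces the stated $\bigO\!\big(pD^{3/2}\log(D)\,t^{1-s}/\sqrt{\widetilde n}\big)$.

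The main obstacle is the estimate $\sup_x\|\Phi_{t-1}(x)\|_2=\bigO(\sqrt p\,t^{1-s})$: unlike a fixed two-layer random feature network, here $\Phi_{t-1}$ is itself the output of $t-1$ boosting rounds, so its sup-norm is not a constant, and it is precisely the source of the $t^{1-s}$ growth in the bound. Getting the right exponent requires the learning-rate schedule $\eta_r=cr^{-s}$ with $s<1$ — which forces the partial sums $\sum_r\eta_r$ to grow only like $t^{1-s}$ — together with the fact that $\tanh$ is bounded and each $A_r$ has bounded operator norm, so that every residual block obeys $\|g_r(x)\|_2\le\lambda\sqrt p$ uniformly. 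A secondary technical point is that \cref{lemmaAppendixRademacher} is stated for a single fixed coefficient vector, so one must first split over the $D$ output coordinates and relax the shared hidden weights $B$ before the lemma can be applied.
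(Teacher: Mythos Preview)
Your approach is essentially identical to the paper's: split over the $D$ output coordinates, apply \cref{lemmaAppendixRademacher} to peel off the $A\tanh(\cdot)$ layer, bound the remaining linear-in-$\Phi_{t-1}$ class via H\"older plus a sub-Gaussian maximal inequality (the paper says ``Hoeffding'', you say ``Massart'' --- same thing here), and finally telescope $\Phi_{t-1}=\Phi_0+\sum_r \eta_r g_r$ together with $\sum_r \eta_r=\bigO(t^{1-s})$ to extract the $t^{1-s}$ factor. The one small slip is in your final bookkeeping: your own estimate $\sup_x\|\Phi_{t-1}(x)\|_\infty=\bigO(\sqrt{p}\,t^{1-s})$ produces $p^{3/2}$, not $p$, when you substitute back --- the paper gets the stated $p$ by asserting $\|g_r\|_\infty\le\lambda_1$ directly (rather than routing through $\|\cdot\|_2$), and in any event this exponent on $p$ is immaterial for \cref{theoremRFRBoostRegretBound}, where $p$ and $D$ are absorbed into the constant $C$.
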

\begin{proof}
Using basic properties of the supremum, Hölder's inequality, Hoeffding's inequality, and \cref{lemmaAppendixRademacher}, we obtain the following:
\begin{align*}
    \frakR(\calG_t) 
    &= \E_\rho \bigg[  \sup_{g \in \calG_t} \frac{1}{\widetilde{n}} \sum_{i=1}^{\widetilde{n}} \sum_{k=1}^D \rho_{i,k} g_k(x_{t,i}) \bigg] \\
    &\leq \sum_{k=1}^D \E_\rho \bigg[  \sup_{g \in \calG_t} \frac{1}{\widetilde{n}} \sum_{i=1}^{\widetilde{n}} \rho_{i,k} g_k(x_{t,i}) \bigg] \\
    &= \sum_{k=1}^D  \E_\rho \bigg[  \sup_{\substack{B \in \bbR^{p\times D} \\  A \in \bbR^{D \times p} \\ \lambda_\textnormal{max}( A), \lambda_\textnormal{max}(B) \leq \lambda_1}} \frac{1}{\widetilde{n}} \sum_{i=1}^{\widetilde{n}}  \rho_{i,k} \langle  A_k, \tanh(B \Phi_{t-1}(x_{t,i})\rangle \bigg] \\
    &\leq 2\lambda_1 D \sum_{j=1}^p  \E_\rho \bigg[  \sup_{\substack{B \in \bbR^{p\times D} \\ \lambda_\textnormal{max}(B) \leq \lambda_1}} \frac{1}{\widetilde{n}} \sum_{i=1}^{\widetilde{n}} \rho_{i,1} \left\langle B_j, \Phi_{t-1}(x_{t,i}) \right\rangle  \bigg] 
 + D\bigO\left( \frac{\sqrt{p} \log p}{\sqrt{\widetilde{n}}}\right)\\
     &\leq 2\lambda_1^2 pD\sqrt{D}  \E_\rho \bigg[ \frac{1}{\widetilde{n}} \left\| \sum_{i=1}^{\widetilde{n}} \rho_{i,1} \Phi_{t-1}(x_{t,i}) \right\|_\infty  \bigg] 
 + D\bigO\left( \frac{\sqrt{p} \log p}{\sqrt{\widetilde{n}}}\right)\\
      &\leq \frac{4\lambda_1^2 pD\sqrt{D} \log D }{\sqrt{\widetilde{n}}} \left\| \Phi_{t-1}(x_{t,i}) \right\|_\infty 
 + D\bigO\left( \frac{\sqrt{p} \log p}{\sqrt{\widetilde{n}}}\right)\\
       &\leq \frac{4\lambda_1^3 pD \sqrt{D}\log(D) c t^{1-s}}{(1-s)\sqrt{\widetilde{n}}} 
 + D\bigO\left( \frac{\sqrt{p} \log(p)}{\sqrt{\widetilde{n}}}\right) \\
        &= \bigO\left(\frac{pD^\frac{3}{2}\log(D) t^{1-s}}{ \sqrt{\widetilde{n}}}\right).
\end{align*}
The second inequality follows from \cref{lemmaAppendixRademacher} and the fact that $\tanh$ is bounded by $1$. The third inequality uses Hölder's inequality, and the fourth follows from Hoeffding's inequality for bounded random variables. Finally, the fifth inequality follows from the fact that $\|\Phi_{t-1}(x)\|_\infty \leq \sum_{r=1}^{t-1} \|g_r\|_\infty\eta_r \leq \frac{\lambda_1 ct^{1-s}}{1-s}$, derived from the recursive definition of $\Phi_{t} = \Phi_{t-1} + \eta_t g_t$.
\end{proof}

\begin{lemma}\label{lemmaRad1}
    Under the assumptions of \cref{theoremRFRBoostRegretBound}, the Rademacher complexity $\frakR(\calW, \calG_t)$ satisfies
    \begin{align*}
        \frakR(\calW, \calG_t) = \bigO\left(\frac{pdD^\frac{3}{2}\log(D) t^{1-s}}{ \sqrt{\widetilde{n}}}\right).
    \end{align*}
\end{lemma}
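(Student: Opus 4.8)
The plan is to reduce \cref{lemmaRad1} to \cref{lemmaRad2} by peeling off one coordinate of the $d$-dimensional output at a time, and then peeling off one further linear layer (the predictor $W$) inside each coordinate, so that what remains is exactly the single-coordinate Rademacher complexity already controlled in the proof of \cref{lemmaRad2}. First, by subadditivity of the supremum over the $d$ output coordinates,
\begin{align*}
    \frakR(\calW, \calG_t) \;\leq\; \sum_{k=1}^d \E_\rho\Bigg[ \sup_{\substack{W\in\calW \\ g\in\calG_t}} \frac{1}{\widetilde{n}} \sum_{i=1}^{\widetilde{n}} \rho_{i,k}\, \big\langle W_{\cdot,k},\, g(x_{t,i}) \big\rangle \Bigg],
\end{align*}
where $W_{\cdot,k}$ denotes the $k$-th column of $W$, so that $[W^\top g(x)]_k = \langle W_{\cdot,k}, g(x)\rangle$ and $\|W_{\cdot,k}\|_2 \leq \lambda_\textnormal{max}(W) < \lambda_1$, hence $\|W_{\cdot,k}\|_1 \leq \sqrt D\,\lambda_1$ and $\|W_{\cdot,k}\|_\infty \leq \lambda_1$.

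For a fixed $k$ I would write $\langle W_{\cdot,k}, g(x)\rangle = \sum_{r=1}^D (W_{\cdot,k})_r\, g_r(x)$, i.e. a linear combination — through the $1$-Lipschitz identity map — of the $D$ scalar coordinate functions $g_r$ of $g\in\calG_t$. After enlarging the feasible set so that each $g_r$ ranges independently over the single-coordinate class $\calF_0 = \{ x\mapsto \langle a, \tanh(B\Phi_{t-1}(x))\rangle : \|a\|_2 \leq \lambda_1,\ \lambda_\textnormal{max}(B) \leq \lambda_1\}$ (which only increases the supremum), I can invoke \cref{lemmaAppendixRademacher} with $\sigma = \mathrm{id}$, $m=D$, weight $\ell_1$-bound $C = \sqrt D\,\lambda_1$, weight $\ell_\infty$-bound $\lambda_1$, and uniform bound $R = \lambda_1\sqrt p$ (since $\sup_x|g_r(x)| \leq \sup_x\|g(x)\|_2 \leq \lambda_1\sqrt p$, the a priori bound recorded just before \cref{lemmaRad2}). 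This gives, for each $k$,
\begin{align*}
    \E_\rho\Bigg[ \sup_{W,g} \frac{1}{\widetilde{n}}\sum_{i=1}^{\widetilde{n}} \rho_{i,k}\,\langle W_{\cdot,k}, g(x_{t,i})\rangle \Bigg] \;\leq\; 2\lambda_1 D\,\frakR(\calF_0) \;+\; \bigO\!\left( \frac{\lambda_1^2 \sqrt{pD}\,\log D}{\sqrt{\widetilde{n}}} \right).
\end{align*}

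Finally, $\frakR(\calF_0)$ is precisely the single-output-coordinate complexity estimated inside the proof of \cref{lemmaRad2}: the chain of inequalities there in fact bounds $D\,\frakR(\calF_0)$ by $\bigO\!\big(pD^{3/2}\log(D)\,t^{1-s}/\sqrt{\widetilde n}\big)$, so $\frakR(\calF_0) = \bigO\!\big(p\sqrt D\log(D)\,t^{1-s}/\sqrt{\widetilde n}\big)$. Substituting, the $2\lambda_1 D\,\frakR(\calF_0)$ term dominates and the per-coordinate complexity is $\bigO\!\big(pD^{3/2}\log(D)\,t^{1-s}/\sqrt{\widetilde n}\big)$; summing over the $d$ output coordinates multiplies by $d$ and yields the claimed bound. (One could in fact obtain the tighter rate $pd\sqrt D$ by absorbing $A$ and $W_{\cdot,k}$ into a single $p$-dimensional weight vector $v = A^\top W_{\cdot,k}$ with $\|v\|_2 \leq \lambda_1^2$ and applying \cref{lemmaAppendixRademacher} with $\sigma=\tanh$ exactly as in \cref{lemmaRad2}; the weaker stated bound is all that is needed for \cref{theoremRFRBoostRegretBound}.)

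I do not expect a genuinely new difficulty here — this is why the result is stated as a short lemma rather than a theorem. The one point requiring care, exactly as in \cref{lemmaRad2}, is the enlargement to a product class before applying \cref{lemmaAppendixRademacher}: the $D$ coordinate functions of a single $g = A\tanh(B\Phi_{t-1}(\cdot))$ share the weight matrix $B$ (and the rows of $A$), so they do not form a product set, and one must replace the true function class by $\calF_0^{\,D}$; this is legitimate because it only enlarges the supremum. The remaining work — checking the boundedness hypothesis $R$ of \cref{lemmaAppendixRademacher} for the identity activation, and the constant bookkeeping (Hölder with $\|B_j\|_1 \leq \lambda_1\sqrt D$, Hoeffding/Massart over the $D$ coordinates of $\Phi_{t-1}$, and the a priori bound $\|\Phi_{t-1}(x)\|_\infty \leq \lambda_1 c\, t^{1-s}/(1-s)$) — is identical to that already carried out in \cref{lemmaRad2}.
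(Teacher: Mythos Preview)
Your proposal is correct and follows essentially the same route as the paper: split the $d$ output coordinates by subadditivity, apply \cref{lemmaAppendixRademacher} with $\sigma=\mathrm{id}$ to peel off the column $W_{\cdot,k}$ (using $\|W_{\cdot,k}\|_\infty\le\lambda_1$, $\|W_{\cdot,k}\|_1\le\lambda_1\sqrt D$, and the coordinate bound $R=\lambda_1\sqrt p$), and then invoke the single-coordinate estimate already obtained inside the proof of \cref{lemmaRad2}. Your explicit remark about enlarging to the product class $\calF_0^{\,D}$ before applying the lemma makes rigorous a step the paper leaves implicit, and your parenthetical observation about the sharper $pd\sqrt D$ rate is a nice bonus not present in the paper.
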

\begin{proof}
We proceed similarly as in the previous result, and obtain that
\begin{align*}
    \frakR(\calW, \calG_t) 
    &= \E_\rho \bigg[  \sup_{\substack{W \in \calW \\ g \in \calG_t}} \frac{1}{\widetilde{n}} \sum_{i=1}^{\widetilde{n}} \sum_{j=1}^d \rho_{i,j} \left[W^\top g(x_{t,i})\right]_j \bigg] \\
    &\leq \sum_{j=1}^d \E_\rho \bigg[  \sup_{\substack{W \in \calW \\ g \in \calG_t}} \frac{1}{\widetilde{n}} \sum_{i=1}^{\widetilde{n}} \rho_{i,j} \left\langle W^\top_j, g(x_{t,i})\right\rangle \bigg] \\
    &\leq^{(lemma)} 2\lambda_1 d \sum_{k=1}^D  \E_\rho \bigg[  \sup_{\substack{g \in \calG_t}} \sum_{i=1}^{\widetilde{n}} \rho_{i,1} g_k(x_{t,i})  \bigg] 
    + d \bigO\left( \frac{\lambda_1^2 \sqrt{pD} \log D}{\sqrt{\widetilde{n}}}\right)\\
    &\leq^{(prev\:result)} 2\lambda_1 d \bigO\left(\frac{pD^\frac{3}{2}\log(D) t^{1-s}}{ \sqrt{\widetilde{n}}}\right)  + d \bigO\left( \frac{\lambda_1^2 \sqrt{pD} \log D}{\sqrt{\widetilde{n}}}\right) \\
    &= \bigO\left(\frac{pdD^\frac{3}{2}\log(D) t^{1-s}}{ \sqrt{\widetilde{n}}}\right).
\end{align*}
Here we additionally used the fact that $\sup_x |g_k(x)| = \sup_x |\langle  A_k, \tanh(B \Phi_{t-1})(x)\rangle| \leq \lambda_1 \sqrt{p}$ for \cref{lemmaAppendixRademacher}.
\end{proof}

We obtain \cref{theoremRFRBoostRegretBound} by combining \cref{theoremGeneralizedBoostingExcessRiskBound} with \cref{lemmaRad1} and \cref{lemmaRad2}, and using the bound that $\sup_x \|g(x)\|_2 \leq \lambda \sqrt{p}$.

\section{Experimental Setup}\label{appendixExperiments}

This section provides additional details of the experimental setup, SWIM random features, evaluation procedures, baseline models, and hyperparameter ranges used in our numerical experiments.

\subsection{SWIM Random Features}

In our experiments, we use SWIM random features \cite{2023SWIMSamplingWeightsNeuralNetworks} for initializing the dense layer weights, as opposed to traditional i.i.d. initialization. Let $\calX$ denote the input space and $\sigma$ an activation function. A SWIM layer is formally defined as follows:

\begin{definition}[\citet{2023SWIMSamplingWeightsNeuralNetworks}]
Let $\Phi(x) \in \bbR^D$ represent the features of a neural network for input $x \in \calX$. Consider a dense layer $x \mapsto \sigma(Ax + b)$ to be added on top of $\Phi$, where $A \in \bbR^{H \times D}$, $b\in\bbR^H$, and $H$ is the number of neurons in the new layer. Let $(x^{(1)}_{i}, x^{(2)}_{i})_{i=1}^H \subset \calX \times \calX$ be pairs of training data points. Let
    \begin{equation}\label{eqSWIMWeights}
        A_i = c_2 \frac{\Phi(x^{(2)}_i) - \Phi(x^{(1)}_i)}{\| \Phi(x^{(2)}_i) - \Phi(x^{(1)}_i) \|^2}, 
        \qquad b_i = -\langle A_i, \Phi(x^{(1)}_i) \rangle - c_1,
    \end{equation}
    where $c_1$ and $c_2$ are fixed constants. We say that $x \mapsto \sigma(Ax + b)$ is a pair-sampled layer if the rows of the weight matrix $A$ and the biases $b$ are of the form \eqref{eqSWIMWeights}.
\end{definition}

The pairs of points $(x^{(1)}_{i}, x^{(2)}_{i})_{i=1}^H$ can be sampled uniformly or based on a training data-dependent sampling scheme. In our experiments, we adopt the gradient-based approach by \citet{2023SWIMSamplingWeightsNeuralNetworks}. Pairs of data points are sampled at each layer $\Phi$ of the ResNet approximately proportional to
\begin{align}\label{eqSWIMDensity}
    q(x^{(1)}, x^{(2)}| \Phi) = \frac{\| f(x^{(1)})-f(x^{(2)})\|}{\|\Phi(x^{(1)})-\Phi(x^{(2)})\| + \epsilon},
\end{align}
where $\epsilon > 0$ and $f$ is the true target function, i.e., the mapping $x_i$ to the true label $y_i = f(x_i)$. Specifically, to avoid quadratic time complexity in dataset size $n$, we first consider the $n$ points $x^{(1)}_i$, $i=1,...,n$, and then uniformly generate $n$ offset points $x^{(2)}_i = x^{(1)}_{i+j_i}$. We then sample $H$ points from these proportional to $q(\cdot, \cdot| \Phi)$. This ensures the sampling procedure remains linear with respect to dataset size $n$.

The constants $c_1$ and $c_2$ are chosen such that the pre-activated neurons are symmetric about the input point $(\Phi(x^{(1)}_j)+\Phi(x^{(2)}_j))/2$. For general inputs, the pre-activation of neuron $j$ in a sampled layer is
\begin{align*}
    (A \Phi(x) + b)_j 
        &= \langle A_j, \Phi(x)\rangle - \langle A_j, \Phi(x_j^{(1)})\rangle - c_1 \\
        &= c_2 \frac{\langle \Phi(x^{(2)}_j) - \Phi(x^{(1)}_j), \Phi(x)-\Phi(x_j^{(1)}) \rangle}{\|\Phi(x^{(2)}_j) - \Phi(x^{(1)}_j)\|^2} - c_1,
\end{align*}
and specifically, when applied to $x^{(2)}_j$ and $x^{(1)}_j$, we get
\begin{align*}
    (A \Phi(x^{(1)}_j) + b)_j = -c_1,
\end{align*}
\begin{align*}
    (A \Phi(x^{(2)}_j) + b)_j = c_2 - c_1.
\end{align*}
Thus, as suggested by \citet{2023SWIMSamplingWeightsNeuralNetworks}, setting $c_2 = 2c_1$ centers the pre-activation symmetrically about the mean $(\Phi(x^{(1)}_i)+\Phi(x^{(2)}_i))/2$. Intuitively, this facilitates the creation of a decision boundary between the two chosen points. The constant $c_2$ is what we refer to as the \textit{SWIM scale} in the hyperparameter section.

\subsection{Model Architectures and Hyperparameter Ranges for OpenML Tasks}\label{appendixsubOpenML}

We detail the hyperparameter ranges used when tuning all baseline models for the 91 regression and classification tasks of the curated OpenML benchmark suite. The hyperparameter tuning was conducted using the Bayesian optimization library Optuna \cite{2019Optuna}, with 100 trials per outer fold, model, and dataset, using an inner 5-fold cross validation. The specific ranges for each model are presented in \cref{tab:hyperparameter_ranges}.

\begin{table}[h]
\caption{Hyperparameter ranges for OpenML experiments}
\label{tab:hyperparameter_ranges}
\vskip 0.15in
\begin{center}
\begin{small}
\begin{sc}
\begin{tabular}{lll}
\toprule
Model & Hyperparameter & Range \\
\midrule
E2E MLP ResNet & n\_layers & 1 to 10 \\
 & lr & $10^{-6}$ to $10^{-1}$ (log scale) \\
 & end\_lr\_factor & 0.01 to 1.0 (log scale) \\
 & n\_epochs & 10 to 50 (log scale) \\
 & weight\_decay & $10^{-6}$ to $10^{-3}$ (log scale) \\
 & batch\_size & 128, 256, 384, 512 \\
 & hidden\_dim & 16 to 512 (log scale) \\
 & feature\_dim & fixed at 512 \\
\midrule
RFRBoost & n\_layers & 1 to 10 (log scale) \\
 & l2\_linpred & $10^{-5}$ to 10 (log scale) \\
 & l2\_ghat & $10^{-5}$ to 10 (log scale) \\
 & boost\_lr & 0.1 to 1.0 (log scale) \\
 & SWIM\_scale & 0.25 to 2.0 \\
 & feature\_dim & fixed at 512 \\
\midrule
RFNN & l2\_reg & $10^{-5}$ to 10 (log scale) \\
 & feature\_dim & 16 to 512 (log scale) \\
 & SWIM\_scale & 0.25 to 2.0 \\
\midrule
Ridge/Logistic Regression & l2\_reg & $10^{-5}$ to 10 (log scale) \\
\midrule
XGBoost & alpha & $10^{-5}$ to $10^{-2}$ (log scale) \\
 & lambda & $10^{-3}$ to 100 (log scale) \\
 & learning\_rate & 0.01 to 0.5 (log scale) \\
 & n\_estimators & 50 to 1000 (log scale) \\
 & max\_depth & 1 to 10 \\
\bottomrule
\end{tabular}
\end{sc}
\end{small}
\end{center}
\vskip -0.1in
\end{table}

For the E2E MLP ResNet, "hidden\_dim" refers to the dimension $D$ of the ResNet features $\Phi(x) \in \bbR^D$, and the feature dimension denotes the number of neurons in the residual block (also known as bottleneck size, which was also fixed for RFRBoost). The residual blocks are structured sequentially as follows: [dense(hidden\_dim, feature\_dim), batchnorm, relu, dense(feature\_dim, hidden\_dim), batchnorm]. An additional upscaling layer was used to match the input dimension to the "hidden\_dim". 

We used the Aeon library \cite{2024Aeon} to generate the critical difference diagrams in \cref{subsectionExperimentsOpenML}. Below, we provide full dataset-wise results in \crefrange{tab:appendix-classification1}{tab:appendix-regression} for each model and dataset used in the evaluation, averaged across all 5 folds. \cref{fig:scatter-box-whiskers-classification,fig:scatter-box-whiskers-regression} visualize these results using scatter plots overlaid with box-and-whiskers plots, providing insight into the distribution and variability of performance across datasets. We refer the reader to the critical difference diagrams for a rigorous statistical comparison of model rankings across all datasets.

\vspace{40pt}

\begin{table}[h!]
\centering
\caption{Test classification scores on OpenML datasets.}
\label{tab:appendix-classification1}
\vskip 0.2in
\begin{tabular}{l|ccccc}
\toprule
 \makecell{Dataset \\ ID} & \makecell{Logistic \\ Regression} & \makecell{RFRBoost} & \makecell{RFNN} & \makecell{E2E MLP \\ ResNet} & \makecell{XGBoost} \\
\midrule
3 & 0.974 (0.005) & 0.996 (0.002) & 0.989 (0.004) & 0.996 (0.001) & \textbf{0.997} (0.002) \\
6 & 0.768 (0.005) & 0.912 (0.008) & 0.890 (0.009) & \textbf{0.944} (0.009) & 0.903 (0.015) \\
11 & 0.878 (0.033) & \textbf{0.981} (0.006) & 0.979 (0.004) & 0.934 (0.042) & 0.931 (0.031) \\
14 & 0.812 (0.011) & \textbf{0.844} (0.017) & 0.804 (0.018) & 0.828 (0.020) & 0.830 (0.008) \\
15 & 0.964 (0.016) & \textbf{0.966} (0.011) & 0.963 (0.017) & 0.964 (0.015) & 0.960 (0.012) \\
16 & 0.948 (0.004) & \textbf{0.974} (0.009) & 0.945 (0.008) & 0.973 (0.007) & 0.954 (0.016) \\
18 & 0.728 (0.010) & \textbf{0.739} (0.010) & 0.727 (0.010) & \textbf{0.739} (0.007) & 0.721 (0.017) \\
22 & 0.815 (0.012) & 0.837 (0.004) & 0.812 (0.011) & \textbf{0.840} (0.017) & 0.795 (0.012) \\
23 & 0.512 (0.054) & 0.551 (0.038) & \textbf{0.553} (0.044) & 0.532 (0.035) & 0.547 (0.022) \\
28 & 0.970 (0.005) & \textbf{0.991} (0.003) & 0.980 (0.006) & 0.989 (0.005) & 0.978 (0.004) \\
29 & 0.854 (0.017) & 0.854 (0.024) & 0.845 (0.024) & 0.852 (0.027) & \textbf{0.862} (0.026) \\
31 & 0.761 (0.023) & \textbf{0.762} (0.028) & 0.756 (0.028) & 0.743 (0.039) & 0.744 (0.031) \\
32 & 0.946 (0.002) & \textbf{0.993} (0.003) & 0.990 (0.002) & 0.992 (0.001) & 0.986 (0.003) \\
37 & 0.762 (0.020) & 0.758 (0.027) & 0.760 (0.032) & \textbf{0.767} (0.025) & 0.762 (0.034) \\
38 & 0.968 (0.005) & 0.979 (0.005) & 0.977 (0.004) & 0.977 (0.004) & \textbf{0.988} (0.004) \\
44 & 0.931 (0.007) & 0.950 (0.004) & 0.936 (0.009) & 0.941 (0.009) & \textbf{0.952} (0.002) \\
50 & 0.983 (0.006) & 0.983 (0.010) & 0.980 (0.010) & 0.976 (0.016) & \textbf{0.995} (0.007) \\
54 & 0.801 (0.026) & \textbf{0.849} (0.018) & 0.829 (0.021) & 0.843 (0.015) & 0.772 (0.019) \\
151 & 0.760 (0.015) & 0.796 (0.014) & 0.784 (0.015) & 0.792 (0.010) & \textbf{0.847} (0.013) \\
182 & 0.861 (0.009) & 0.909 (0.007) & 0.908 (0.007) & 0.911 (0.009) & \textbf{0.913} (0.006) \\
188 & 0.641 (0.057) & \textbf{0.664} (0.049) & 0.643 (0.031) & 0.635 (0.038) & 0.652 (0.029) \\
307 & 0.838 (0.012) & \textbf{0.993} (0.002) & 0.972 (0.016) & 0.989 (0.010) & 0.924 (0.031) \\
458 & \textbf{0.994} (0.000) & 0.994 (0.007) & 0.993 (0.006) & 0.988 (0.013) & 0.990 (0.006) \\
469 & 0.187 (0.023) & 0.188 (0.028) & \textbf{0.196} (0.021) & 0.193 (0.028) & 0.178 (0.031) \\
1049 & 0.908 (0.012) & 0.910 (0.013) & 0.904 (0.008) & 0.907 (0.014) & \textbf{0.919} (0.009) \\
1050 & 0.894 (0.013) & 0.889 (0.010) & \textbf{0.898} (0.010) & 0.892 (0.006) & 0.890 (0.007) \\
1053 & 0.808 (0.009) & 0.807 (0.011) & 0.806 (0.012) & \textbf{0.810} (0.011) & 0.809 (0.014) \\
1063 & 0.841 (0.023) & \textbf{0.845} (0.039) & 0.833 (0.024) & 0.841 (0.020) & 0.839 (0.021) \\
\bottomrule
\end{tabular}
\end{table}

\begin{table}
\centering
\caption{Test classification scores on OpenML datasets.}
\label{tab:appendix-classification2}
\vskip 0.2in
\begin{tabular}{l|ccccc}
\toprule
 \makecell{Dataset \\ ID} & \makecell{Logistic \\ Regression} & \makecell{RFRBoost} & \makecell{RFNN} & \makecell{E2E MLP \\ ResNet} & \makecell{XGBoost} \\
\midrule
1067 & 0.853 (0.019) & 0.855 (0.018) & \textbf{0.858} (0.015) & 0.855 (0.016) & 0.853 (0.012) \\
1068 & 0.925 (0.008) & 0.927 (0.016) & 0.928 (0.009) & 0.928 (0.010) & \textbf{0.935} (0.012) \\
1461 & 0.902 (0.008) & 0.902 (0.005) & \textbf{0.904} (0.009) & 0.900 (0.006) & 0.903 (0.005) \\
1462 & 0.988 (0.005) & \textbf{1.000} (0.000) & \textbf{1.000} (0.000) & \textbf{1.000} (0.000) & 0.997 (0.004) \\
1464 & 0.766 (0.039) & 0.791 (0.037) & \textbf{0.794} (0.046) & 0.769 (0.048) & 0.783 (0.043) \\
1475 & 0.478 (0.005) & 0.545 (0.010) & 0.534 (0.012) & 0.570 (0.005) & \textbf{0.601} (0.008) \\
1480 & \textbf{0.715} (0.037) & 0.703 (0.017) & 0.703 (0.011) & 0.671 (0.033) & 0.684 (0.020) \\
1486 & 0.943 (0.006) & 0.949 (0.003) & 0.946 (0.004) & 0.950 (0.006) & \textbf{0.959} (0.003) \\
1487 & 0.935 (0.012) & 0.945 (0.012) & \textbf{0.945} (0.012) & 0.944 (0.009) & 0.940 (0.011) \\
1489 & 0.748 (0.010) & 0.876 (0.015) & 0.864 (0.005) & \textbf{0.906} (0.005) & 0.903 (0.006) \\
1494 & 0.873 (0.016) & \textbf{0.882} (0.016) & 0.868 (0.018) & 0.878 (0.015) & 0.873 (0.010) \\
1497 & 0.701 (0.009) & 0.929 (0.005) & 0.907 (0.008) & 0.944 (0.007) & \textbf{0.998} (0.001) \\
1510 & \textbf{0.977} (0.004) & 0.974 (0.008) & 0.974 (0.014) & 0.961 (0.023) & 0.961 (0.013) \\
1590 & 0.850 (0.005) & 0.858 (0.010) & 0.856 (0.010) & 0.851 (0.009) & \textbf{0.865} (0.013) \\
4534 & 0.938 (0.007) & 0.952 (0.010) & 0.949 (0.003) & \textbf{0.960} (0.009) & 0.956 (0.008) \\
4538 & 0.478 (0.014) & 0.554 (0.017) & 0.513 (0.014) & 0.606 (0.014) & \textbf{0.660} (0.017) \\
6332 & 0.735 (0.031) & 0.791 (0.024) & 0.763 (0.009) & 0.791 (0.023) & \textbf{0.796} (0.021) \\
23381 & 0.626 (0.040) & \textbf{0.640} (0.030) & 0.624 (0.034) & 0.566 (0.048) & 0.612 (0.041) \\
23517 & 0.505 (0.010) & \textbf{0.516} (0.009) & 0.502 (0.017) & 0.489 (0.013) & 0.500 (0.012) \\
40499 & 0.997 (0.001) & 0.997 (0.001) & 0.992 (0.000) & \textbf{0.999} (0.001) & 0.985 (0.004) \\
40668 & 0.752 (0.017) & 0.775 (0.014) & 0.759 (0.019) & 0.777 (0.010) & \textbf{0.800} (0.014) \\
40701 & 0.867 (0.003) & 0.936 (0.002) & 0.923 (0.004) & 0.951 (0.009) & \textbf{0.953} (0.003) \\
40966 & 0.994 (0.005) & \textbf{0.999} (0.002) & 0.989 (0.007) & 0.997 (0.002) & 0.984 (0.016) \\
40975 & 0.932 (0.007) & 0.997 (0.004) & 0.977 (0.006) & \textbf{0.999} (0.001) & 0.994 (0.004) \\
40982 & 0.718 (0.028) & 0.752 (0.016) & 0.758 (0.024) & 0.755 (0.028) & \textbf{0.800} (0.018) \\
40983 & 0.971 (0.003) & 0.986 (0.002) & 0.987 (0.003) & \textbf{0.988} (0.004) & 0.983 (0.003) \\
40984 & 0.901 (0.009) & \textbf{0.934} (0.012) & 0.930 (0.007) & 0.927 (0.011) & 0.929 (0.011) \\
40994 & \textbf{0.961} (0.018) & 0.957 (0.019) & 0.957 (0.019) & 0.946 (0.015) & 0.944 (0.021) \\
41027 & 0.685 (0.004) & 0.812 (0.007) & 0.798 (0.012) & \textbf{0.844} (0.007) & 0.838 (0.006) \\
\bottomrule
\end{tabular}
\end{table}

\begin{table}[h!]
\centering
\caption{Test RMSE scores on OpenML regression datasets.}
\label{tab:appendix-regression}
\vskip 0.2in
\resizebox{\textwidth}{!}{
\begin{tabular}{l|cccccccc}
\toprule
 \makecell{Dataset \\ ID} & \makecell{Ridge \\ Regression} & \makecell{Greedy \\ RFRBoost \\ $A_{scalar}$} & \makecell{Greedy \\ RFRBoost \\ $A_{diag}$} & \makecell{Greedy \\ RFRBoost \\ $A_{dense}$} & \makecell{Gradient \\ RFRBoost} & \makecell{RFNN} & \makecell{E2E \\ MLP \\ ResNet} & \makecell{XGBoost} \\
\midrule
41021 & 0.233 (0.012) & 0.233 (0.013) & 0.232 (0.013) & 0.231 (0.012) & \textbf{0.231} (0.011) & 0.232 (0.012) & 0.265 (0.011) & 0.249 (0.011) \\
44956 & 0.647 (0.020) & 0.631 (0.018) & 0.621 (0.019) & 0.615 (0.019) & 0.616 (0.020) & 0.622 (0.022) & \textbf{0.614} (0.023) & 0.633 (0.020) \\
44957 & 0.688 (0.033) & 0.362 (0.026) & 0.246 (0.017) & 0.238 (0.011) & 0.233 (0.013) & 0.347 (0.021) & 0.316 (0.026) & \textbf{0.197} (0.015) \\
44958 & 0.595 (0.010) & 0.239 (0.019) & 0.205 (0.010) & 0.193 (0.016) & 0.191 (0.027) & 0.246 (0.022) & 0.246 (0.025) & \textbf{0.039} (0.011) \\
44959 & 0.589 (0.026) & 0.304 (0.007) & 0.275 (0.013) & 0.277 (0.012) & 0.286 (0.018) & 0.305 (0.015) & 0.311 (0.011) & \textbf{0.245} (0.026) \\
44960 & 0.290 (0.028) & 0.069 (0.013) & 0.050 (0.005) & 0.049 (0.005) & 0.049 (0.006) & 0.088 (0.024) & 0.200 (0.027) & \textbf{0.037} (0.007) \\
44962 & \textbf{0.446} (0.063) & 0.448 (0.062) & 0.448 (0.062) & 0.446 (0.063) & 0.447 (0.062) & 0.447 (0.063) & 0.474 (0.069) & 0.452 (0.067) \\
44963 & 0.841 (0.011) & 0.784 (0.005) & 0.773 (0.020) & 0.749 (0.008) & 0.752 (0.005) & 0.783 (0.013) & 0.703 (0.017) & \textbf{0.694} (0.010) \\
44964 & 0.521 (0.017) & 0.443 (0.018) & 0.406 (0.013) & 0.395 (0.012) & 0.398 (0.009) & 0.443 (0.014) & 0.369 (0.005) & \textbf{0.340} (0.010) \\
44965 & 0.883 (0.066) & 0.867 (0.071) & 0.873 (0.077) & 0.860 (0.075) & 0.858 (0.070) & 0.871 (0.077) & 0.864 (0.075) & \textbf{0.839} (0.081) \\
44966 & 0.719 (0.082) & 0.724 (0.085) & \textbf{0.718} (0.084) & 0.719 (0.082) & 0.721 (0.080) & 0.726 (0.075) & 0.731 (0.079) & 0.720 (0.080) \\
44967 & 0.786 (0.065) & 0.788 (0.071) & 0.790 (0.068) & 0.788 (0.065) & \textbf{0.784} (0.067) & 0.795 (0.069) & 0.814 (0.045) & 0.789 (0.056) \\
44969 & 0.409 (0.011) & 0.084 (0.027) & 0.027 (0.007) & \textbf{0.022} (0.002) & 0.025 (0.002) & 0.061 (0.022) & 0.058 (0.015) & 0.086 (0.003) \\
44970 & 0.645 (0.042) & 0.609 (0.044) & 0.610 (0.044) & 0.618 (0.035) & 0.615 (0.040) & \textbf{0.603} (0.045) & 0.616 (0.029) & 0.622 (0.043) \\
44971 & 0.840 (0.028) & 0.789 (0.018) & 0.779 (0.022) & 0.780 (0.017) & 0.777 (0.021) & 0.785 (0.024) & 0.729 (0.009) & \textbf{0.685} (0.015) \\
44972 & 0.796 (0.036) & 0.776 (0.028) & 0.775 (0.030) & 0.768 (0.026) & 0.767 (0.030) & 0.774 (0.026) & 0.777 (0.029) & \textbf{0.722} (0.012) \\
44973 & 0.602 (0.007) & 0.287 (0.013) & 0.235 (0.008) & 0.194 (0.007) & 0.194 (0.004) & 0.284 (0.016) & \textbf{0.177} (0.007) & 0.244 (0.008) \\
44974 & 0.457 (0.014) & 0.214 (0.013) & 0.184 (0.015) & 0.153 (0.008) & 0.152 (0.007) & 0.211 (0.012) & 0.158 (0.012) & \textbf{0.126} (0.012) \\
44975 & 0.026 (0.008) & 0.027 (0.008) & 0.026 (0.008) & \textbf{0.026} (0.008) & 0.026 (0.008) & 0.026 (0.008) & 0.056 (0.012) & 0.109 (0.003) \\
44976 & 0.270 (0.004) & 0.192 (0.007) & 0.177 (0.009) & 0.170 (0.005) & 0.170 (0.006) & 0.196 (0.009) & \textbf{0.161} (0.006) & 0.192 (0.007) \\
44977 & 0.590 (0.021) & 0.525 (0.019) & 0.517 (0.016) & 0.508 (0.019) & 0.506 (0.021) & 0.525 (0.018) & 0.486 (0.020) & \textbf{0.446} (0.022) \\
44978 & 0.242 (0.016) & 0.149 (0.005) & 0.145 (0.007) & 0.146 (0.006) & 0.148 (0.007) & 0.149 (0.006) & 0.142 (0.004) & \textbf{0.134} (0.011) \\
44979 & 0.252 (0.015) & 0.163 (0.015) & 0.148 (0.012) & 0.142 (0.010) & \textbf{0.140} (0.010) & 0.165 (0.013) & 0.145 (0.011) & 0.142 (0.011) \\
44980 & 0.766 (0.022) & 0.430 (0.008) & 0.324 (0.007) & 0.318 (0.008) & 0.317 (0.007) & 0.438 (0.022) & \textbf{0.275} (0.010) & 0.457 (0.031) \\
44981 & 0.914 (0.017) & 0.913 (0.016) & 0.913 (0.017) & 0.912 (0.017) & 0.913 (0.017) & 0.914 (0.017) & 0.636 (0.015) & \textbf{0.611} (0.006) \\
44983 & 0.431 (0.004) & 0.263 (0.012) & 0.250 (0.013) & 0.247 (0.016) & 0.244 (0.014) & 0.257 (0.010) & 0.235 (0.019) & \textbf{0.229} (0.014) \\
44984 & 0.689 (0.023) & 0.660 (0.023) & 0.658 (0.024) & 0.656 (0.022) & 0.657 (0.023) & 0.659 (0.023) & 0.656 (0.023) & \textbf{0.656} (0.023) \\
44987 & 0.334 (0.035) & 0.233 (0.020) & 0.194 (0.024) & \textbf{0.183} (0.026) & 0.192 (0.036) & 0.246 (0.019) & 0.229 (0.057) & 0.211 (0.020) \\
44989 & 0.308 (0.010) & 0.293 (0.009) & 0.287 (0.006) & 0.276 (0.012) & 0.274 (0.013) & 0.295 (0.009) & 0.304 (0.016) & \textbf{0.263} (0.005) \\
44990 & 0.404 (0.009) & 0.395 (0.009) & 0.396 (0.008) & 0.395 (0.009) & \textbf{0.393} (0.009) & 0.399 (0.010) & 0.405 (0.008) & 0.399 (0.008) \\
44993 & 0.799 (0.017) & 0.776 (0.020) & 0.773 (0.017) & 0.770 (0.018) & \textbf{0.769} (0.019) & 0.775 (0.017) & 0.787 (0.018) & 0.773 (0.019) \\
44994 & 0.265 (0.016) & 0.215 (0.011) & 0.216 (0.009) & \textbf{0.215} (0.007) & 0.216 (0.008) & 0.216 (0.009) & 0.231 (0.013) & 0.225 (0.010) \\
45012 & 0.448 (0.027) & 0.343 (0.020) & 0.339 (0.019) & 0.330 (0.018) & \textbf{0.330} (0.017) & 0.341 (0.016) & 0.351 (0.022) & 0.331 (0.021) \\
45402 & 0.621 (0.023) & 0.526 (0.024) & 0.507 (0.017) & 0.499 (0.017) & 0.497 (0.019) & 0.532 (0.019) & \textbf{0.474} (0.030) & 0.502 (0.029) \\
\bottomrule
\end{tabular}
}
\end{table}

\begin{figure}
    \centering
    \includegraphics[width=0.85\linewidth]{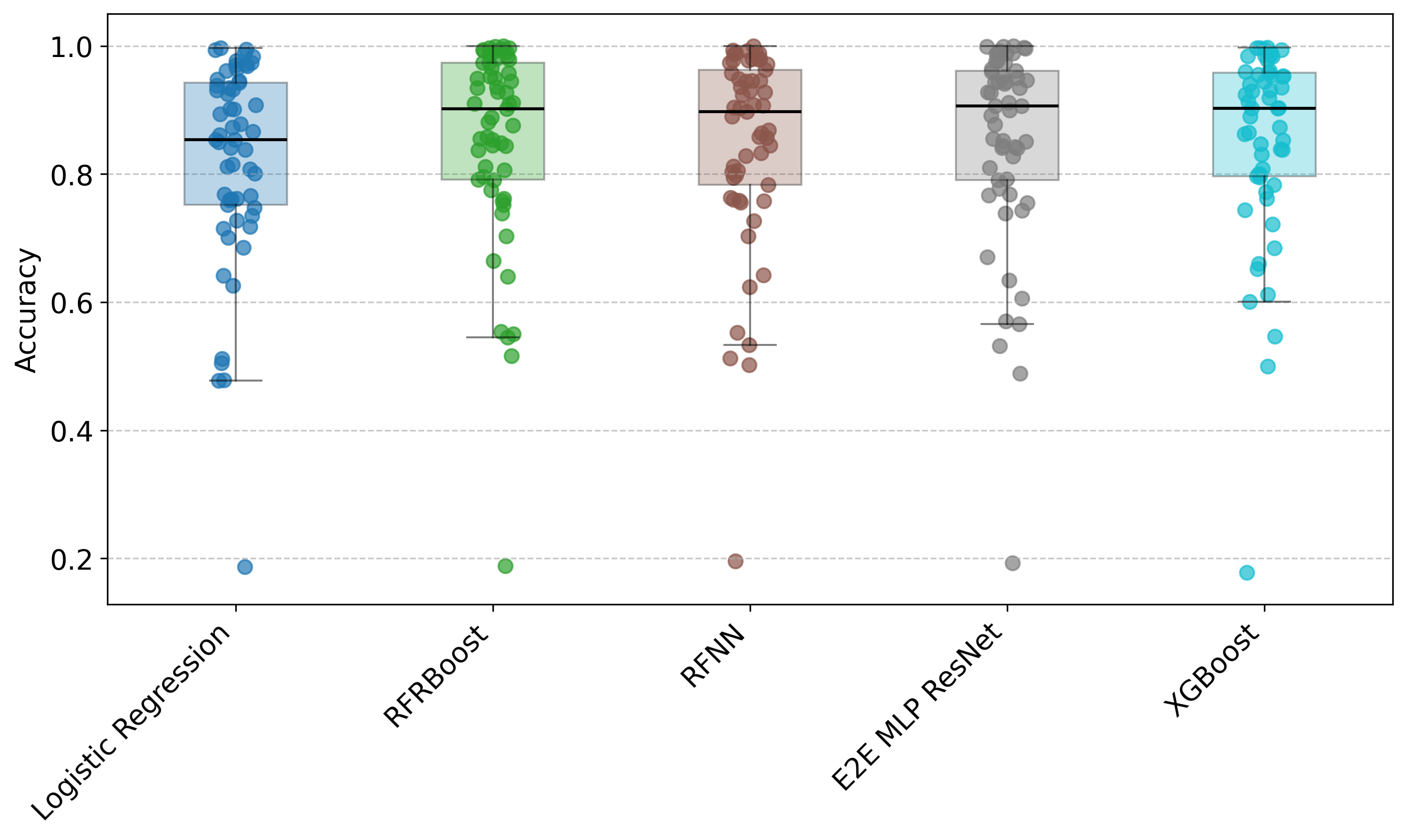}
    \caption{Scatter and box-and-whiskers plot of classification accuracy across all OpenML datasets. Each dot corresponds to a single dataset, and box plots summarize the distribution for each model.}
    \label{fig:scatter-box-whiskers-classification}
\end{figure}

\begin{figure}
    \centering
    \includegraphics[width=0.9\linewidth]{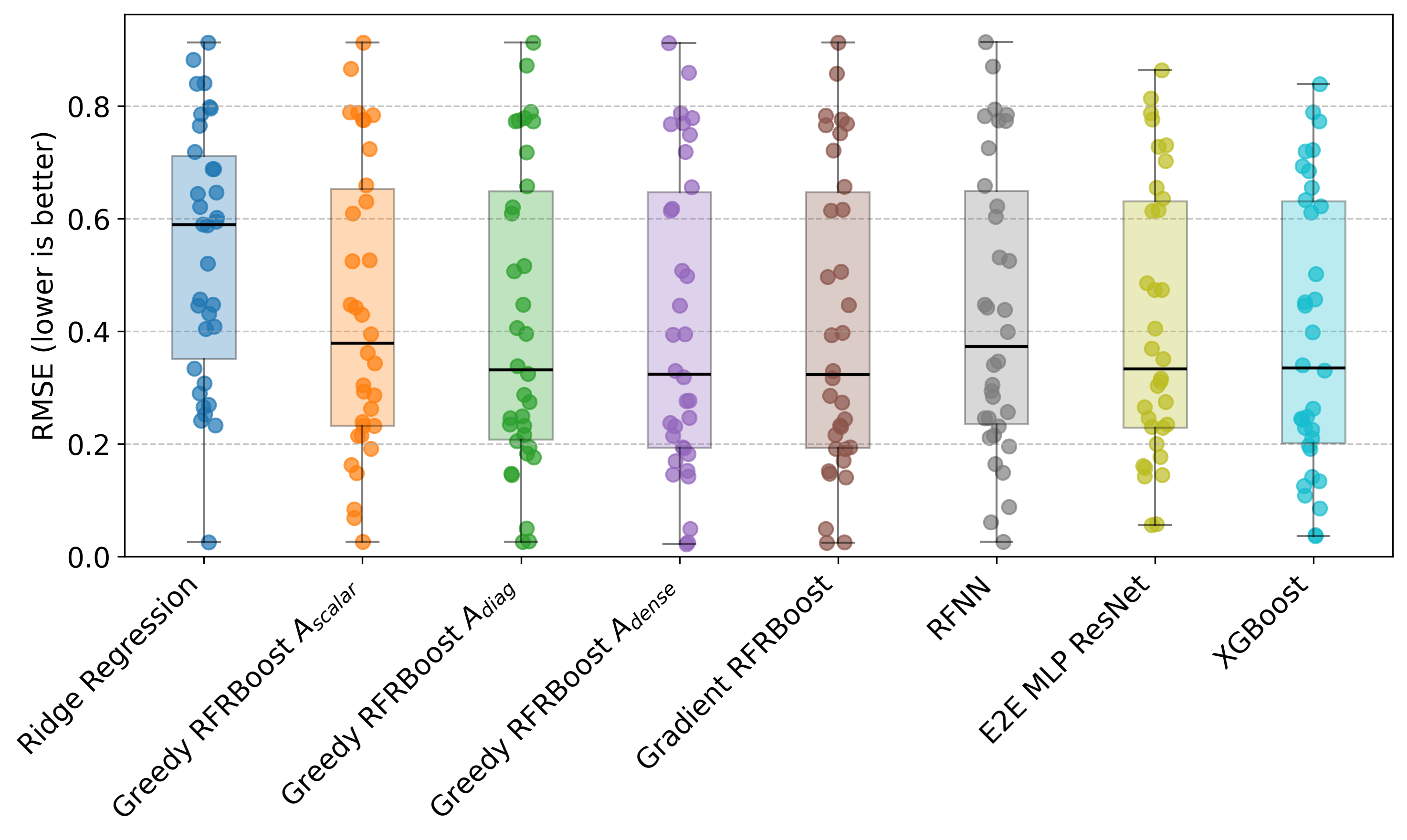}
    \caption{Scatter and box-and-whiskers plot of regression RMSE across all OpenML datasets. Each dot corresponds to a single dataset, and box plots summarize the distribution for each model.}
    \label{fig:scatter-box-whiskers-regression}
\end{figure}

\newpage
\subsection{Point Cloud Separation Task}\label{appendixPointCloudSep}

For the point cloud separation task, we use the same model structure as in the OpenML experiments, but fix the number of ResNet layers to 3 residual blocks. We additionally fix the activation to $\tanh$ and feature dimension to 512 for all models. Further architectural details and learning rate annealing strategies are consistent with those described in \cref{subsectionExperimentsOpenML} and \cref{appendixsubOpenML}. The SWIM scale was fixed at 1.0 for all random feature models. We use 5,000 randomly sampled data points for training and validation, and leave the remaining 5,000 for the final test set.

The hyperparameter configurations for RFRBoost, E2E MLP ResNet, RFNN, and logistic regression are detailed below in \cref{tab:hyperparameter_ranges_point_cloud}.

\begin{table}[h]
\caption{Hyperparameter ranges for the point cloud separation task.}
\label{tab:hyperparameter_ranges_point_cloud}
\vskip 0.15in
\begin{center}
\begin{small}
\begin{sc}
\begin{tabular}{lll}
\toprule
Model & Hyperparameter & Values \\
\midrule
RFRBoost & l2\_cls & [1, 1e-1, 1e-2, 1e-3, 1e-4, 1e-5] \\
 & l2\_ghat & 1e-4 \\
 & hidden\_dim & 2 \\
 & feature\_dim & 512 \\
 & boost\_lr & 1.0 \\
 & use\_batchnorm & True \\
\midrule
RFNN & l2\_cls & [1, 1e-1, 1e-2, 1e-3, 1e-4, 1e-5] \\
 & feature\_dim & 512 \\
\midrule
Logistic Regression & l2 & [1, 1e-1, 1e-2, 1e-3, 1e-4] \\
\midrule
E2E MLP ResNet & lr & [1e-1, 1e-2, 1e-3, 1e-4, 1e-5] \\
 & hidden\_dim & 2 \\
 & feature\_dim & 512 \\
 & n\_epochs & 30 \\
 & end\_lr\_factor & 0.01 \\
 & weight\_decay & 1e-5 \\
 & batch\_size & 128 \\
\bottomrule
\end{tabular}
\end{sc}
\end{small}
\end{center}
\vskip 0.5in
\end{table}

\newpage
\subsection{SWIM vs i.i.d. Ablation Study}\label{appendixAblation}
In this section, we present a small ablation study comparing SWIM random features to standard i.i.d. Gaussian random features across OpenML regression and classification tasks. The evaluation procedure is the same as presented in \cref{subsectionExperimentsOpenML}, the only change being that the SWIM scale parameter is replaced by an i.i.d. scale parameter, ranging logarithmically from 0.1 to 10 for all models. Pairwise comparisons for each model variant are visualized in \crefrange{fig:ablation-classification}{fig:ablation-regression}, where each point represents a dataset and compares the performance of SWIM and i.i.d. features. Summary statistics, including average RMSE/accuracy, number of dataset-wise wins, and average training time, are reported in \crefrange{tab:swim-iid-comparison-regression}{tab:swim-iid-comparison-classification}.

Overall, performance differences between SWIM and i.i.d. features are small but consistent. For regression, SWIM-based models achieve lower mean RMSE across all variants, and win on 22 out of 34 datasets in the best case (Greedy RFRBoost $A_{\text{dense}}$). For classification, accuracy is nearly identical across methods, though i.i.d. features slightly outperform SWIM in the total number of wins (32 out of 56). SWIM-based methods typically incur slightly higher training time, reflecting the additional structure in their initialization. These results suggest that while SWIM can offer marginal gains in accuracy or RMSE, the benefits must be balanced against training efficiency.

\begin{table}[h]
\caption{Comparison of SWIM vs i.i.d. feature initialization on OpenML regression tasks.}
\label{tab:swim-iid-comparison-regression}
\vskip 0.15in
\begin{center}
\begin{small}
\begin{sc}
\begin{tabular}{lc|ccc}
\toprule
Model & Variant & Mean RMSE & Wins & Fit Time (s) \\
\midrule
RFNN & SWIM & 0.434 & 20 & 0.053 \\
& i.i.d. & 0.441 & 14 & 0.029 \\
\midrule
Gradient RFRBoost & SWIM & 0.408 & 18 & 1.688 \\
& i.i.d. & 0.415 & 16 & 1.816 \\
\midrule
Greedy RFRBoost $A_{dense}$ & SWIM & 0.408 & 22 & 2.734 \\
& i.i.d. & 0.416 & 12 & 2.667 \\
\midrule
Greedy RFRBoost $A_{diag}$ & SWIM & 0.415 & 19 & 1.631 \\
& i.i.d. & 0.415 & 15 & 1.490 \\
\midrule
Greedy RFRBoost $A_{scalar}$ & SWIM & 0.434 & 22 & 1.024 \\
& i.i.d. & 0.443 & 12 & 0.720 \\
\bottomrule
\end{tabular}
\end{sc}
\end{small}
\end{center}
\vskip -0.1in
\end{table}

\begin{table}[h]
\caption{Comparison of SWIM vs i.i.d. feature initialization on OpenML classification tasks.}
\label{tab:swim-iid-comparison-classification}
\vskip 0.15in
\begin{center}
\begin{small}
\begin{sc}
\begin{tabular}{lc|ccc}
\toprule
Model & Variant & Mean Accuracy & Wins & Fit Time (s) \\
\midrule
RFNN & SWIM & 0.845 & 25 & 1.189 \\
& i.i.d. & 0.844 & 31 & 0.634 \\
\midrule
Gradient RFRBoost & SWIM & 0.853 & 32 & 2.519 \\
& i.i.d. & 0.850 & 24 & 2.350 \\
\bottomrule
\end{tabular}
\end{sc}
\end{small}
\end{center}
\vskip -0.1in
\end{table}

\begin{figure}[h]
    \centering
    \includegraphics[width=0.7\linewidth]{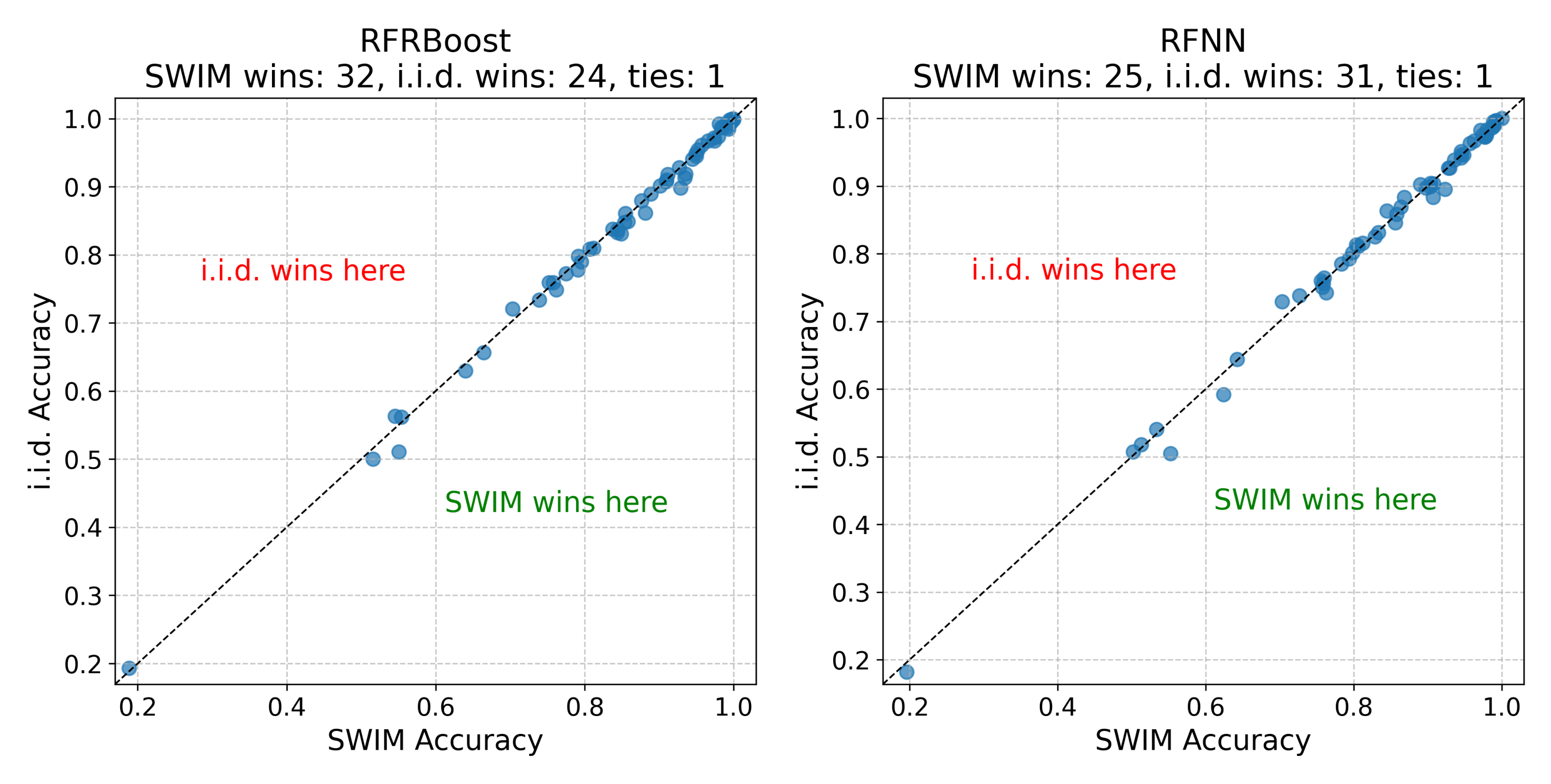}
    \caption{Comparison of classification accuracy using SWIM versus i.i.d. Gaussian random features across OpenML datasets. Each scatter plot compares accuracy on a per-dataset basis for different model variants. Points below the diagonal indicate better performance with SWIM.}
    \label{fig:ablation-classification}
\end{figure}

\begin{figure}[h]
    \centering
    \includegraphics[width=0.7\linewidth]{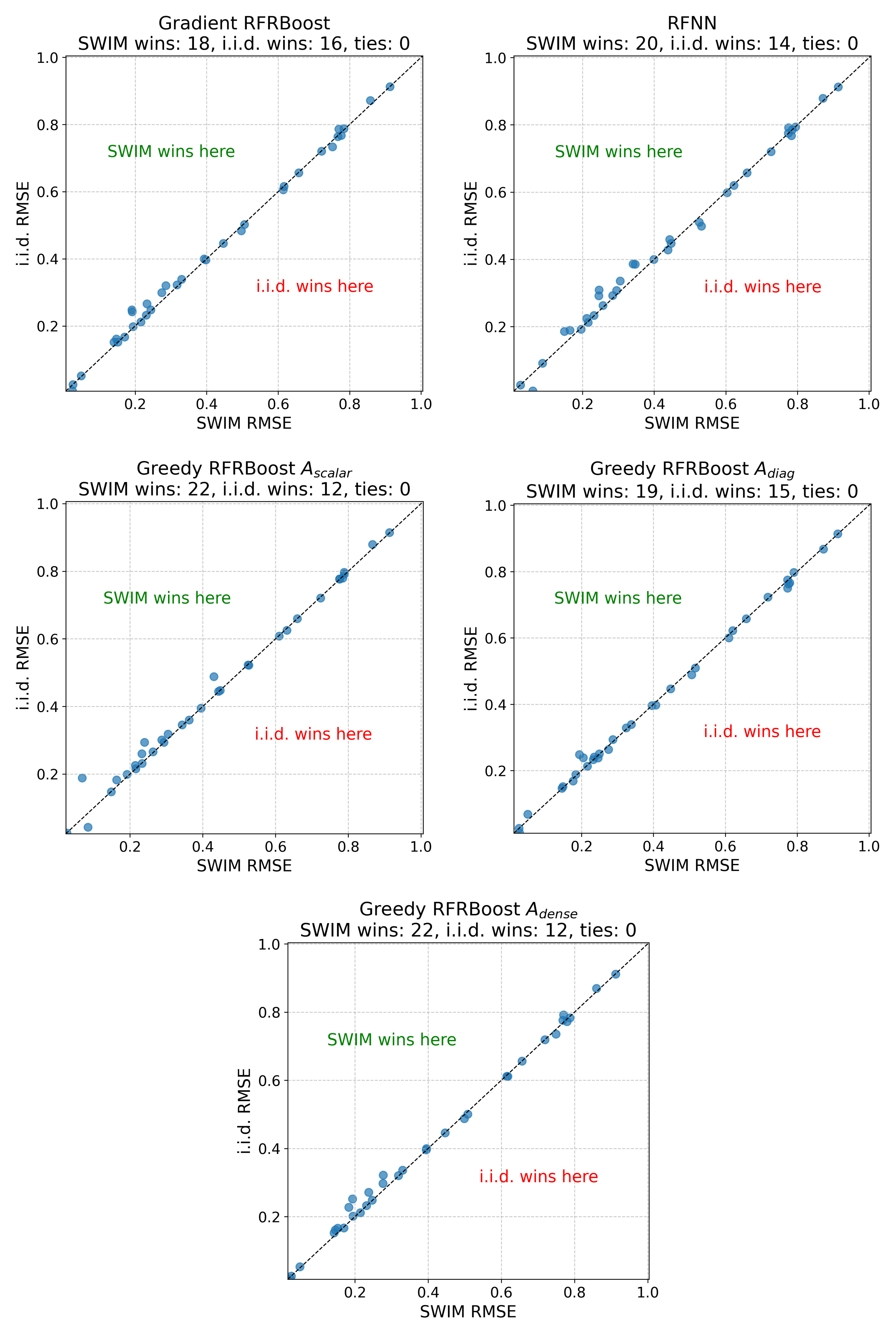}
    \caption{Comparison of RMSE using SWIM versus i.i.d. Gaussian random features across OpenML regression tasks. Each scatter plot compares RMSE per dataset for different model variants. Points above the diagonal indicate better performance with SWIM.}
    \label{fig:ablation-regression}
\end{figure}

\clearpage
\subsection{Supplementary Larger-Scale Experiments}\label{appendixLargerScaleExperiments}

To further assess the scalability and performance of RFRBoost relative to baseline models, we conducted experiments on four larger datasets: two of the largest from the OpenML Benchmark suite used in this work (each with approximately 100k samples --- one classification: ID 23517; one regression: ID 44975), and two widely-used benchmark datasets with approximately 500k samples (one classification: CoverType \cite{covertype_31}; one regression: YearPredictionMSD (YPMSD) \cite{year_prediction_msd_203}).

For these larger-scale evaluations, we employed a hold-out test set strategy. Hyperparameters for each model were selected via a grid search performed on a 20\% validation split of the training data. The models considered are consistent with those in \cref{subsectionExperimentsOpenML}: E2E MLP ResNet, Gradient RFRBoost, Logistic/Ridge Regression, RFNN, and XGBoost. To analyze performance trends with increasing data availability, models were trained on exponentially increasing subsets of the training data, starting from 2048 samples ($2^{11}$) up to the maximum available training size for each dataset.

For the OpenML datasets, the test sets were defined as 33\% (ID 23517, classification) and 11\% (ID 44975, regression) of the total instances, resulting in a maximum training set size of approximately 64,000 instances ($2^{16}$). For YPMSD (regression), we adhered to the designated split: the first 463,715 examples for training and the subsequent 51,630 examples for testing. For CoverType (classification), we used the standard train-test split provided by the popular Python machine learning library Scikit-learn \cite{2011ScikitLearn}, yielding 464,809 training and 116,203 test instances.

The hyperparameter grids used for each model in these experiments are detailed in \cref{tab:hyperparameter_cvgrid_larger_experiments}. To account for variability, each experiment was repeated 5 times with different random seeds for each model. All experiments were carried out on a single NVIDIA RTX 6000 (Turing architecture) GPU.

\begin{table}[h]
\caption{Hyperparameter grid for the larger-scale experiments.}
\label{tab:hyperparameter_cvgrid_larger_experiments}
\vskip 0.15in
\begin{center}
\begin{small}
\begin{sc}
\begin{tabular}{lll}
\toprule
Model & Hyperparameter & Values \\
\midrule
RFRBoost & l2\_cls & [1e-1, 1e-2, 1e-3, 1e-4, 1e-5] \\
 & n\_layers & [1, 3, 6] \\
 & feature\_dim & 512 \\
 & boost\_lr & 1.0 \\
\midrule
RFNN & l2 & [1e-1, 1e-2, 1e-3, 1e-4, 1e-5] \\
 & feature\_dim & 512 \\
\midrule
Logistic/Ridge Regression & l2 & [1e-1, 1e-2, 1e-3, 1e-4] \\
\midrule
XGBoost & lr & [0.1, 0.033, 0.01] \\
 & n\_estimators & [250, 500, 1000] \\
 & max\_depth & [1, 3, 6, 10] \\
 & lambda & 1.0 \\
\midrule
E2E MLP ResNet & lr & [1e-1, 1e-2, 1e-3] \\
 & n\_epochs & [10, 20, 30] \\
 & feature\_dim & 512 \\
 & end\_lr\_factor & 0.01 \\
 & weight\_decay & 1e-5 \\
 & batch\_size & 256 \\
\bottomrule
\end{tabular}
\end{sc}
\end{small}
\end{center}
\vskip 0.1in
\end{table}

The model training times and predictive performance (RMSE for regression, accuracy for classification) as a function of training set size are presented in \cref{fig:largescale-time} and \cref{fig:largescale-performance}, respectively. 
Regarding training times shown in \cref{fig:largescale-time}, Ridge/Logistic Regression and RFNNs consistently exhibit the fastest training, as expected. RFRBoost consistently trains faster than both XGBoost and the E2E MLP ResNets across all datasets. Notably, on the OpenML ID 23517 dataset, RFRBoost trains faster than the single-layer RFNN. This may be attributed to RFNNs fitting a potentially very wide random feature layer directly to the output targets (logits or regression values), which can be computationally intensive. In contrast, RFRBoost constructs a comparatively shallow representation layer-wise with random feature residual blocks, where only the final, potentially lower-dimensional, representation is mapped to the output targets. In some scenarios, this may lead to lower computational times.

Considering the predictive performance shown in \cref{fig:largescale-performance}, RFRBoost consistently and significantly outperforms the baseline single-layer RFNN across all datasets and training sizes, demonstrating its effectiveness in using depth for random feature models to increase performance. This improvement is particularly pronounced on the larger CoverType and YPMSD datasets, underscoring the benefit of RFRBoost's deep, layer-wise construction. However, in the larger data regimes E2E trained MLP ResNets and XGBoost generally achieve superior predictive performance. The OpenML ID 23517 dataset stands out, however, as RFRBoost and Logistic Regression not only exhibit lower variance but also achieve the best overall performance. Furthermore, we find that XGBoost significantly underperforms on OpenML ID 44975 compared to all other baseline models. Given RFRBoost's strong performance relative to its computational cost and its ability to construct meaningful deep representations, exploring its use as an initialization strategy for subsequent fine-tuning of end-to-end trained networks presents a promising avenue for future research, potentially further enhancing the performance of E2E models.

\vskip 0.2in

\begin{figure}[h]
    \centering
    \includegraphics[width=0.95\linewidth]{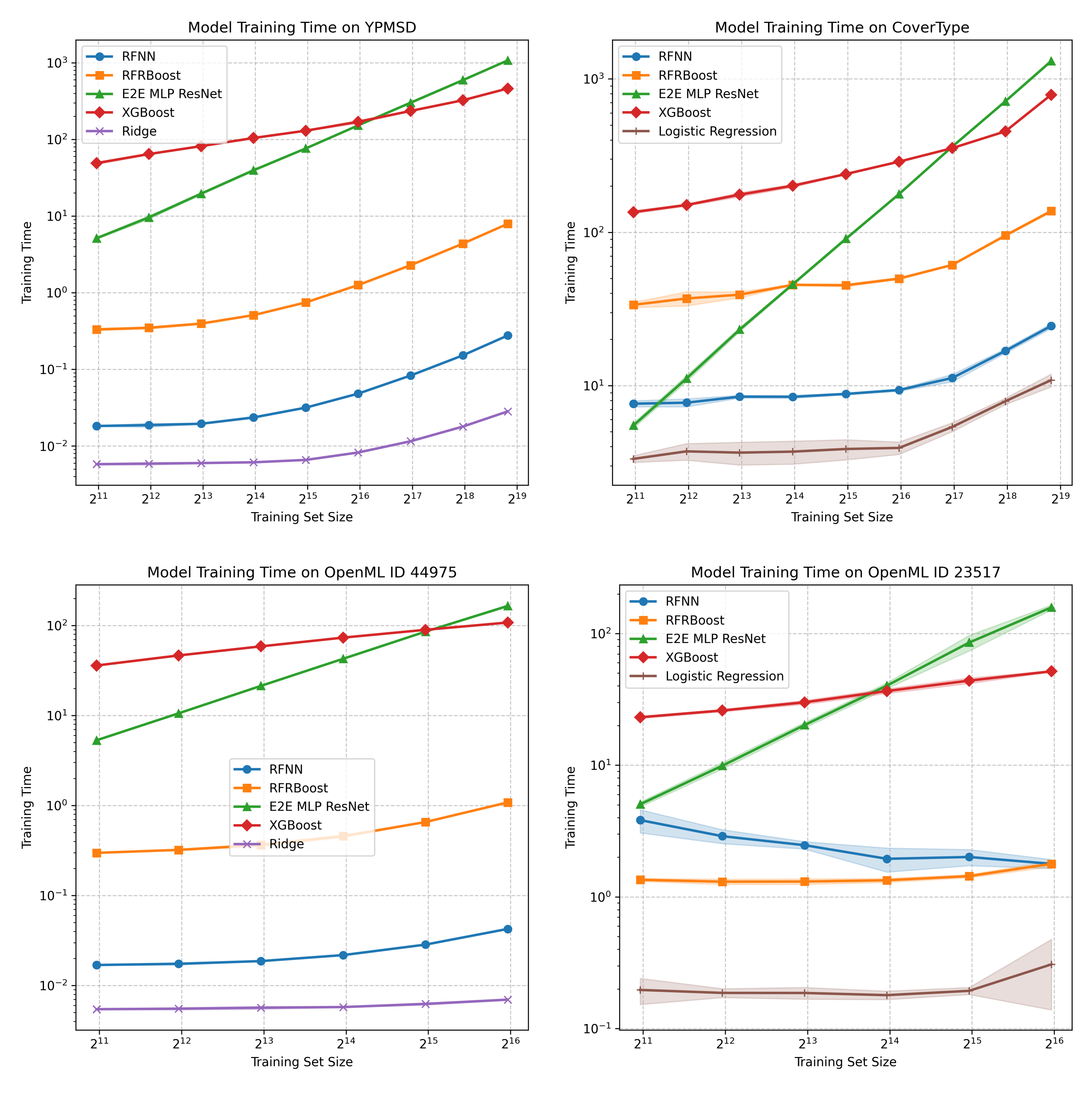} 
    \caption{Model training time (seconds, log scale) versus training set size on larger datasets. Lines represent the mean training time over 5 runs, and shaded areas indicate 95\% confidence intervals.}
    \label{fig:largescale-time}
\end{figure}

\begin{figure}[h]
    \centering
    \includegraphics[width=0.95\linewidth]{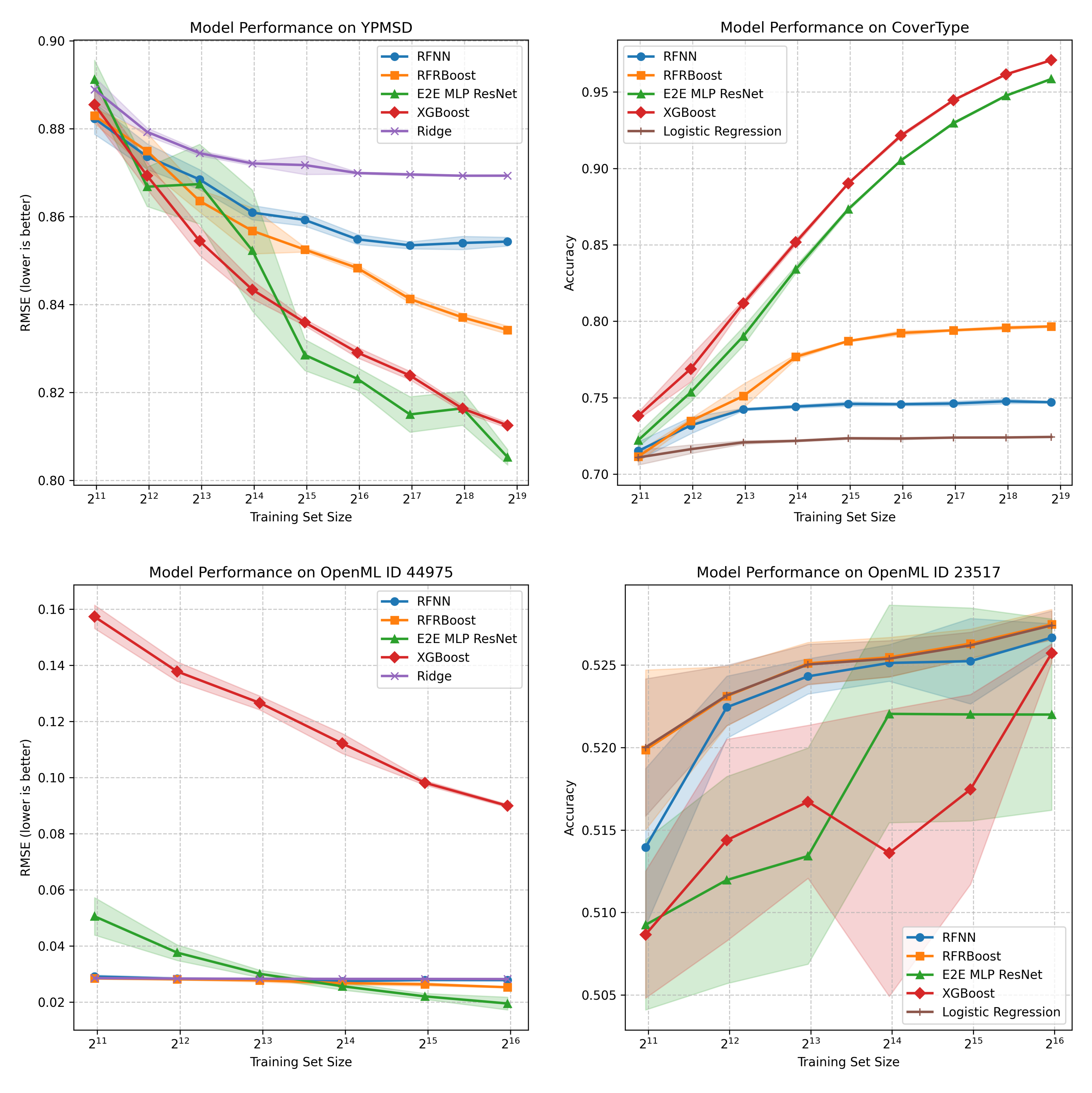} 
    \caption{Model predictive performance (RMSE for regression, accuracy for classification) versus training set size on larger datasets. Lines represent the mean performance over 5 runs, and shaded areas indicate 95\% confidence intervals.}
    \label{fig:largescale-performance}
\end{figure}


\end{document}